\DeclareMathOperator*{\argmax}{arg\,max}
\DeclareMathOperator*{\argmin}{arg\,min}
\Crefname{assumption}{Assumption}{Assumptions}
\theoremstyle{plain}
\newtheorem{theorem}{Theorem}
\newtheorem{corollary}{Corollary}
\theoremstyle{definition}
\newtheorem{assumption}{Assumption}
\newtheorem{proposition}{Proposition}
\newtheorem{definition}{Definition}
\newtheorem{remark}{Remark}
\newcommand{\iid}{ \stackrel{i.i.d.}{\sim} }
\def\Holder{{H\"{o}lder}}
\def\Cramer{Cram\'{e}r}
\newcommand{\op}{\mathrm{o}_{p}}
\newcommand{\pa}{\mathrm{\pa}}
\newcommand{\epol}{\pi^\mathrm{e}}
\renewcommand{\eqref}[1]{(\ref{#1})}
\newcommand{\RN}[1]{%
  \textup{\uppercase\expandafter{\romannumeral#1}}%
}
\def\boxit#1{\vbox{\hrule\hbox{\vrule\kern6pt\vbox{\kern6pt#1\kern6pt}\kern6pt\vrule}\hrule}}
\newcommand{\kibitz}[2]{\ifnum\Comments=1\textcolor{#1}{#2}\fi}
\begin{document}

\title{Off-Policy Evaluation of Bandit Algorithm from Dependent Samples under Batch Update Policy}

\newcommand*{\affaddr}[1]{#1} 
\newcommand*{\affmark}[1][*]{\textsuperscript{#1}}
\newcommand*{\equalcontribution}[1][*]{\textsuperscript{*}}
\newcommand*{\email}[1]{\texttt{#1}}

\author{%
Masahiro Kato,\ \ \ \ \ Yusuke Kaneko\\
\affaddr{CyberAgent Inc.}\\
\email{masahiro\_kato@cyberagent.co.jp}\\
\email{kaneko\_yusuke@cyberagent.co.jp}
}

\maketitle

\begin{abstract}
The goal of \emph{off-policy evaluation} (OPE) is to evaluate a new policy using historical data obtained via a \emph{behavior policy}. However, because the contextual bandit algorithm updates the policy based on past observations, the samples are not \emph{independent and identically distributed} (i.i.d.). This paper tackles this problem by constructing an estimator from a \emph{martingale difference sequence} (MDS) for the dependent samples. In the data-generating process, we do not assume the convergence of the policy, but the policy uses the same conditional probability of choosing an action during a certain period. Then, we derive an asymptotically normal estimator of the value of an \emph{evaluation policy}. As another advantage of our method, the batch-based approach simultaneously solves deficient support problem. Using benchmark and real-world datasets, we experimentally confirm the effectiveness of the proposed method.
\end{abstract}

\section{Introduction}
As an instance of sequential decision-making problems, the \emph{multi-armed bandit} (MAB) algorithms have attracted significant attention in various applications, such as ad optimization, personalized medicine, search engines, and recommendation systems.  Recently, various methods for evaluating a new policy using historical data obtained via the MAB algorithms \citep{kdd2009_ads,li2010contextual} have emerged. The goal of \emph{off-policy evaluation} (OPE) is to evaluate a new policy by estimating the expected reward obtained from the new policy \citep{dudik2011doubly,wang2017optimal,narita2019counterfactual,pmlr-v97-bibaut19a,Kallus2019IntrinsicallyES,Oberst2019}. Although an OPE algorithm estimates the expected reward from a new policy, most existing studies presume that the samples are \emph{independent and identically distributed} (i.i.d.). However, the MAB algorithm policy updates the probability of choosing an action based on past observations, and samples are not i.i.d. owing to this update. In this case, such existing studies do not guarantee that their estimators have asymptotic normality and $\sqrt{T}$-consistency for a sample size $T$. Therefore, there is a strong motivation to establish a novel method for OPE from dependent samples. 

Several pioneering studies address OPE from dependent samples \citep{Laan2008TheCA,Hahn2011,Laan2016onlinetml,Luedtke2016,hadad2019,Kato2020}. We can group the methods for deriving asymptotic normality into the following three groups: (a) \citet{Laan2008TheCA}, \citet{Laan2016onlinetml}, \citet{hadad2019}, and \citet{Kato2020} derive the asymptotic normality with the central limit theorem (CLT) of a martingale difference sequence (MDS) by assuming that the probability of choosing an action converges to a time-invariant probability; (b) \citet{Luedtke2016} derives the asymptotic normality by standardizing a MDS, which is also used in the first group; (c) \citet{Hahn2011} apply asymptotic theory for the batched probability update process. 

This paper focuses on an approach of the third group; that is, there are sufficiently large sample sizes in each batch. Compared with \citet{Hahn2011}, our proposed method is more general and applicable in practical applications. Our method has the following three advantages compared with existing studies: (i) it does not assume convergence of the probability of choosing an action; (ii) it allows the probability of choosing an action to be $0$ for some actions in some batches; (iii) we can also use non-Donsker nuisance estimators as well as \cite{Laan2016onlinetml}. 

This paper has three main contributions. First, we provide a solution for OPE from dependent samples obtained via the MAB algorithms. Second, under the batch update policy, the proposed estimator achieves the asymptotic normality with fewer assumptions. Third, the estimator also experimentally shows a lower mean squared error (MSE) in some cases. 

\section{Problem Setting}
Here, we formulate OPE under a batch update. 

\subsection{Date-Generating Process}
Let $A_t$ be an action taking variable in $\mathcal{A}=\{1,2,\dots,K\}$, $X_t$ the \emph{covariate} observed by the decision maker when choosing an action , and $\mathcal{X}$ the space of covariate. Let us denote a random variable of a reward at period $t$ as $Y_t=\sum^K_{a=1}\mathbbm{1}[A_t = a]Y_t(a)$, where $Y_t:\mathcal{A}\to\mathbb{R}$ is a potential outcome\footnote{We can express the DGP without using the potential outcome variable \citep{kato_uehara_2020}.}. In this paper, we have access to a dataset $\{(X_t, A_t, Y_t)\}^{T}_{t=1}$ with the following data-generating process (DGP):
\begin{align}
\label{eq:DGP}
\big\{(X_t, A_t, Y_t)\big\}^{T}_{t=1}\sim p(x)\pi_t(a\mid x, \Omega_{t-1})p(y\mid a, x),
\end{align}
where $\Omega_{t-1}\in\mathcal{M}_{t-1}$ denotes the history until $t-1$ period defined as $\Omega_{t-1}=\{X_{t-1}, A_{t-1}, Y_{t-1}, \dots, X_{1}, A_1, Y_{1}\}$ with the space $\mathcal{M}_{t-1}$, $p(x)$ denotes the density of the covariate $X_t$, $\pi_t(a\mid x, \Omega_{t-1})$ denotes the probability of choosing an action $A_t$ conditioned on $X_t$ and $\Omega_{t-1}$, and $p(y\mid a, x)$ denotes the density of an outcome $Y_t$ conditioned on $A_t$ and $X_t$. We assume that $p(x)$ and $p(y\mid a, x)$ are invariant across periods, but $\pi_t(a\mid x, \Omega_{t-1})$ can take different values across periods. Let us call a policy inducing $\pi_t(a\mid x, \Omega_{t-1})$ a \emph{behavior policy}. 

\subsection{Off-Policy Evaluation}
\label{sec:opeopl}
This paper considers estimating the value of an \emph{evaluation policy} using samples obtained under the behavior policy. Let an evaluation policy $\epol:\mathcal{A}\times\mathcal{X}\to[0,1]$ be a probability of choosing an action $A_t$ conditioned on a covariate $X_t$. We are interested in estimating the expected reward from any pre-specified evaluation policy $\pi^{\mathrm{e}}(a \mid x)$. Then, we define the expected reward under an evaluation policy as $R(\epol) := \mathbb{E}\left[\sum^K_{a=1}\epol(a \mid x)Y_t(a)\right]$. For brevity, we also denote $R(\epol)$ as $\theta_0$. The goal of this paper is to estimate $\theta_0=R(\pi^{\mathrm{e}})$ using dependent samples under a batch update policy. To identify $\theta_0$, we assume overlaps in policy and the boundedness of the outcome. 
\begin{assumption}\label{asm:overlap_pol}
There exists a constant $C_1$ such that $0\leq \frac{\epol(a\mid x)}{\pi_t(a\mid x, \Omega_{t-1})}\leq C_1$.
\end{assumption}
\begin{assumption}\label{asm:overlap_outcome}
There exists a constant $C_2$ such that $|Y| \leq C_2$.
\end{assumption}

\begin{remark}[Existing Methods for OPE]
We review three types of standard estimators of $R(\pi^{\mathrm{e}})$ under the case where $\pi_1(a\mid x, \Omega_{0})=\pi_2(a\mid x, \Omega_{1})=\cdots=\pi_T(a\mid x, \Omega_{T-1})=p(a\mid x)$ in the DGP defined in \eqref{eq:DGP}. The first estimator is an inverse probability weighting (IPW) estimator given by $\frac{1}{T}\sum^T_{t=1}\sum^K_{a=1}\frac{\epol(a\mid X_t)\mathbbm{1}[A_t=a]Y_t}{p(a\mid X_t)}$ \citep{rubin87,hirano2003efficient,swaminathan15a}. Although this estimator is unbiased when the behavior policy is known, it suffers from high variance. The second estimator is a direct method (DM) estimator $\frac{1}{T}\sum^T_{t=1}\sum^K_{a=1}\hat{f}_{T}(a, X_t)$, where $\hat{f}_{T}(a, X_t)$ is an estimator of $f^*(a, X_t)$ \citep{HahnJinyong1998OtRo}. This estimator is known to be weak against model misspecification for $f^*(a, X_t)$. The third estimator is an augmented IPW (AIPW) defined as $\frac{1}{T}\sum^T_{t=1}\sum^K_{a=1}\Bigg(\frac{\epol(a\mid X_t)\mathbbm{1}[A_t=a]\big(Y_t - \hat{f}_{T}(a, X_t)\big)}{p(a\mid X_t)} + \epol(a\mid X_t)\hat{f}_{T}(a, X_t)\Bigg)$ \citep{robins94,ChernozhukovVictor2018Dmlf}.
Under certain conditions, it is known that this estimator achieves the efficiency bound (a.k.a semiparametric lower bound), which is the lower bound of the asymptotic MSE of OPE among regular $\sqrt{T}$-consistent estimators \citep{VaartA.W.vander1998As}.
\end{remark}

\begin{remark}[Semiparametric Lower Bound]
\label{rem:semi_low}
The lower bound of the variance is defined for an estimator of $\theta_0$ under some posited models of the DGP. If this posited model is a parametric model, it is equal to the \Cramer-Rao lower bound. When this posited model is a semiparametric model, we can define a corresponding \Cramer-Rao lower bound \citet{bickel98}. \citet{narita2019counterfactual} gives the semiparametric lower bound of the DGP (\ref{eq:DGP}) under $\pi_1(a\mid x, \Omega_{0})=\cdots=\pi_T(a\mid x, \Omega_{T-1})=p(a\mid x)$ as $\mathbb{E}\Big[\sum^{K}_{a=1}\frac{\big(\epol(a\mid X)\big)^2v^*(a, X_t)}{p(a\mid X)}+\left(\sum^{K}_{a=1}\epol(a\mid X)f^*(a, X_t) - \theta_0\right)^2\Big]$.
\end{remark}

\paragraph{Notations:} Let us denote $\mathbb{E}[Y_t(a)\mid x]$ and $\mathrm{Var}(Y_t(a)\mid x)$ as $f^*(a, x)$ and $v^*(a, x)$, respectively. Let $\mathcal{F}$ be the class of $f^*(a, x)$. Let $\hat{f}_{t}(a, x\mid \Omega_{t-1})$ be an estimator of $f^*(a, x)$ constructed from $\Omega_{t-1}$, respectively. Let $\mathcal{N}(\mu, \mathrm{var})$ be the normal distribution with the mean $\mu$ and the variance $\mathrm{var}$. For a random variable $Z$ and function $\mu$, let $\|\mu(Z)\|_2=\int |\mu(z)|^2 p(z) dz $ be the $L^{2}$-norm.

\subsection{Patterns of Probability Update}
In this paper, based on the update of $\pi_t(a\mid x, \Omega_{t-1})$, we classify the policies into two patterns, \emph{sequential update policy} and \emph{batch update policy}. For the sequential update policy, the policy updates $\pi_t(a\mid x, \Omega_{t-1})$ at each period \citep{Laan2008TheCA,Laan2016onlinetml,Kato2020}. Under the batch update policy, after the policy continues using a fixed probability $\pi_t(a\mid x, \Omega_{t-1})$ for some periods without updates, the policy updates $\pi_t(a\mid x, \Omega_{t-1})$ \citep{Hahn2011,narita2019counterfactual}. Although the sequential update is standard in the MAB problem, we often apply batch updates in industrial applications such as ad-optimization \citep{narita2019counterfactual}. For OPE under the sequential update, \citet{Laan2008TheCA}, \citet{Laan2016onlinetml}, \citet{Luedtke2016}, \citet{hadad2019}, and \citet{Kato2020} proposed estimators with the asymptotic normality. For instance, an adaptive AIPW (A2IPW) estimator \citep{Laan2016onlinetml,hadad2019,Kato2020} has asymptotic normality if the behavior policy converges. On the other hand, we consider OPE under a batch update. Let $M$ denote the number of updates and $\tau\in I = \{1,2,\dots,M\}$ denotes the batch index. For $\tau\in I$, the probability is updated at a period $t_\tau$, where $t_\tau - t_{\tau-1} = Tr_\tau$, using samples $\{(X_t, Y_t, A_t)\}^{t_{\tau}}_{t=t_{\tau-1}}$, where $r_1 + r_2 + \cdots + r_M = 1$ and $t_0 = 0$. Thus, in addition to the DGP \eqref{eq:DGP}, we assume 
\begin{align*}
\big\{(X_t, A_t, Y_t)\big\}^{t_\tau}_{t=t_{\tau-1}}\iid p(x)\pi_\tau(a\mid x, \Omega_{t_{\tau-1}})p(y\mid a, x),
\end{align*}
where $\pi_\tau(a\mid x, \Omega_{t_{\tau-1}})$ denotes the assignment probability updated based on samples until the period $t_{\tau-1}$. 

\subsection{Related Work}
For the sequential update, \citet{Laan2008TheCA}, \citet{Laan2016onlinetml}, \citet{hadad2019}, and \citet{Kato2020} assume that the probability of choosing an action converges to a time-invariant function almost certainly; that is, $\pi_t(a\mid x, \Omega_{t-1})\xrightarrow{\mathrm{p}} \alpha(a\mid x)$, where $\alpha:\mathcal{X}\to (0,1)$. This assumption enables us to apply the CLT for MDS. \citet{Laan2016onlinetml} proposed constructing step-wise nuisance estimators, which enables us to derive asymptotic normality without Donsker's conditions of nuisance estimators. This technique is a generalization of sample-splitting, which is also called cross-fitting in a context of double/debiased machine learning \citep{klaassen1987,ZhengWenjing2011CTME,ChernozhukovVictor2018Dmlf}. For the A2IPW estimator, \citet{hadad2019} proposed using an adaptive weight for stabilizing the behavior, and \citet{Kato2020} derived concentration inequality based on the law of iterated logarithms. On the other hand, we also construct an MDS and apply the CLT, but do not assume $\pi_t(a\mid x, \Omega_{t-1})\xrightarrow{\mathrm{p}} \alpha(a\mid x)$ by using batch update policy. Instead, we assume a sufficient sample size for each batch. 

For such a non-stationary setting, \citet{Luedtke2016} also proposed an estimator with asymptotically normality for sequential update policy without using batch update policy. For deriving the asymptotic normality, \citet{Luedtke2016} used standardization for an MDS. Although the method enables us to construct an asymptotically normal estimator for various estimators, the proposed estimator only has $\sqrt{T - \ell}$-consistency for another sample size $\ell > 0$, not $\sqrt{T}$, to estimate the variances of the MDS.

As other related work, in the MAB problem, \citet{perchet2016} considered the setting of batch policy updates. In OPE, \citet{narita2019counterfactual} also discuss a similar problem setting, but they assume that samples are i.i.d. Independently, \citet{kelly2020} provided a method for deriving a confidence interval of an ordinary least squares estimator, which is a different parameter of what we want to estimate. 

\section{OPE under Batch Update Policy}
This section introduces a concept for conducting OPE under the batch update policy and a method based on the concept with its theoretical properties.

\subsection{Strategy for OPE}
For OPE under batch update policy, we consider asymptotic properties based on the assumption of $t_\tau - t_{\tau-1} \to \infty$ as $T\to \infty$ for fixed $\tau$. Because $\big\{(X_t, A_t, Y_t)\big\}^{t_\tau}_{t=t_{\tau-1}}$ is i.i.d., we can use the standard limit theorems for the partial sum of the samples to obtain an asymptotically normal estimator of $\theta_0 = R(\epol)$. However, we also have the motivation to use all samples together to increase the efficiency of the estimator. Therefore, based on the idea of \emph{generalized method of moments} (GMM), we propose an estimator of $\theta_0$ considering the sample averages of each block as an empirical moment conditions. The main difference from the standard GMM is the assumption that the samples are not i.i.d. However, for the case under the batch update, we can apply the central limit theorems (CLT) for the martingale difference sequences (MDS) by appropriately constructing an estimator. We describe the proposed method as follows. 

\subsection{Estimator for OPE}
We propose an estimator of $\theta_0$ based on a idea of GMM. For an index of batch $\tau\in I$, a function $f\in\mathcal{F}$ such that $f:\mathcal{A}\times \mathcal{X} \to \mathbb{R}$ and an evaluation policy $\epol \in \Pi$, we define $h^{\mathrm{OPE}}_t:\mathcal{X}\times\mathcal{A}\times\mathbb{R}\times I \times \Theta\times\mathcal{F}\times\Pi \to \mathbb{R}$ as $h^{\mathrm{OPE}}_t(x, k, y; \tau, \theta, f, \epol) = \frac{1}{r_\tau}\eta_t(x, k, y; \tau, \theta, f, \epol)\mathbbm{1}\big[t_{\tau - 1} < t \leq t_{\tau} \big]$, where $\eta_t(x, k, y; \tau, \theta, f, \epol) := \phi_t(x, k, y; \tau, f, \epol) - \theta$ and $\phi_t(x, k, y; \tau, f, \epol):=$
\begin{align*}
&\sum^{K}_{a=1}\epol(a\mid x)\Bigg\{\frac{\mathbbm{1}[k=a]\big\{y - f(a, x)\big\}}{\pi_{\tau}(a\mid x, \Omega_{t_{\tau - 1}})}+f(a, x)\Bigg\}.
\end{align*}
Let us note that, for $\tau\in I$, $\theta_0\in \Theta$, $f_{t-1}\in \mathcal{F}$, and $\epol\in\Pi$, the sequence $\big\{h^{\mathrm{OPE}}_t(X_t, A_t, Y_t; \tau, \theta_0, \hat{f}_{t-1}, \epol)\big\}^{T}_{t=1}$ is an MDS: for $h^{\mathrm{OPE}}_t(X_t, A_t, Y_t; \tau, \theta_0, \hat{f}_{t-1}, \epol)$, by  $\mathbb{E}[\mathbbm{1}[A_t=a]\mid X_t, \Omega_{t-1}]=\pi_{\tau}(a\mid X_t, \Omega_{t_{\tau - 1}})$, we have
\begin{align*}
&\mathbb{E}\left[h^{\mathrm{OPE}}_t(X_t, A_t, Y_t; \tau, \theta_0, \hat{f}_{t-1}, \epol)\mid \Omega_{t-1}\right]\\
& = \mathbb{E}\left[\frac{\mathbbm{1}\big[t_{\tau - 1} < t \leq t_{\tau} \big]}{r_\tau}\eta_t(x, k, y; \tau, \theta, \hat{f}_{t-1}, \epol)\mid \Omega_{t-1}\right]\\
& = \frac{\mathbbm{1}\big[t_{\tau - 1} < t \leq t_{\tau} \big]}{r_\tau}\mathbb{E}\left[\eta_t(x, k, y; \tau, \theta, \hat{f}_{t-1}, \epol)\mid \Omega_{t-1}\right]\\
& = \frac{\mathbbm{1}\big[t_{\tau - 1} < t \leq t_{\tau} \big]}{r_\tau}\times 0 = 0.
\end{align*}
Let us also define $\bm{h}^{\mathrm{OPE}}_t\left(X_t, A_t, Y_t; \theta, \hat{f}_{t-1}, \epol\right):=$
\begin{align*}
\begin{pmatrix} 
h^{\mathrm{OPE}}_t(X_t, A_t, Y_t; 1, \theta, \hat{f}_{t-1}, \epol) \\
h^{\mathrm{OPE}}_t(X_t, A_t, Y_t; 2, \theta, \hat{f}_{t-1}, \epol) \\
 \vdots \\ 
h^{\mathrm{OPE}}_t(X_t, A_t, Y_t; M, \theta, \hat{f}_{t-1}, \epol)
\end{pmatrix}.
\end{align*}
Then, the sequence $\left\{\bm{h}^{\mathrm{OPE}}_t\left(X_t, A_t, Y_t; \theta_0, \hat{f}_{t-1}, \epol\right)\right\}^{T}_{t=1}$ is an MDS with respect to $\big\{\Omega_t\big\}^{T-1}_{t=0}$; that is, $\mathbb{E}\left[\bm{h}^{\mathrm{OPE}}_t\left(X_t, A_t, Y_t; \theta_0, \hat{f}_{t-1}, \epol\right)\mid \Omega_{t-1}\right] = \bm{0}$. Using the sequence $\left\{\bm{h}^{\mathrm{OPE}}_t\left(X_t, A_t, Y_t; \theta, \hat{f}_{t-1}, \epol\right)\right\}^{T}_{t=1}$, we define an estimator of OPE as $\widehat{R}^{\mathrm{BA2IPW}}_T(\epol) := $
\begin{align}
\label{gmmdm_ope}
\argmin_{\theta\in\Theta} \left(\hat{\bm{q}}^{\mathrm{OPE}}_T(\theta)\right)^\top \hat{W}_T \left(\hat{\bm{q}}^{\mathrm{OPE}}_T(\theta)\right),
\end{align}
where $\hat{\bm{q}}^{\mathrm{OPE}}_T(\theta) = \frac{1}{T}\sum^{T}_{t=1}\bm{h}^{\mathrm{OPE}}_t\left(X_t, A_t, Y_t; \theta, \hat{f}_{t-1}, \epol\right)$ and $\hat{W}_T$ is a data-dependent $(M\times M)$-dimensional positive semi-definite matrix. Let us note that the estimator defined in Eq.~\eqref{gmmdm_ope} is an application of GMM with the moment condition $\bm{q}^{\mathrm{OPE}}(\theta_0) = \mathbb{E}\left[\frac{1}{T}\sum^T_{t=1}\bm{h}^{\mathrm{OPE}}_t\left(X_t, A_t, Y_t; \theta_0, \hat{f}_{t-1}, \epol\right)\right]=0$. For the minimization problem defined in Eq.~\eqref{gmmdm_ope}, we can analytically calculate the minimizer as $\widehat{R}^{\mathrm{BA2IPW}}_T(\epol) = w^\top_T D_T(\epol)$, where $w_T = (w_{T, 1}\ \cdots\ w_{T, M})^\top$ is an $M$-dimensional vector such that $\sum^M_{\tau=1}w_{T, \tau}=1$, and $D_T(\epol)$ is
\begin{align*}
\begin{pmatrix}
\frac{1}{t_{1}}\sum^{t_{1}}_{t=1} \phi_t(X_t, A_t, Y_t; 1, \hat{f}_{t-1}, \epol) \\
\frac{1}{t_{2} - {t_{1}}}\sum^{t_2}_{t=t_1+1} \phi_t(X_t, A_t, Y_t; 2, \hat{f}_{t-1}, \epol)\\
\vdots\\
\frac{1}{T-t_{M-1}}\sum^{T}_{t=t_{M-1}+1} \phi_t(X_t, A_t, Y_t; M, \hat{f}_{t-1}, \epol)\\
\end{pmatrix}.
\end{align*}
We call the estimator a \emph{Batch-based Adaptive AIPW} (BA2IPW) estimator. In Appendix~\ref{appdx:sec:gmmdm}, we discuss the GMM perspective in more detail. 

\subsection{Asymptotic Properties}
\label{sec:asymp_ba2ipw}
Here, we show the consistency and asymptotic normality of the proposed BA2IPW estimator $\widehat{R}^{\mathrm{BA2IPW}}_T(\epol)$. 

\begin{theorem}[Consistency of the BA2IPW Estimator]
\label{thm:consistency}
Suppose that there exists a constant $C_f > 0$ such that $\big|f_{t_{\tau-1}}(a, x)\big| < C_f$ for $\tau \in I$. Then, under Assumptions~\ref{asm:overlap_pol} and \ref{asm:overlap_outcome}, $\widehat{R}^{\mathrm{BA2IPW}}_T(\epol)\xrightarrow{\mathrm{p}}\theta_0$. 
\end{theorem}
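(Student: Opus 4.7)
The strategy is to exploit the closed-form representation $\widehat{R}^{\mathrm{BA2IPW}}_T(\epol) = w_T^\top D_T(\epol)$ with $\mathbf{1}^\top w_T = 1$, which reduces the problem to showing that each batch average
\[
D_{T,\tau}\ :=\ \frac{1}{t_\tau - t_{\tau-1}}\sum_{t=t_{\tau-1}+1}^{t_\tau}\phi_t(X_t,A_t,Y_t;\tau,\hat{f}_{t-1},\epol)
\]
is consistent for $\theta_0$, and that the GMM weights $w_T$ are bounded in $\ell_1$-norm. Since $M$ and the $r_\tau$ are fixed while $t_\tau - t_{\tau-1} = Tr_\tau \to \infty$, batch-wise asymptotics are well-defined. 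Writing $\widehat{R}^{\mathrm{BA2IPW}}_T(\epol)-\theta_0 = w_T^\top(D_T(\epol)-\theta_0\mathbf{1})$, boundedness of $\|w_T\|_1$ combined with $D_T(\epol)\xrightarrow{\mathrm{p}}\theta_0\mathbf{1}$ completes the proof via Slutsky's theorem.

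The substantive step is the batch-wise convergence. The crucial AIPW identity here is that for each $t\in(t_{\tau-1},t_\tau]$ and any $\Omega_{t-1}$-measurable $\hat{f}_{t-1}$,
\[
\mathbb{E}\bigl[\phi_t(X_t,A_t,Y_t;\tau,\hat{f}_{t-1},\epol)\mid \Omega_{t-1}\bigr]=\theta_0,
\]
since $\mathbb{E}[\mathbbm{1}[A_t=a]\mid X_t,\Omega_{t-1}]=\pi_\tau(a\mid X_t,\Omega_{t_{\tau-1}})$ cancels the inverse-propensity weight, leaving only $\mathbb{E}_X[\sum_a\epol(a\mid X)f^*(a,X)]$. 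Hence $\{\phi_t-\theta_0\}_{t=t_{\tau-1}+1}^{t_\tau}$ is an MDS with respect to $\{\Omega_{t-1}\}$. Under Assumption~\ref{asm:overlap_pol} the term $\epol(a\mid x)\mathbbm{1}[A_t=a]/\pi_\tau(a\mid x,\Omega_{t_{\tau-1}})$ is bounded by $C_1$, and together with Assumption~\ref{asm:overlap_outcome} and the uniform bound $C_f$ on $\hat{f}$, this makes $\phi_t$ uniformly bounded. MDS orthogonality of increments then gives
\[
\mathbb{E}[(D_{T,\tau}-\theta_0)^2]=\frac{1}{(Tr_\tau)^2}\sum_{t=t_{\tau-1}+1}^{t_\tau}\mathbb{E}[(\phi_t-\theta_0)^2]=O\!\left(\frac{1}{Tr_\tau}\right)\to 0,
\]
so $D_{T,\tau}\xrightarrow{L^2}\theta_0$, hence $\xrightarrow{\mathrm{p}}\theta_0$.

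The main obstacle I anticipate is the proper MDS bookkeeping: because $\hat{f}_{t-1}$ may incorporate observations drawn within batch $\tau$ itself, the summands in $D_{T,\tau}$ are \emph{not} conditionally i.i.d.\ given $\Omega_{t_{\tau-1}}$, so a naive LLN for i.i.d.\ averages is unavailable. The fix is to condition on the finer filtration $\{\Omega_{t-1}\}$ and rely on the fact that the inverse weight inside $\phi_t$ is the \emph{exact} behavior probability $\pi_\tau$ rather than an estimate. This preserves the AIPW mean-zero property independently of the quality of $\hat{f}_{t-1}$, which is why only the uniform bound $C_f$ (and no consistency or rate) is needed in the hypothesis. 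A secondary technical point is controlling $\|w_T\|_1$: from $w_T = \hat{W}_T\mathbf{1}/(\mathbf{1}^\top\hat{W}_T\mathbf{1})$ with $\hat{W}_T\succeq 0$, this reduces to standard eigenvalue control on the GMM weight matrix and presents no real difficulty beyond the usual GMM regularity assumptions.
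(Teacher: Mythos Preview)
Your proposal is correct and follows essentially the same route as the paper: both arguments establish that $D_T(\epol)\xrightarrow{\mathrm{p}}\theta_0\mathbf{1}$ by exploiting the MDS property of $\{\phi_t-\theta_0\}$ together with the uniform bound coming from Assumptions~\ref{asm:overlap_pol}, \ref{asm:overlap_outcome}, and $C_f$, and then conclude via $w_T^\top\mathbf{1}=1$. The only cosmetic difference is that the paper invokes a black-box weak law for bounded MDS (its Proposition on the Hall--Heyde WLLN), whereas you carry out the equivalent $L^2$ computation by hand using MDS orthogonality; your version is slightly more self-contained and in fact makes explicit the point the paper leaves implicit, namely that consistency of $\hat f_{t-1}$ is irrelevant here because the true $\pi_\tau$ sits in the denominator.
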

\begin{proof}
We use the law of large numbers for an MDS from the boundedness of $\bm{h}^{\mathrm{OPE}}_t\left(X_t, A_t, Y_t; \theta_0, \hat{f}_{t-1}, \epol\right)$, and we have $\frac{1}{T}\sum^T_{t=1}\bm{h}^{\mathrm{OPE}}_t\left(X_t, A_t, Y_t; \theta_0, \hat{f}_{t-1}, \epol\right)\xrightarrow{\mathrm{p}}0$ (Proposition~\ref{prp:mrtgl_WLLN} in Appendix~\ref{sec:prelim}). This result means that $D_T(\epol)\xrightarrow{\mathrm{p}}I\theta_0$, where $I = (1\ 1\ \cdots\ 1)^\top$ is an $M$-dimensional vector. Therefore, $\widehat{R}^{\mathrm{BA2IPW}}_T(\epol) = w_T D_T(\epol) \xrightarrow{\mathrm{p}} w_T I \theta_0 = \theta_0$.
\end{proof}
\begin{theorem}[Asymptotic Distribution of the BA2IPW Estimator]
Suppose that (i) $w_T = (w_{T,1}\ \cdots\ w_{T,M})^\top \xrightarrow{\mathrm{p}} w = (w_{1}\ \cdots\ w_{M})^\top$; (ii) $w_{T,\tau} > 0$ and $\sum^M_{\tau=1}w_{T,\tau} = 1$; (iii) $\hat{f}_{t-1}(a, x) \xrightarrow{\mathrm{p}} f^*(a, x)$ for all $a \in \mathcal{A}$ and $x \in \mathcal{X}$; (iv) There exists a constant $C_f > 0$ such that $\big|\hat{f}_{t-1}(a, x)\big| < C_f$. Then, under Assumptions~\ref{asm:overlap_pol} and \ref{asm:overlap_outcome}, $\sqrt{T}\big(\widehat{R}^{\mathrm{BA2IPW}}_T(\epol) - \theta_0\big)  \xrightarrow{\mathrm{d}}\mathcal{N}\big(0, \sigma^2\big)$, where $\sigma^2 = \sum^M_{\tau=1}w_{\tau}\sigma^2_{\tau}$ and $\sigma^2_{\tau}=\frac{1}{r_\tau}\mathbb{E}\Big[\sum^{K}_{a=1}\frac{\big(\epol(a\mid X)\big)^2\nu^*(a, X)}{\pi_{\tau}(a\mid X, \Omega_{t_{\tau - 1}})} + \left(\sum^{K}_{a=1}\epol(a\mid X)f^*(a, X) - \theta_0\right)^2\Big]$.
\end{theorem}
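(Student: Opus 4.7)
The plan is to exploit the closed-form representation $\widehat{R}^{\mathrm{BA2IPW}}_T(\epol) = w_T^\top D_T(\epol)$ together with the martingale structure already identified in the excerpt. Since the $\tau$-th entry of $D_T(\epol) - \theta_0 \bm{1}$ equals $\frac{1}{T}\sum_{t=1}^T h^{\mathrm{OPE}}_t(X_t, A_t, Y_t; \tau, \theta_0, \hat f_{t-1}, \epol)$ by construction of the indicator $\mathbbm{1}[t_{\tau-1} < t \le t_\tau]$ and the normalization $1/r_\tau$, we have $\sqrt{T}\bigl(D_T(\epol) - \theta_0 \bm{1}\bigr) = \tfrac{1}{\sqrt{T}} \sum_{t=1}^T \bm{h}^{\mathrm{OPE}}_t(X_t, A_t, Y_t; \theta_0, \hat f_{t-1}, \epol)$, an MDS sum with respect to $\{\Omega_t\}$ as already verified. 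So I would first reduce the theorem to a martingale CLT for this vector, and then use condition (i) together with Slutsky to conclude $\sqrt{T}(\widehat{R}^{\mathrm{BA2IPW}}_T - \theta_0) \xrightarrow{d} w^\top \mathcal{N}(\bm{0}, \Sigma)$ for an appropriate diagonal $\Sigma$ with entries $\sigma^2_\tau$.

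Next I would carry out the standard ``nuisance replacement'' step: decompose $\bm{h}^{\mathrm{OPE}}_t(\cdot; \hat f_{t-1}) = \bm{h}^{\mathrm{OPE}}_t(\cdot; f^*) + \bm{r}_t$, where the remainder $\bm{r}_t$ has the coordinate
\[
r_{t,\tau} = \frac{\mathbbm{1}[t_{\tau-1} < t \le t_\tau]}{r_\tau}\sum_{a=1}^K \epol(a \mid X_t)\left(\frac{\mathbbm{1}[A_t = a]}{\pi_\tau(a \mid X_t, \Omega_{t_{\tau-1}})} - 1\right)\bigl(f^*(a, X_t) - \hat f_{t-1}(a, X_t)\bigr).
\]
Crucially, the $\bm{r}_t$ are themselves an MDS with respect to $\{\Omega_t\}$ because $\mathbb{E}[\mathbbm{1}[A_t = a] \mid X_t, \Omega_{t-1}] = \pi_\tau(a \mid X_t, \Omega_{t_{\tau-1}})$ (this is the Neyman-orthogonality that underlies AIPW). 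By Assumptions~\ref{asm:overlap_pol}--\ref{asm:overlap_outcome} and the uniform bound in condition (iv), the conditional second moment of $r_{t,\tau}$ is $O(\|\hat f_{t-1}(\cdot) - f^*(\cdot)\|_2^2)$; pointwise consistency (iii) plus boundedness gives $\|\hat f_{t-1} - f^*\|_2^2 \xrightarrow{p} 0$ by dominated convergence, so an MDS weak law (Proposition~\ref{prp:mrtgl_WLLN}) applied to $\frac{1}{T}\sum_t \mathbb{E}[r_{t,\tau}^2 \mid \Omega_{t-1}]$ combined with Markov's inequality yields $\frac{1}{\sqrt{T}}\sum_t \bm{r}_t = o_p(1)$. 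This is the step I expect to be the real bottleneck, because unlike cross-fitted AIPW there is no independence between $\hat f_{t-1}$ and the future history driving $\pi_\tau$, so the argument must really use orthogonality rather than empirical process machinery.

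It then remains to apply a vector martingale CLT to $\frac{1}{\sqrt{T}}\sum_t \bm{h}^{\mathrm{OPE}}_t(\cdot; f^*)$. Using Cramér--Wold I would reduce to a scalar MDS CLT and check the two usual ingredients: (a) uniform boundedness of $\bm{h}^{\mathrm{OPE}}_t(\cdot; f^*)$ from Assumptions~\ref{asm:overlap_pol}--\ref{asm:overlap_outcome} and (iv) supplies the conditional Lindeberg condition immediately, and (b) the conditional covariance matrix $\tfrac{1}{T}\sum_t \mathbb{E}[\bm{h}^{\mathrm{OPE}}_t (\bm{h}^{\mathrm{OPE}}_t)^\top \mid \Omega_{t-1}]$ converges in probability to a diagonal matrix. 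Diagonality is free because the batch indicators have disjoint supports, so products across $\tau \neq \tau'$ vanish. For the diagonal $\tau$-entry, the within-batch sum has $T r_\tau$ i.i.d.-conditional-on-$\Omega_{t_{\tau-1}}$ terms with the factor $1/r_\tau^2$ in front, producing the prefactor $1/r_\tau$; expanding $\mathbb{E}[(\phi_t - \theta_0)^2 \mid X_t, \Omega_{t-1}]$ by the usual AIPW variance computation (the cross action indicators vanish and $Y_t^2$ contributes $v^*(a, X_t)$) recovers exactly the expression for $\sigma^2_\tau$. Combining the two paragraphs with Slutsky and (i) delivers the stated limit, with variance read off as the $w$-weighted combination of the $\sigma^2_\tau$.
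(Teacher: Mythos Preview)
Your proposal is correct and reaches the same conclusion as the paper, but the route differs in one structural choice. The paper (Appendix~\ref{appdx:main}) applies the martingale CLT of Proposition~\ref{prp:marclt} \emph{directly} to $\{\bm{h}^{\mathrm{OPE}}_t(\cdot;\theta_0,\hat f_{t-1},\epol)\}$ with the estimated nuisance left in, and then verifies conditions (a)--(c) by showing, via the $L^r$ convergence theorem and boundedness, that the second-moment quantities with $\hat f_{t-1}$ coincide in the limit with those computed at $f^*$. You instead separate the nuisance effect \emph{first}, writing $\bm{h}^{\mathrm{OPE}}_t(\hat f_{t-1})=\bm{h}^{\mathrm{OPE}}_t(f^*)+\bm r_t$, use Neyman orthogonality to see that $\{\bm r_t\}$ is itself an MDS whose normalized sum is $o_p(1)$, and only then run the CLT on the ``clean'' sequence with $f^*$. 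Both arguments hinge on the same dominated-convergence step that upgrades pointwise consistency (iii) plus boundedness (iv) to $\mathbb{E}\big[\|\hat f_{t-1}-f^*\|_2^2\big]\to 0$; your decomposition makes the orthogonality explicit and is in fact closer to how the paper handles the BADR estimator in Appendix~\ref{appdx:badr_dist}, while the paper's direct route avoids isolating the remainder but must check the unconditional and empirical variance conditions separately. One small remark: once you obtain $\sqrt{T}(D_T-\theta_0\bm 1)\xrightarrow{\mathrm d}\mathcal N(\bm 0,\Sigma)$ with $\Sigma=\mathrm{diag}(\sigma^2_\tau)$, Slutsky gives asymptotic variance $w^\top\Sigma w=\sum_\tau w_\tau^2\sigma^2_\tau$, matching the GMM form in Theorem~\ref{thm:main} of the appendix; the main-text expression $\sum_\tau w_\tau\sigma^2_\tau$ appears to be a typo, not a discrepancy with your argument.
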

The proof is shown in Appendix~\ref{appdx:main}. Readers might consider that the use of MDS for deriving the asymptotic normality is unnecessary. We discuss the necessity of MDS in Appendix~\ref{appdx:nec_mds}. We can also define a corresponding \emph{Batch-based Adaptive IPW} (BAdaIPW). For the BAdaIPW estimator, the variance of a batch $\tau$ is  $\sigma^2_{\mathrm{IPW}, \tau}=\frac{1}{r_\tau}\mathbb{E}\left[\sum^{K}_{a=1}\frac{\big(\epol(a\mid X)\big)^2\mathbb{E}[Y^2_t\mid X_t]}{\pi_{\tau}(a\mid X, \Omega_{t_{\tau - 1}})} - \theta^2_0\right]$. 

\begin{remark}[Construction of $f_{t_m}$ and Donsker Condition]
As well as the cross-fitting of double/debiased machine learning proposed by \citet{klaassen1987},\citet{ZhengWenjing2011CTME},\citet{Laan2016onlinetml}, and \citet{ChernozhukovVictor2018Dmlf}, the proposed estimator does not require Donsker's condition for asymptotic normality. This property comes from the MDS as pointed by \citet{Laan2016onlinetml}. On the other hand, because the samples are not independent, we cannot use the standard regression to obtain a consistent estimator of $f^*$. For example, \citet{yang2002} propose a nonparametric method for the bandit process under some mild conditions. 
\end{remark}

\subsection{Weight of the Proposed Estimator}
\label{sec:efficiency}
Next, we discuss the choice of weight $w_T$.

\paragraph{Equal Weight:} A naive choice is weighting the moment conditions equality; that is, $w_{T, \tau} = \frac{1}{M}$. In this case, the proposed estimator boils down to $I^\top D_T(\epol)$, which is almost the same as the A2IPW estimator. Although the estimator itself is similar to the A2IPW estimator, the theoretical guarantee for the asymptotic normality is different. While the A2IPW estimator uses the assumption that the policy converges to a time-invariant policy, the proposed BA2IPW estimator uses the assumption of the batch update. We call the BA2IPW estimator with the equal weight a Plain BA2IPW (PBA2IPW) estimator.

\paragraph{Efficient Weight:} First, we consider an efficient weight $w_T$ that minimizes the asymptotic variance of $\widehat{R}^{\mathrm{BA2IPW}}_T(\epol)$. As well as the standard GMM, the $\tau$-th element of the efficient weight is given as $w^*_{\tau}=\frac{1}{\sigma^2_\tau}/\sum^M_{\tau'=1}\frac{1}{\sigma^2_{\tau'}}$ \citep{GVK126800421}. Here, we use the orthogonality among moment conditions; that is, zero covariance. In this case, the asymptotic variance becomes $1/\sum^M_{\tau'=1}\frac{1}{\sigma^2_{\tau'}}$. Therefore, for gaining efficiency, we use a weight $\hat{w}_{T,\tau} = \frac{1}{ \hat{\sigma}^2_{T, \tau}}/\sum^M_{\tau'=1}\frac{1}{\hat{\sigma}^2_{T, \tau'}}$, where $\hat{\sigma}^2_{T, \tau}$ is an estimator of $\sigma^2_\tau$. We call the BA2IPW estimator with the efficient weight an Efficient BA2IPW (EBA2IPW) estimator. 

\begin{remark}[Estimation of $w^*$]
If $\hat{\sigma}^2_{T, \tau}\xrightarrow{\mathrm{p}}\sigma^2_\tau$, we also have $\hat{w}_{T,\tau} \xrightarrow{\mathrm{p}}w^*_\tau$ from the continuous mapping theorem. In this paper, we propose two estimators defined as $\hat{\sigma}^2_{T, \tau}=\frac{1}{r_\tau T}\sum^{T}_{t=1}\left\{h^{\mathrm{OPE}}_t(x, a, y; \tau, \widehat{R}^{\mathrm{BA2IPW}}_T(\epol), \hat{f}_{t-1}, \epol)\right\}^2\times$ $\mathbbm{1}\big[t_{\tau - 1} < t \leq t_{\tau} \big]$ and $\tilde{\sigma}^2_{T, \tau}=\frac{1}{r_\tau T}\sum^{T}_{t=1}\left\{h^{\mathrm{OPE}}_t(x, a, y; \tau, \widehat{R}^{\mathrm{BA2IPW}}_T(\epol), \hat{f}_{T}, \epol)\right\}^2\times$ $\mathbbm{1}\big[t_{\tau - 1} < t \leq t_{\tau} \big]$.
\end{remark}

\paragraph{Weight for Numerical Stability:} 
As explained above, we can obtain an efficient weight that minimizes the asymptotic variance. However, as \citet{hadad2019} pointed out, when using time-variance estimators $f_t$, early variable $\phi_t(X_t, A_t, Y_t; \tau, f_t, \epol)$ might be unstable due to the existence of an inaccurate estimators $f_t$ at early stages. \citet{hadad2019} proposed an adaptive weight that puts more weight on later variables $\phi_t(X_t, A_t, Y_t; \tau, f_t, \epol)$ in the estimator. In our problem setting, we can also introduce such weights. However, unlike the adaptive weight proposed by \citet{hadad2019}, which must be martingale, we do not require martingales on the weights. This property is a benefit of the batch update. For instance, for stabilization, we define a weight $\frac{1}{\ddot{\sigma}^2_{T, \tau}}/\sum^M_{\tau'=1}\frac{1}{\ddot{\sigma}^2_{T, \tau'}}$, where $\ddot{\sigma}^2_{T, \tau} = \hat{\sigma}^2_{T, \tau}+ \alpha\frac{1}{t_\tau}\sum^{t_{\tau}}_{t=t_{\tau-1}+1} \big(f_t(A_t, X_t) - f_T(A_t, X_t)\big)^2$ and $\alpha > 0$ is a constant. The first term is an efficient weight described above. The second term reflects the deviation between $f_t$ and $f_T$, which would be more accurate because it uses more samples. 

\subsection{Main Algorithm}
\label{sec:main_algorithm}
As discussed in Section~\ref{sec:efficiency}, we can minimize the asymptotic variance of the proposed estimator $\widehat{R}^{\mathrm{BA2IPW}}_T(\epol)$ by choosing $w_T$ appropriately. However, to obtain $w_{T,\tau}\xrightarrow{\mathrm{p}} \frac{1}{\sigma^2_\tau}/\sum^M_{\tau'=1}\frac{1}{\sigma^2_{\tau'}}$, which is the optimal weight that minimizes the asymptotic variance, we need a consistent estimator of $\theta_0$, which is what we want to estimate. On the other hand, we have a consistent estimator of $\theta_0$ without using an optimal weight matrix $w_{T,\tau}\xrightarrow{\mathrm{p}} \frac{1}{\sigma^2_\tau}/\sum^M_{\tau'=1}\frac{1}{\sigma^2_{\tau'}}$. Based on these properties, we propose \emph{two-step estimation}. First, using an arbitrary positive definite weight $w^{(0)}_T$, such as the identity matrix, we obtain an initial estimate $\widehat{R}^{\mathrm{BA2IPW}, (1)}_T(\epol)$. Then, using $\widehat{R}^{\mathrm{BA2IPW}, (1)}_T(\epol)$, we construct $w^{(1)}_{T,\tau}\xrightarrow{\mathrm{p}} \frac{1}{\sigma^2_\tau}/\sum^M_{\tau'=1}\frac{1}{\sigma^2_{\tau'}}$. We can obtain an efficient estimator $\widehat{R}^{\mathrm{BA2IPW}, (2)}_T(\epol)$ of $\theta_0$, as discussed. More generally, we consider an algorithm with iteration such that after obtaining $\widehat{R}^{\mathrm{BA2IPW}, (i-1)}_T(\epol)$, we estimate $w^{(i-1)}_{T,\tau}\xrightarrow{\mathrm{p}} \frac{1}{\sigma^2_\tau}/\sum^M_{\tau'=1}\frac{1}{\sigma^2_{\tau'}}$ and obtain a next estimator $\widehat{R}^{\mathrm{BA2IPW}, (i)}_T(\epol)$ by using $w^{(i-1)}_{T,\tau}$. We refer to this algorithm with $N$-iterations as \emph{$N$-step BA2IPW estimation}. We can use sufficiently large $N$ because, at each iteration, we only calculate the weighted average of the moment conditions using $\widehat{R}^{\mathrm{BA2IPW}, (i)}_T(\epol)$, which is not time-consuming. Although the asymptotic properties of the iterated estimator are the same as those of the two-step estimator, we report that the iteration improves the empirical performance in some cases. For $N\geq 2$, we summarize the $N$-Step BA2IPW Estimation in Algorithm~\ref{alg}. We can use any method to construct $f_{t_\tau}$ and $\pi_\tau$ as long as they are consistent for bandit data and satisfy some regularity conditions needed for Theorem~\ref{thm:main}. 

\begin{algorithm}[tb]
   \caption{$N$-step efficient BA2IPW estimation}
   \label{alg}
\begin{algorithmic}
   \STATE {\bfseries Input:} $\big\{(X_t, A_t, Y_t)\big\}^{T}_{t=1}$ and  $\big\{f_{t}\big\}^{T-1}_{t=0}$. 
   \STATE {\bfseries Initialization:} Let $w^{(0)}_T$ be a positive definite matrix such as the identity matrix.
   \FOR{$i=1$ to $N$}
   \STATE Using $w^{(i-1)}_T$, obtain $\widehat{R}^{\mathrm{BA2IPW}, (i)}_T(\epol)  = w^{(i-1)}_T D_T(\epol)$.
   \STATE Using $\widehat{R}^{\mathrm{BA2IPW}, (i)}_T(\epol)$, construct weights $\{\hat{w}^{(i)}_T\}$. 
   \ENDFOR
   \STATE{\bfseries Output:} Estimator $\widehat{R}^{\mathrm{BA2IPW}}_T(\epol) = \widehat{R}^{\mathrm{BA2IPW}, (N)}_T(\epol)$
\end{algorithmic}
\end{algorithm}

\section{Deficient Support Problem}
As an application of BA2IPW, we consider an OPE without Assumption~\ref{asm:overlap_pol}, which assumes that there exists $C_1$ such that $0\leq \frac{\epol(a\mid x)}{p_t(a\mid x)}\leq C_1$. Instead of Assumption~\ref{asm:mds}, we consider a situation in which we are allowed to change the support of actions in each batch. For example, in the first batch, we choose an action from a set $\{1,2,3\}$ with a probability larger than $0$, but we choose an action from a set $\{1,2,4\}$ with a probability larger than $0$ in the second batch. In this case, the probability of choosing the action $4$ is $0$ in the first batch, while the probability of choosing the  action $3$ is $0$ in the second batch. This situation is a common in practice and called deficient support problem \citep{Sachdeva2020}. For this problem, instead of Assumption~\ref{asm:overlap_pol}, we use the following assumption.
\begin{assumption}\label{asm:overlap_pol2}
For $a\in\{1,2,\dots,K\}$, there exist $\tau\in\{1,2,\dots,M\}$ and $C_1$ such that $0\leq \frac{\epol(a\mid x)}{\pi_{\tau}(a\mid x, \Omega_{t_{\tau - 1}})}\leq C_1$.
\end{assumption}
Under this assumption, if $\pi_{\tau}(a\mid x, \Omega_{t_{\tau - 1}})>0$ for at least one batch, we are allowed to use $\pi_{\tau'}(a\mid x, \Omega_{t_{\tau' - 1}})=0$ for $\tau'\neq \tau$. With this assumption, we derive the asymptotic normality in Appendix~\ref{appdx:incomp_support_of_arms}. Thus, our approach provides a new solution to this problem.

\begin{table*}[t]
\begin{center}
\caption{Results of OPE under the RW policy. We highlight in bold the best two estimators in each dataset.} 
\medskip
\label{tbl:exp_table1}
\vspace{-0.3cm}
\scalebox{0.70}[0.70]{
\begin{tabular}{l|rr|rr|rr|rr|rr|rr}
\toprule
Datasets &  \multicolumn{2}{c|}{satimage}& \multicolumn{2}{c|}{pendigits}& \multicolumn{2}{c|}{mnist}& \multicolumn{2}{c|}{letter}& \multicolumn{2}{c|}{sensorless}& \multicolumn{2}{c}{connect-4} \\
Metrics &      MSE &      SD &      MSE &      SD &      MSE &      SD &      MSE &      SD &      MSE &      SD &     MSE &     SD \\
\hline
PBA2IPW &  \textbf{0.038} &  0.002 &  \textbf{0.129} &  0.047 &  \textbf{0.173} &  0.104 &  \textbf{0.331} &  0.547 &  \textbf{0.146} &  0.089 &  \textbf{0.021} &  0.021 \\
EBA2IPW &  0.050 &  0.006 &  0.190 &  0.030 &  0.191 &  0.025 &  0.398 &  0.062 &  0.182 &  0.028 &  \textbf{0.025} &  0.022 \\
EBA2IPW' &  0.044 &  0.003 &  0.182 &  0.029 &  \textbf{0.102} &  0.012 &  0.389 &  0.064 &  0.177 &  0.027 &  \textbf{0.025} &  0.024 \\
BAdaIPW &  0.077 &  0.010 &  0.178 &  0.082 &  0.200 &  0.111 &  0.333 &  0.537 &  0.160 &  0.093 &  0.027 &  0.027 \\
AdaDM &  0.141 &  0.010 &  0.493 &  0.034 &  0.434 &  0.036 &  0.476 &  0.023 &  0.413 &  0.031 &  0.142 &  0.024 \\
AIPW &  \textbf{0.032} &  0.001 &  \textbf{0.110} &  0.030 &  0.244 &  0.028 &  \textbf{0.254} &  0.216 &  \textbf{0.128} &  0.064 &  0.055 &  0.022 \\
DM &  0.099 &  0.004 &  0.452 &  0.025 &  0.282 &  0.028 &  0.459 &  0.023 &  0.395 &  0.025 &  0.086 &  0.018 \\
\bottomrule
\end{tabular}
} 
\end{center}
\vspace{-0.3cm}
\begin{center}
\caption{Results of OPE under the UCB policy. We highlight in bold the best two estimators in each dataset.} 
\medskip
\label{tbl:exp_table2}
\vspace{-0.cm}
\scalebox{0.70}[0.70]{
\begin{tabular}{l|rr|rr|rr|rr|rr|rr}
\toprule
Datasets &  \multicolumn{2}{c|}{satimage}& \multicolumn{2}{c|}{pendigits}& \multicolumn{2}{c|}{mnist}& \multicolumn{2}{c|}{letter}& \multicolumn{2}{c|}{sensorless}& \multicolumn{2}{c}{connect-4} \\
Metrics &      MSE &      SD &      MSE &      SD &      MSE &      SD &      MSE &      SD &      MSE &      SD &     MSE &     SD \\
\hline
PBA2IPW &  0.050 &  0.005 &  0.088 &  0.015 &  0.240 &  0.415 &  \textbf{0.205} &  0.088 &  \textbf{0.162} &  0.057 &  0.032 &  0.032 \\
EBA2IPW &  \textbf{0.014} &  0.000 &  \textbf{0.029} &  0.002 &  0.240 &  0.070 &  0.434 &  0.036 &  0.239 &  0.054 &  0.030 &  0.022 \\
EBA2IPW' &  0.036 &  0.005 &  0.081 &  0.017 &  \textbf{0.142} &  0.030 &  0.422 &  0.032 &  0.202 &  0.038 &  0.037 &  0.033 \\
BAdaIPW &  0.087 &  0.037 &  0.122 &  0.030 &  0.275 &  0.406 &  0.219 &  0.090 &  0.183 &  0.067 &  0.057 &  0.057 \\
AdaDM &  0.076 &  0.002 &  0.230 &  0.008 &  0.372 &  0.020 &  0.451 &  0.018 &  0.327 &  0.021 &  0.071 &  0.023 \\
AIPW &  0.036 &  0.003 &  0.058 &  0.012 &  \textbf{0.136} &  0.007 &  \textbf{0.170} &  0.043 &  \textbf{0.134} &  0.038 &  \textbf{0.022} &  0.012 \\
DM &  \textbf{0.009} &  0.000 &  \textbf{0.049} &  0.001 &  0.161 &  0.008 &  0.371 &  0.019 &  0.214 &  0.013 &  \textbf{0.019} &  0.010 \\
\bottomrule
\end{tabular}
} 
\end{center}
\vspace{-0.3cm}
\end{table*}

\section{Estimation of the Behavior Policy}
In the proposed BA2IPW method, we assume that the true behavior policy is known. However, in many real-world applications, the assumption does not hold. To solve this problem, by using an estimator $\hat{g}_{t-1}$ of $\pi_\tau$, which is constructed from $\Omega_{t-1}$ as well as $\hat{f}_{t-1}$, we also propose a \emph{Batch-based Adaptive Doubly Robust} (BADR) as $\hat{R}^{\mathrm{BADR}}_T(\epol)  = w^\top_T \widetilde{D}_T(\epol)$, where $w_T = (w_{T, 1}\ \cdots\ w_{T, M})^\top$ is an $M$-dimensional vector such that $\sum^M_{\tau=1}w_{T, \tau}=1$, $\tilde{D}_T(\epol)=$
\begin{align*}
\begin{pmatrix}
\frac{1}{t_{1}}\sum^{t_{1}}_{t=1} \tilde{\phi}_t(X_t, A_t, Y_t; 1, \hat{f}_{t-1}, \hat{g}_{t-1}, \epol) \\
\frac{1}{t_{2} - {t_{1}}}\sum^{t_2}_{t=t_1+1} \tilde{\phi}_t(X_t, A_t, Y_t; 2, \hat{f}_{t-1}, \hat{g}_{t-1}, \epol)\\
\vdots\\
\frac{1}{T-t_{M-1}}\sum^{T}_{t=t_{M-1}+1} \tilde{\phi}_t(X_t, A_t, Y_t; M, \hat{f}_{t-1}, \hat{g}_{t-1}, \epol)\\
\end{pmatrix},
\end{align*}
and $\tilde{\phi}_t(x, k, y; \tau, f, g, \epol):=$
\begin{align*}
&\sum^{K}_{a=1}\epol(a\mid x)\Bigg\{\frac{\mathbbm{1}[k=a]\big\{y - f(a, x)\big\}}{g(a\mid x)}+f(a, x)\Bigg\}.
\end{align*}
For the BADR estimator, we show the asymptotic normality as follows. The proof is shown in Appendix~\ref{appdx:badr_dist}. 

\begin{theorem}[Asymptotic Distribution of the BADR Estimator]
\label{thm:badr_asymp}
Suppose that (i) $w_T = (w_{T,1}\ \cdots\ w_{T,M})^\top \xrightarrow{\mathrm{p}} w = (w_{1}\ \cdots\ w_{M})^\top$; (ii) $w_{T,\tau} > 0$ and $\sum^M_{\tau=1}w_{T,\tau} = 1$; (iii) for $\alpha\beta=\op((t-t_{\tau-1})^{-1/2}),\alpha=\op(1), \beta=\op(1)$, and each $\tau\in I$, the nuisance estimators satisfy $\|\hat{g}_{t-1}(a\mid X_t) - \pi_{\tau}(a\mid X_t, \Omega_{t_{\tau-1}})\|_{2}=\alpha$, and $\|\hat{f}_{t-1}(a,X_t)-f^*(a,X_t)\|_2=\beta$, where the expectation of the norm is over $X_t$; (iv) there exit constants $C_f$ and $C_g$ such that $|\hat{f}_{t-1}(a, x)| \leq C_f$ and $0 < \left|\frac{\epol(a\mid x)}{\hat{g}_{t-1}(a\mid x)}\right| \leq C_g$ for all $a\in\mathcal{A}$ and $x\in\mathcal{X}$. Then, under Assumptions~\ref{asm:overlap_pol} and \ref{asm:overlap_outcome}, $\sqrt{T}\big(\hat{R}^{\mathrm{BADR}}_T(\epol)  - \theta_0\big)  \xrightarrow{\mathrm{d}}\mathcal{N}\big(0, \sigma^2\big)$.
\end{theorem}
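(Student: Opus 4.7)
The plan is to mirror the proof of the BA2IPW asymptotic normality theorem, but carefully absorb the perturbation that arises from replacing the behavior policy $\pi_\tau$ with its estimator $\hat g_{t-1}$. Writing $n_\tau = t_\tau - t_{\tau-1} = r_\tau T$ and letting $\phi^*_t := \phi_t(X_t, A_t, Y_t; \tau, f^*, \epol)$ denote the oracle AIPW summand built from the true nuisances, I would first rewrite
\begin{align*}
\sqrt{T}\bigl(\hat R^{\mathrm{BADR}}_T(\epol)-\theta_0\bigr) = \sum_{\tau=1}^M \frac{w_{T,\tau}}{\sqrt{r_\tau}}\cdot\frac{1}{\sqrt{n_\tau}}\sum_{t=t_{\tau-1}+1}^{t_\tau}\bigl(\tilde\phi_t-\theta_0\bigr),
\end{align*}
and then split $\tilde\phi_t-\theta_0 = (\phi^*_t-\theta_0) + U_t + V_t$, where $V_t := \mathbb{E}[\tilde\phi_t-\phi^*_t\mid X_t,\Omega_{t-1}]$ is the double-robust bias and $U_t := \tilde\phi_t-\phi^*_t-V_t$ is a mean-zero martingale increment with respect to $\Omega_{t-1}$. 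This three-way split is the standard AIPW decomposition adapted to the batch structure.

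The oracle piece $\frac{1}{\sqrt{n_\tau}}\sum_{t\in\tau}(\phi^*_t-\theta_0)$ is exactly the MDS analyzed in the proof of the BA2IPW asymptotic normality, so its joint limit across $\tau$, reweighted by $w_{T,\tau}/\sqrt{r_\tau}$ and combined with $w_T\xrightarrow{\mathrm{p}} w$ via Slutsky, produces the target $\mathcal{N}(0,\sigma^2)$ law. For the martingale remainder $U_t$, a direct computation using condition (iv) (boundedness of $Y_t$, $\hat f_{t-1}$, and $\epol/\hat g_{t-1}$) shows that $\mathrm{Var}(U_t\mid\Omega_{t-1})$ is dominated by a constant multiple of $\|\hat f_{t-1}-f^*\|_2^2 + \|\hat g_{t-1}-\pi_\tau\|_2^2 = \op(1)$ by condition (iii); hence Chebyshev combined with MDS orthogonality gives $\frac{1}{\sqrt{n_\tau}}\sum_{t\in\tau}U_t = \op(1)$.

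The main obstacle, and where the double-robustness rate condition earns its keep, is killing the bias term. Using $\mathbb{E}[\mathbbm{1}[A_t=a]\mid X_t,\Omega_{t-1}] = \pi_\tau(a\mid X_t,\Omega_{t_{\tau-1}})$, a short calculation yields the product form
\begin{align*}
V_t = \sum_{a=1}^{K}\epol(a\mid X_t)\,\frac{\bigl(\hat g_{t-1}(a\mid X_t)-\pi_\tau(a\mid X_t,\Omega_{t_{\tau-1}})\bigr)\bigl(\hat f_{t-1}(a,X_t)-f^*(a,X_t)\bigr)}{\hat g_{t-1}(a\mid X_t)},
\end{align*}
which is exactly the structure that condition (iii) targets. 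Cauchy-Schwarz together with the uniform bound $\epol/\hat g_{t-1}\le C_g$ yields $\mathbb{E}[|V_t|]\lesssim \alpha\beta = \op((t-t_{\tau-1})^{-1/2})$, and summing the $s^{-1/2}$ rate over $s = 1,\dots,n_\tau$ produces $\sum_{t\in\tau}|V_t| = \op(\sqrt{n_\tau})$, so $\frac{1}{\sqrt{n_\tau}}\sum_{t\in\tau}V_t = \op(1)$. Assembling the three pieces across all $M$ batches and invoking Slutsky one last time completes the proof.
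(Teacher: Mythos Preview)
Your proposal is correct and follows essentially the same three-part strategy as the paper's proof: isolate a leading AIPW-type MDS term that delivers the Gaussian limit, show the centered remainder is $\op(1)$ via a conditional-variance bound and Chebyshev, and kill the bias term via the product form of the double-robustness error together with Cauchy--Schwarz/H\"older and the rate condition $\alpha\beta=\op((t-t_{\tau-1})^{-1/2})$. The only organizational difference is that the paper first peels off the BA2IPW estimator (true $\pi_\tau$, estimated $\hat f_{t-1}$) and invokes the BA2IPW theorem for the leading term, whereas you go directly to the oracle $(f^*,\pi_\tau)$; since the BA2IPW proof itself reduces to the oracle, this is a cosmetic difference and your route is, if anything, slightly cleaner.
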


\section{Off-Policy Learning}
An important application of OPE is Off-Policy Learning (OPL), which attempts to determine the optimal policy maximizing the expected reward. Let us define the optimal policy $\pi^*$ as $\pi^* = \argmax_{\pi\in \Pi} R(\pi)$, where $\Pi$ is a policy class. By applying each OPE estimator, we estimate the optimal policy as $\hat{\pi} = \argmax_{\pi\in \Pi}\widehat{R}^{\mathrm{BA2IPW}}_T(\pi)$.

\begin{table*}[t]
\begin{center}
\caption{The coverage ratios (CRs) are shown. The left graph shows the results with the RW policy. The right graph shows the results with the UCB policy.} 
\medskip
\label{tbl:exp_table3}
\vspace{-0.3cm}
\begin{tabular}{cc}

\begin{minipage}{0.5\hsize}
\begin{center}
\scalebox{0.70}[0.70]{
\begin{tabular}{l|r|r|r|r|r|r}
\toprule
Datasets &  satimage& pendigits& mnist& letter& sensorless& connect-4 \\
\hline
PBA2IPW &  1.00 &  0.96 &  1.00 &  0.85 &  0.94 &  1.00 \\
EBA2IPW &  0.66 &  0.22 &  0.15 &  0.02 &  0.19 &  0.88 \\
EBA2IPW' &  0.67 &  0.27 &  0.47 &  0.02 &  0.19 &  0.89 \\
BAdaIPW &  0.93 &  0.90 &  0.90 &  0.83 &  0.91 &  0.99 \\
\bottomrule
\end{tabular}
} 
\end{center}
\end{minipage}

\begin{minipage}{0.5\hsize}
\begin{center}
\scalebox{0.70}[0.70]{
\begin{tabular}{l|r|r|r|r|r|r}
\toprule
Datasets &  satimage& pendigits& mnist& letter& sensorless& connect-4 \\
\hline
PBA2IPW &  1.00 &  1.00 &  1.00 &  0.94 &  0.98 &  1.00 \\
EBA2IPW &  0.92 &  0.74 &  0.01 &  0.00 &  0.02 &  0.78 \\
EBA2IPW' &  0.77 &  0.58 &  0.17 &  0.00 &  0.03 &  0.67 \\
BAdaIPW &  0.91 &  0.88 &  0.91 &  0.89 &  0.82 &  0.93 \\
\bottomrule
\end{tabular}
} 
\end{center}
\end{minipage}
\end{tabular}
\end{center}
\vspace{-0.4cm}
\end{table*}

\begin{table*}[t]
\begin{center}
\caption{Results of OPL under the RW policy. We highlight in bold the best two estimators in each dataset.} 
\medskip
\label{tbl:exp_table4}
\vspace{-0.3cm}
\scalebox{0.73}[0.73]{
\begin{tabular}{l|rr|rr|rr|rr|rr|rr}
\toprule
Datasets &  \multicolumn{2}{c|}{satimage}& \multicolumn{2}{c|}{pendigits}& \multicolumn{2}{c|}{mnist}& \multicolumn{2}{c|}{letter}& \multicolumn{2}{c|}{sensorless}& \multicolumn{2}{c}{connect-4} \\
Metrics &      RWD &      SD &      RWD &      SD &      RWD &      SD &      RWD &      SD &      RWD &      SD &     RWD &     SD \\
\hline
PBA2IPW &  0.812 &  0.020 &  0.690 &  0.072 &  0.493 &  0.298 &  \textbf{0.172} &  0.090 &  0.264 &  0.131 &  0.665 &  0.030 \\
EBA2IPW &  0.813 &  0.022 &  \textbf{0.717} &  0.063 &  \textbf{0.519} &  0.313 &  0.135 &  0.073 &  \textbf{0.289} &  0.126 &  \textbf{0.679} &  0.024 \\
BAdaIPW &  \textbf{0.815} &  0.023 &  \textbf{0.697} &  0.089 &  \textbf{0.515} &  0.312 &  0.150 &  0.078 &  \textbf{0.313} &  0.145 &  0.677 &  0.024 \\
AdaDM &  0.777 &  0.033 &  0.478 &  0.076 &  0.191 &  0.151 &  0.046 &  0.026 &  0.190 &  0.091 &  0.654 &  0.023 \\
AIPW &  \textbf{0.819} &  0.020 &  0.698 &  0.062 &  \textbf{0.515} &  0.312 &  \textbf{0.154} &  0.087 &  0.287 &  0.141 &  \textbf{0.678} &  0.024 \\
DM &  0.791 &  0.034 &  0.544 &  0.071 &  0.247 &  0.174 &  0.057 &  0.036 &  0.210 &  0.090 &  0.654 &  0.023 \\
\bottomrule
\end{tabular}
} 
\end{center}
\vspace{-0.5cm}
\end{table*}

\section{Experiments}
Using benchmark datasets and real-world logged data, we demonstrate the effectiveness of the BA2IPW estimator with an equal weight (PBA2IPW) and efficient weight using variance estimators $\hat{\sigma}^2_{T,\tau}$ (EBA2IPW) and $\tilde{\sigma}^2_{T,\tau}$ (EBA2IPW'), and BAdaIPW estimator with an equal weight (BAdaIPW). Note that although the forms of the several estimators are the same as the existing studies, the theoretical guarantees are different.

\subsection{Experiments with Benchmark Dataset} 
Following \citet{dudik2011doubly} and \citet{Chow2018}, we evaluate the proposed estimators using classification datasets by transforming them into contextual bandit data. From the LIBSVM repository, we use the satimage, pendigits, mnist, letter, sensorless, and connect-4 datasets \footnote{\url{https://www.csie.ntu.edu.tw/~cjlin/libsvmtools/datasets/}}. For a batched update behavior policy, we use the random walk (RW) and LinearUCB (UCB) \citep{sutton1998reinforcement,li2010contextual,Wei2011} policies. When using the RW policy, we first decide the probability of choosing an action from uniform distribution. Then, we add a noise $0.01\mathcal{N}(0,1)$ at each batch, i.e., the policy is a random walk. At each batch, we standardize the values of random walk to be probability, i.e., all values are positive and the sum is $1$. When using the UCB policy, we choose an estimated best arm $\hat{A}_t$ firstly. Then, we create an adaptive policy. Then, we construct a behavior policy as a policy that chooses $\hat{A}_t$ with probability $0.8$ and the other arms with equal probability. While the probability of choosing an action of the UCB policy converges, that of the RW policy does not converge. 

For each dataset, we compare the performances of the following estimators of policy value: PBA2IPW, EBA2IPW, EBA2IPW', BAdaIPW, Adaptive DM estimator (AdaDM) defined as $\frac{1}{T}\sum^T_{t=1}\sum^K_{a=1}\epol(a\mid X_t)\hat{f}_{t}(a, X_t)$, an AIPW defined as $\frac{1}{T}\sum^T_{t=1}\sum^K_{a=1}\frac{\epol(a\mid X_t)\mathbbm{1}[A_t=a]\big(Y_t - \hat{f}_{T}(a, X_t)\big)}{\pi_t(a\mid X_t, \Omega_{t-1})} + \frac{1}{T}\sum^T_{t=1}\sum^K_{a=1}\epol(a\mid X_t)\hat{f}_{T}(a, X_t)$, and DM estimator (DM) defined as $\frac{1}{T}\sum^T_{t=1}\sum^K_{a=1}\epol(a\mid X_t)\hat{f}_{T}(a, X_t)$. When estimating $f^*$, we use the Nadaraya-Watson regression (NW) estimator \citep{yang2002}. 

\paragraph{MSEs:} To construct an evaluation policy, we create a deterministic policy $\pi^d$ by training a logistic regression classifier on historical data and set the output as $\pi^d$. Through experiments, the behavior policy $\pi_t$ is assumed to be known. More details are in Appendix~\ref{appdx:det_exp}. Let us construct the evaluation policy $\epol$ as a mixture of $\pi^d$ and the uniform random policy $\pi^u$, defined as $\epol = 0.9\pi^d+0.1\pi^u$. We construct the evaluation policy $\epol$ as a mixture of $\pi^d$ and the uniform random policy $\pi^u$ defined as $\epol = 0.9\pi^d+0.1\pi^u$. We compare the MSEs of six estimators, the PBA2IPW, EBA2IPW, EBA2IPW', BAdaIPW, AdaDM, AIPW, and DM estimators. For estimating the weights of EBA2IPW and EBA2IPW' estimators, we iteratively estimate the wights and the value $\theta_0$ $10$ times. In each experiment, we have historical data with a sample size $T=1500$. When estimating $f^*$, we use the Nadaraya-Watson regression (NW) estimator \citep{yang2002}. 

The resulting MSEs and their standard deviations (SDs) over $100$ replications of each experiment are shown in Tables~\ref{tbl:exp_table1} and \ref{tbl:exp_table2}. In many cases, the proposed methods show the preferable the existing methods. When using the RW policy, the policy does not converge to a time-invariant policy. Therefore, the proposed method is theoretically preferable for the situation. More importantly, we can construct confidence intervals from the proposed methods, but cannot construct it from the A2IPW estimator. On the other hand, when using the UCB policy, the policy converges to a time-invariant policy, but the proposed methods still show higher performance for various datasets in some cases. We consider that this result is based on the fact that, even though the policy approaches to a time-invariant policy, the update is only allowed in batch, and does not converge sufficiently. In Appendix~\ref{appdx:det_exp}, we show the additional results. 

\paragraph{Coverage Ratio of Confidence Interval:} In Table~\ref{tbl:exp_table3}, we show the coverage ratio of the confidence intervals derived in the previous experiments together with the MSEs. The coverage ratio of the confidence interval is a percentage at which it covers the true value $\theta_0$ in the confidence interval. For the $100$ trials of the previous experiment, we calculate the coverage ratio (CR) of $95\%$ confidence interval, which is constructed as $\left[\hat{\theta}_T - 1.96\sqrt{\frac{\hat{\sigma}^2}{1500}}, \hat{\theta}_T + 1.96\sqrt{\frac{\hat{\sigma}^2}{1500}}\right]$, where $\hat{\theta}_T$ is an estimator of $\theta_0$ and $\hat{\sigma}^2$ is its estimated asymptotic variance. In the results, the PBA2IPW and BAdaIPW estimator shows CR close to $0.95$ in many cases. The BAdaIPW estimator does not require an estimator of $f^*$. Therefore, compared with the other estimators, it shows more preferable performances. EBA2IPW estimator requires several variance estimators, and their estimation error worsen the results compared with the PBA2IPW and BAdaIPW estimators.

\paragraph{OPL:}
In the experiments of OPL, we compare the performances of estimated policy maximizing expected reward obtained from the PBA2IPW, EBA2IPW, BAdaIPW, AdaDM, AIPW, and DM estimators. We conducted $5$ trials for each experiment with $T=1500$. The resulting expected rewards over the evaluation data (RWDs) and the SDs are shown in Table~\ref{tbl:exp_table4}, where we highlight in bold the best two estimators. 

\subsection{Experiments with Real-World Data}
We apply our estimators to evaluate a policy using the real-world dataset in CyberAgent Inc., which is the second-largest Japanese advertisement company with about $7$ billion USD market capitalization (as of August 2020). This company simultaneously runs Thompson sampling and uniformly random sampling to determine the design of advertisements. The Thompson sampling updates the parameter every $30$ minute; therefore, there are batches with samples obtained during the $30$ minute. We use the logged data produced by the algorithms to confirm the empirical performance of the proposed estimators. To check the performance, we calculate the estimation error between the estimates of the value of the uniformly random sampling policy estimated from the dataset obtained by the Thompson sampling and the observed average reward of the uniformly random policy. More details are shown in Appendix~\ref{appdx:det_exp2}. We apply the PBA2IPW, EBA2IPW, EBA2IPW', BAdaIPW, and AdaDM estimators. The results are shown in Table~\ref{tbl:exp_table5} and Figure~\ref{fig1:box_plot}. We show the Bias, MSE, and averaged confidence intervals. While the PBA2IPW and BAdaIPW estimators suffer the high variance, the EBA2IPW, EBA2IPW', and AdaDM estimators show the preferable performances. Although the AdaDM estimator also shows the effectiveness for this dataset, the BA2IPW estimator is theoretically more robust because it is consistent even if $\hat{f}_t$ does not converge to $f^*$.

\begin{table}[h]
\begin{center}
\caption{Results of the CyberAgent dataset. } 
\medskip
\label{tbl:exp_table5}
\scalebox{0.73}[0.73]{
\begin{tabular}{lrrrrrrrrrrrrrrrr}
\toprule
{} &      Bias &      MSE &   $\frac{1.96\times \mathrm{variance}}{\mathrm{sample}\ \mathrm{size}}$\\
\midrule
PBA2IPW &  -0.01766 &  8.25830e-02 &  8.25830e-02  \\
EBA2IPW &  0.00019 &  4.52574e-05 &  3.58261e-03  \\
EBA2IPW' &  0.00095 &  4.03248e-05 &  2.92494e-03  \\
BAdaIPW &  0.14650 &  5.70524e-02 &  7.68610e-02  \\
AdaDM &  0.00714 &  2.26107e-04 &  2.23604e-05  \\
\bottomrule
\end{tabular}
} 
\end{center}
\vspace{-0.3cm}
\end{table}

\begin{figure}[h]
\begin{center}
 \includegraphics[width=55mm]{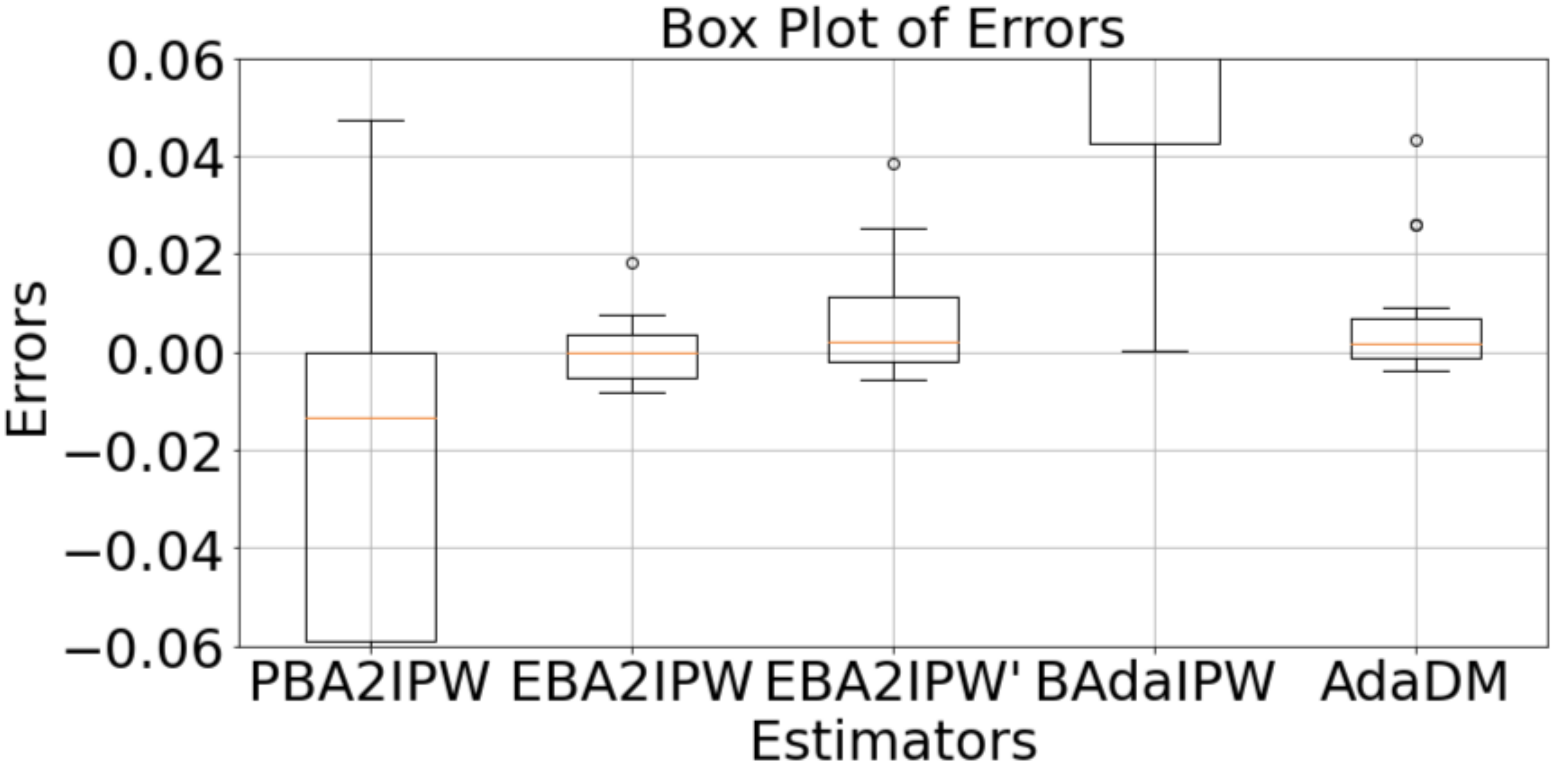}
\end{center}
\caption{Box plot of the CyberAgent dataset.}
\label{fig1:box_plot}
\vspace{-0.3cm}
\end{figure}

\section{Conclusion}
This study presented solutions for causal inference from dependent samples obtained via a batch-based bandit algorithm. By using the asymptotic property in batch, we applied the CLT for an MDS to obtain the asymptotic normality without requiring the convergence of the assignment probability. Additionally, we showed that the proposed method is applicable when the support of arms is incomplete, which is a notorious problem in OPE. In experiments, the proposed batch-based estimators showed theoretically expected performances for benchmark and real-world datasets. 

\bibliographystyle{icml2019}
\bibliography{arxiv}

\clearpage

\onecolumn

\appendix

\setcounter{secnumdepth}{3}

\section{Preliminaries}
\label{sec:prelim}

\subsection{Mathematical Tools}

\begin{proposition}\label{prp:rules_for_ld}[Slutsky Theorem, \citet{greene2003econometric}, Theorem D.~16 1, p.~1117]  If $a_n\xrightarrow{d}a$ and $b_n\xrightarrow{p} b$, then
\begin{align*}
a_nb_n\xrightarrow{d} ba.
\end{align*}
\end{proposition}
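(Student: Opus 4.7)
The plan is to decompose $a_n b_n = a_n b + a_n(b_n - b)$ and treat the two pieces with different tools, then combine them via the addition form of Slutsky's lemma. Implicit in the statement is that $b$ is a deterministic constant; convergence in probability to a genuinely random limit is not enough for the multiplicative conclusion, so I would flag this at the outset and treat $b$ as a constant throughout.

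For the first piece, $a_n b \xrightarrow{d} ab$ is immediate from the continuous mapping theorem applied to the scalar map $x \mapsto bx$. For the second piece, $a_n(b_n - b) \xrightarrow{p} 0$, I would use tightness of $\{a_n\}$: since $a_n \xrightarrow{d} a$, for any $\varepsilon > 0$ there exists $M$ with $\limsup_n P(|a_n| > M) < \varepsilon$, and for any $\delta > 0$,
\begin{align*}
P(|a_n(b_n - b)| > \delta) \leq P(|a_n| > M) + P(|b_n - b| > \delta/M),
\end{align*}
which can be made arbitrarily small by first choosing $M$ large and then $n$ large.

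To combine the pieces, I invoke the additive Slutsky lemma: if $X_n \xrightarrow{d} X$ and $Y_n \xrightarrow{p} 0$, then $X_n + Y_n \xrightarrow{d} X$. A self-contained argument proceeds via the Portmanteau characterization of weak convergence: for any bounded uniformly continuous test function $f$ with modulus of continuity $\omega_f$,
\begin{align*}
\left|\mathbb{E}[f(X_n + Y_n)] - \mathbb{E}[f(X_n)]\right| \leq \omega_f(\delta) + 2\|f\|_\infty P(|Y_n| > \delta);
\end{align*}
sending $n \to \infty$ and then $\delta \to 0$ delivers the conclusion. Applying this with $X_n = a_n b$ and $Y_n = a_n(b_n - b)$ yields $a_n b_n \xrightarrow{d} ab = ba$.

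The main delicate step is the tightness bound, where one must fix the truncation level $M$ before sending $n \to \infty$ to avoid interchanging limits improperly. A slicker alternative route is to observe that $b_n \xrightarrow{p} b$ with a constant limit automatically upgrades the marginal convergences to joint convergence $(a_n, b_n) \xrightarrow{d} (a, b)$, after which the bivariate continuous mapping theorem applied to $(x,y) \mapsto xy$ concludes in one stroke; but that route pushes the real work into proving the joint-convergence upgrade, which itself hinges on the same tightness input. Either way, tightness of $\{a_n\}$ is the crux of the argument, while everything else is routine bookkeeping.
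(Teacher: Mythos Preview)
Your proof is correct and is the standard textbook argument for the multiplicative form of Slutsky's theorem. However, there is nothing to compare against: the paper does not supply its own proof of this proposition. It is stated in the preliminaries appendix purely as a cited tool from \citet{greene2003econometric}, and is then invoked (in the proof of Theorem~\ref{thm:main}) to pass from $\sqrt{T}\hat{\bm{q}}^{\mathrm{OPE}}_T(\theta_0)\xrightarrow{\mathrm{d}}\mathcal{N}(0,\Omega)$ and $\hat W_T\xrightarrow{\mathrm{p}}W$ to the limiting distribution of the full estimator. Your observation that the conclusion requires $b$ to be a deterministic constant is apt and worth keeping; the paper's usage is consistent with this, since the limit $W$ is treated as a fixed matrix.
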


\begin{definition}\label{dfn:uniint}[Uniformly Integrable, \citet{GVK126800421}, p.~191]  A sequence $\{A_t\}$ is said to be uniformly integrable if for every $\epsilon > 0$ there exists a number $c>0$ such that 
\begin{align*}
\mathbb{E}[|A_t|\cdot \mathbbm{1}[|A_t \geq c|]] < \epsilon
\end{align*}
for all $t$.
\end{definition}

\begin{proposition}\label{prp:suff_uniint}[Sufficient Conditions for Uniformly Integrable, \citet{GVK126800421}, Proposition~7.7, p.~191]  (a) Suppose there exist $r>1$ and $M<\infty$ such that $\mathbb{E}[|A_t|^r]<M$ for all $t$. Then $\{A_t\}$ is uniformly integrable. (b) Suppose there exist $r>1$ and $M < \infty$ such that $\mathbb{E}[|b_t|^r]<M$ for all $t$. If $A_t = \sum^\infty_{j=-\infty}h_jb_{t-j}$ with $\sum^\infty_{j=-\infty}|h_j|<\infty$, then $\{A_t\}$ is uniformly integrable.
\end{proposition}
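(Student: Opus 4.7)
The plan is to treat the two parts separately and then reduce part (b) to part (a). For part (a), I would use the classical Markov/Chebyshev-type truncation bound. On the event $\{|A_t|\ge c\}$ we have $1\le (|A_t|/c)^{r-1}$, so
\begin{align*}
\mathbb{E}\bigl[|A_t|\,\mathbbm{1}[|A_t|\ge c]\bigr]
&\le c^{-(r-1)}\,\mathbb{E}\bigl[|A_t|^{r}\,\mathbbm{1}[|A_t|\ge c]\bigr]
\le c^{-(r-1)} M.
\end{align*}
Because $r>1$, the right-hand side tends to $0$ as $c\to\infty$, so for any $\varepsilon>0$ a single choice of $c$ (depending only on $\varepsilon,r,M$, not on $t$) yields $\mathbb{E}[|A_t|\mathbbm{1}[|A_t|\ge c]]<\varepsilon$ for every $t$. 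This matches \Cref{dfn:uniint} and establishes uniform integrability.

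For part (b), the strategy is to prove that the hypothesis $\sup_t \mathbb{E}[|b_t|^{r}]\le M$ and absolute summability of $\{h_j\}$ together yield a uniform bound $\sup_t\mathbb{E}[|A_t|^{r}]<\infty$, at which point part (a) applies directly. The tool is Minkowski's inequality in $L^{r}$: for any finite partial sum $A_t^{(N)}=\sum_{|j|\le N} h_j b_{t-j}$,
\begin{align*}
\bigl\|A_t^{(N)}\bigr\|_{L^{r}}
\le \sum_{|j|\le N}|h_j|\,\|b_{t-j}\|_{L^{r}}
\le M^{1/r}\sum_{|j|\le N}|h_j|
\le M^{1/r}\sum_{j=-\infty}^{\infty}|h_j|.
\end{align*}
Since the right-hand side is finite and independent of both $N$ and $t$, the partial sums form a Cauchy sequence in $L^{r}$, so $A_t^{(N)}\to A_t$ in $L^{r}$ (and hence a.s.\ along a subsequence), and passing to the limit in the Minkowski bound gives
\begin{align*}
\sup_t\mathbb{E}\bigl[|A_t|^{r}\bigr]
\le \Bigl(M^{1/r}\textstyle\sum_{j}|h_j|\Bigr)^{r}
=:M'<\infty.
\end{align*}
Applying part (a) to $\{A_t\}$ with constant $M'$ then yields uniform integrability.

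The main obstacle is the technical point in part (b) of ensuring that the doubly infinite series $A_t=\sum_{j=-\infty}^{\infty} h_j b_{t-j}$ is well-defined. This is handled by the Cauchy-sequence argument above: absolute summability of $\{h_j\}$ together with the uniform $L^{r}$ bound on $\{b_t\}$ forces the partial sums to converge in $L^{r}$, and the almost-sure well-definedness then follows from the Borel--Cantelli or monotone-convergence argument applied to $\sum_j |h_j|\,|b_{t-j}|$, which has finite $L^{r}$ (hence finite $L^{1}$) norm. Once this measurability/convergence issue is dispatched, everything else is a clean Markov-inequality computation.
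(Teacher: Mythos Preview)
Your proof is correct and is the standard textbook argument. Note, however, that the paper does not supply its own proof of this proposition: it is stated in the Preliminaries appendix as a quotation from \citet{GVK126800421} (Proposition~7.7) and is used as an off-the-shelf tool, so there is no paper-specific approach to compare against.
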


\begin{proposition}[$L^r$ Convergence Theorem, \citet{loeve1977probability}]
\label{prp:lr_conv_theorem}
Let $0<r<\infty$, suppose that $\mathbb{E}\big[|a_n|^r\big] < \infty$ for all $n$ and that $a_n \xrightarrow{\mathrm{p}}a$ as $n\to \infty$. The following are equivalent: 
\begin{description}
\item{(i)} $a_n\to a$ in $L^r$ as $n\to\infty$;
\item{(ii)} $\mathbb{E}\big[|a_n|^r\big]\to \mathbb{E}\big[|a|^r\big] < \infty$ as $n\to\infty$; 
\item{(iii)} $\big\{|a_n|^r, n\geq 1\big\}$ is uniformly integrable.
\end{description}
\end{proposition}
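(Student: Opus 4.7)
The plan is to establish the three-way equivalence via the cycle $(\text{iii})\Rightarrow(\text{i})\Rightarrow(\text{ii})\Rightarrow(\text{iii})$. Throughout, the continuous mapping theorem turns $a_n \xrightarrow{\mathrm{p}} a$ into $|a_n|^r \xrightarrow{\mathrm{p}} |a|^r$, and I will repeatedly invoke the $c_r$-inequality $|x+y|^r \leq C_r(|x|^r + |y|^r)$ with $C_r = \max\{1, 2^{r-1}\}$ to dominate differences by sums. The definition of uniform integrability from Definition~\ref{dfn:uniint} will be used directly in the tail analysis.

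For $(\text{iii})\Rightarrow(\text{i})$, uniform integrability of $\{|a_n|^r\}$ implies $L^1$-boundedness, so Fatou's lemma along an almost-surely convergent subsequence yields $\mathbb{E}[|a|^r] < \infty$. The $c_r$-inequality bounds $|a_n - a|^r \leq C_r(|a_n|^r + |a|^r)$, making $\{|a_n - a|^r\}$ itself uniformly integrable (a UI family plus a fixed integrable random variable is UI). Since $|a_n - a|^r \xrightarrow{\mathrm{p}} 0$, splitting
\begin{align*}
\mathbb{E}[|a_n - a|^r] = \mathbb{E}\bigl[|a_n - a|^r \mathbbm{1}[|a_n - a|^r \leq K]\bigr] + \mathbb{E}\bigl[|a_n - a|^r \mathbbm{1}[|a_n - a|^r > K]\bigr]
\end{align*}
at a level $K$ controls the first term by bounded convergence and the second uniformly in $n$ by UI, giving $\mathbb{E}[|a_n - a|^r] \to 0$. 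For $(\text{i})\Rightarrow(\text{ii})$, when $r\geq 1$ the reverse Minkowski inequality gives $\bigl|\|a_n\|_r - \|a\|_r\bigr| \leq \|a_n - a\|_r \to 0$ and hence $\mathbb{E}[|a_n|^r] \to \mathbb{E}[|a|^r]$; when $0 < r < 1$ the subadditivity $\bigl||x|^r - |y|^r\bigr| \leq |x - y|^r$ immediately yields $|\mathbb{E}[|a_n|^r] - \mathbb{E}[|a|^r]| \leq \mathbb{E}[|a_n - a|^r] \to 0$.

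The main obstacle is $(\text{ii})\Rightarrow(\text{iii})$. Setting $X_n := |a_n|^r$ and $X := |a|^r$, I have $X_n \xrightarrow{\mathrm{p}} X$ and $\mathbb{E}[X_n] \to \mathbb{E}[X] < \infty$, but the definition of UI involves the discontinuous indicator $\mathbbm{1}[X_n > K]$, so convergence in probability cannot be invoked directly on that quantity. The key device is to replace the indicator by the bounded continuous truncation $y \mapsto y\wedge K$: writing $\mathbb{E}[X_n \mathbbm{1}[X_n > K]] = \mathbb{E}[X_n] - \mathbb{E}[X_n\wedge K]$, bounded convergence gives $\mathbb{E}[X_n\wedge K]\to \mathbb{E}[X\wedge K]$ (since $X_n\wedge K \leq K$ and $X_n\wedge K \xrightarrow{\mathrm{p}} X\wedge K$ by continuous mapping). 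Combining with $\mathbb{E}[X_n] \to \mathbb{E}[X]$, one obtains $\limsup_n \mathbb{E}[X_n \mathbbm{1}[X_n > K]] \leq \mathbb{E}[X] - \mathbb{E}[X\wedge K]$, which tends to $0$ as $K\to\infty$ by monotone convergence on the limit. A finite set of early indices is absorbed using the individual integrability $\mathbb{E}[|a_n|^r] < \infty$, completing UI of $\{X_n\}$ in the sense of Definition~\ref{dfn:uniint}.
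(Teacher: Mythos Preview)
Your proof is correct and follows the standard textbook route for the $L^r$ convergence theorem: the cycle $(\text{iii})\Rightarrow(\text{i})\Rightarrow(\text{ii})\Rightarrow(\text{iii})$, using the $c_r$-inequality to transfer uniform integrability to $\{|a_n-a|^r\}$, Minkowski/subadditivity for the moment convergence step, and the truncation identity $\mathbb{E}[X_n\mathbbm{1}[X_n>K]]=\mathbb{E}[X_n]-\mathbb{E}[X_n\wedge K]$ together with bounded convergence for the hardest implication. The handling of the finitely many early indices via individual integrability is the right way to close the argument under Definition~\ref{dfn:uniint}.

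There is nothing to compare against, however: the paper does not supply a proof of this proposition. It appears in the Preliminaries (Appendix~\ref{sec:prelim}) as a quoted result from \citet{loeve1977probability}, alongside other cited tools such as Slutsky's theorem and the martingale CLT, and is invoked without proof in the argument for Theorem~\ref{thm:main} (to pass from pointwise convergence of the bounded nuisance estimators to convergence of their expectations). So your write-up is not an alternative to the paper's proof but rather a self-contained justification of a result the paper takes as given.
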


\subsection{Martingale Limit Theorems}

\begin{proposition}
\label{prp:mrtgl_WLLN}[Weak Law of Large Numbers for Martingale, \citet{hall2014martingale}]
Let $\{S_n = \sum^{n}_{i=1} X_i, \Omega_{t}, t\geq 1\}$ be a martingale and $\{b_n\}$ a sequence of positive constants with $b_n\to\infty$ as $n\to\infty$. Then, writing $X_{ni} = X_i\mathbbm{1}[|X_i|\leq b_n]$, $1\leq i \leq n$, we have that $b^{-1}_n S_n \xrightarrow{\mathrm{p}} 0$ as $n\to \infty$ if 
\begin{description}
\item[(i)] $\sum^n_{i=1}P(|X_i| > b_n)\to 0$;
\item[(ii)] $b^{-1}_n\sum^n_{i=1}\mathbb{E}[X_{ni}\mid \Omega_{t-1}] \xrightarrow{\mathrm{p}} 0$, and;
\item[(iii)] $b^2_n \sum^n_{i=1}\big\{\mathbb{E}[X^2_{ni}] - \mathbb{E}\big[\mathbb{E}\big[X_{ni}\mid \Omega_{t-1}\big]\big]^2\big\}\to 0$.
\end{description}
\end{proposition}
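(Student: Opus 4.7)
The plan is to carry out the classical truncation argument: first replace each summand by its bounded version $X_{ni}$, then decompose the truncated sum into a compensator plus a martingale-difference remainder, and finally control each piece by one of the three hypotheses.

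First I would reduce to the truncated partial sum $T_n=\sum_{i=1}^n X_{ni}$. Since $\{S_n\neq T_n\}\subseteq\bigcup_{i=1}^n\{|X_i|>b_n\}$, a union bound gives $P(S_n\neq T_n)\le\sum_{i=1}^n P(|X_i|>b_n)$, which tends to $0$ by hypothesis (i). Therefore $b_n^{-1}(S_n-T_n)\xrightarrow{p}0$, and it suffices to prove $b_n^{-1}T_n\xrightarrow{p}0$.

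Second, I would split $T_n$ into its predictable and innovation parts by writing
\[
T_n \;=\; \sum_{i=1}^n \mathbb{E}[X_{ni}\mid\Omega_{i-1}] \;+\; \sum_{i=1}^n Y_{ni},\qquad Y_{ni}:=X_{ni}-\mathbb{E}[X_{ni}\mid\Omega_{i-1}].
\]
Because $S_n$ is a martingale with respect to $\{\Omega_t\}$, $X_i$ is $\Omega_i$-measurable, hence so is $X_{ni}$; thus $\{Y_{ni},\Omega_i\}$ is an MDS. Hypothesis (ii) directly yields $b_n^{-1}\sum_i\mathbb{E}[X_{ni}\mid\Omega_{i-1}]\xrightarrow{p}0$, so the compensator part is handled immediately.

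Third, I would dispose of the MDS remainder by a second-moment (Chebyshev) argument. Using the orthogonality of martingale differences, $\mathbb{E}[Y_{ni}Y_{nj}]=0$ for $i\ne j$, so
\[
\mathbb{E}\Bigl[\bigl(\textstyle\sum_{i=1}^n Y_{ni}\bigr)^2\Bigr] \;=\; \sum_{i=1}^n \mathbb{E}[Y_{ni}^2] \;=\; \sum_{i=1}^n\bigl\{\mathbb{E}[X_{ni}^2]-\mathbb{E}[(\mathbb{E}[X_{ni}\mid\Omega_{i-1}])^2]\bigr\}.
\]
Reading hypothesis (iii) with its natural normalization $b_n^{-2}$ (the stated $b_n^2$ is a display typo, since with $b_n\to\infty$ only the inverse power can bound a variance), this sum divided by $b_n^2$ vanishes. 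Chebyshev then gives $b_n^{-1}\sum_i Y_{ni}\xrightarrow{p}0$, and combining the three pieces yields $b_n^{-1}S_n\xrightarrow{p}0$.

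The main conceptual hinge is the orthogonality step, which needs $Y_{ni}$ to be an honest martingale difference with respect to the given filtration; this rests entirely on $X_{ni}$ being $\Omega_i$-measurable, which in turn follows from $X_i=S_i-S_{i-1}$ and the measurability built into the martingale hypothesis. No Donsker-type or uniform-integrability machinery is needed: the truncation in (i) replaces any integrability condition on $\{X_i\}$, while (ii)–(iii) absorb everything else.
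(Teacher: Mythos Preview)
Your argument is the standard truncation-plus-Chebyshev proof and is correct, including your observation that condition (iii) must carry the normalization $b_n^{-2}$ rather than $b_n^{2}$. The paper does not supply its own proof of this proposition: it is quoted verbatim as a preliminary result from \citet{hall2014martingale} and used as a black box in the proof of Theorem~\ref{thm:consistency}, so there is nothing further to compare.
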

\begin{remark} The weak law of large numbers for martingale holds when the random variable is bounded by a constant.
\end{remark}

\begin{proposition}
\label{prp:marclt}[Central Limit Theorem for a Martingale Difference Sequence, \citet{GVK126800421}, Proposition~7.9, p.~194] Let $\{\bm{B}_t\}^\infty_{t=1}$ be an $n$-dimensional vector martingale difference sequence with $\overline{\bm{B}}_T=\frac{1}{T}\sum^T_{t=1}\bm{B}_t$. Suppose that (a) $\mathbb{E}[\bm{B}_t\bm{B}^{\top}_t] = \Omega_t$, a positive definite matrix with $(1/T)\sum^T_{t=1}\Omega_t\to\Omega$, a positive definite matrix; (b) $\mathbb{E}[B_{it}B_{jt}B_{it}B_{mt}] < \infty$ for all $t$ and all $i,j,l$, and $m$ (including $i=j=l=m$), where $B_{it}$ is the $i$-th element of vector $\bm{B}_t$; and (c) $(1/T)\sum^{T}_{t=1}\bm{B}_t\bm{B}^{\top}_t\xrightarrow{p}\Omega$. Then $\sqrt{T}\overline{\bm{B}}_T\xrightarrow{d}\mathcal{N}(\bm{0}, \Omega)$.
\end{proposition}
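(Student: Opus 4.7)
The strategy is to decompose $\sqrt{T}\bigl(\hat{R}^{\mathrm{BADR}}_T(\epol) - \theta_0\bigr)$ into an oracle martingale-difference part using the true nuisances plus a double-robust remainder, show the remainder is $\op(1)$, and apply the vector CLT for an MDS (Proposition~\ref{prp:marclt}) to the oracle part together with Slutsky's theorem (Proposition~\ref{prp:rules_for_ld}).

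First, let $\phi^*_t := \tilde{\phi}_t(X_t, A_t, Y_t; \tau, f^*, \pi_\tau, \epol)$ denote the AIPW summand evaluated at the true outcome regression and true behavior policy, and let $D^*_T(\epol)$ be the oracle analogue of $\tilde{D}_T(\epol)$ in which $\hat{f}_{t-1}, \hat{g}_{t-1}$ are replaced by $f^*, \pi_\tau$. Using $\sum_\tau w_{T,\tau} = 1$ to insert $-\theta_0$, I would write
\begin{align*}
\sqrt{T}\bigl(\hat{R}^{\mathrm{BADR}}_T(\epol) - \theta_0\bigr) = \sqrt{T}\, w_T^\top\bigl(D^*_T(\epol) - \theta_0 I\bigr) + \sqrt{T}\, w_T^\top\bigl(\tilde{D}_T(\epol) - D^*_T(\epol)\bigr),
\end{align*}
where $I = (1, \dots, 1)^\top$ is the $M$-dimensional all-ones vector.

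Second, for the oracle term I apply Proposition~\ref{prp:marclt} to $\bm{B}_t := \bm{h}^{\mathrm{OPE}}_t(X_t, A_t, Y_t; \theta_0, f^*, \epol)$, which the paper's main text already identified as an $M$-dimensional MDS relative to $\{\Omega_{t-1}\}$. Disjointness of the batch indicators makes $\mathbb{E}[\bm{B}_t \bm{B}_t^\top]$ diagonal with $\tau$-th entry $r_\tau^{-2}\,\mathrm{Var}(\phi^*_t)\,\mathbbm{1}[t_{\tau-1} < t \le t_\tau]$, so $(1/T)\sum_t \mathbb{E}[\bm{B}_t\bm{B}_t^\top]\to\mathrm{diag}(\sigma^2_1,\dots,\sigma^2_M)$. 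Assumptions~\ref{asm:overlap_pol} and~\ref{asm:overlap_outcome} bound $\bm{B}_t$, securing the fourth-moment condition (b), while Proposition~\ref{prp:mrtgl_WLLN} applied to the componentwise squared sequence delivers the in-probability sample-covariance convergence (c). Hence $\sqrt{T}(D^*_T(\epol) - \theta_0 I)\xrightarrow{\mathrm{d}}\mathcal{N}(\bm{0}, \mathrm{diag}(\sigma^2_\tau))$, and combining with $w_T\xrightarrow{\mathrm{p}} w$ via Slutsky yields $\sqrt{T}\, w_T^\top(D^*_T - \theta_0 I)\xrightarrow{\mathrm{d}}\mathcal{N}(0, \sigma^2)$.

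Third, I would show $\sqrt{T}\,w_T^\top(\tilde{D}_T - D^*_T) = \op(1)$ by a double-robust analysis within each batch. Decompose $\tilde{\phi}_t(\hat{f}_{t-1}, \hat{g}_{t-1}) - \phi^*_t = M_t + B_t$, where $B_t := \mathbb{E}[\tilde{\phi}_t(\hat{f}_{t-1}, \hat{g}_{t-1}) - \phi^*_t \mid \Omega_{t-1}]$ and $M_t$ is the MDS residual. Direct computation using $\mathbb{E}[\mathbbm{1}[A_t = a] \mid X_t, \Omega_{t-1}] = \pi_\tau(a \mid X_t, \Omega_{t_{\tau-1}})$ yields the classical double-robust identity
\begin{align*}
B_t = \mathbb{E}\!\left[\sum_{a=1}^K \epol(a\mid X_t)\bigl(f^*(a,X_t) - \hat{f}_{t-1}(a,X_t)\bigr)\frac{\pi_\tau(a\mid X_t,\Omega_{t_{\tau-1}}) - \hat{g}_{t-1}(a\mid X_t)}{\hat{g}_{t-1}(a\mid X_t)}\,\bigg|\,\Omega_{t-1}\right].
\end{align*}
Cauchy--Schwarz over $X_t$ together with boundedness condition (iv) gives $|B_t| \le C \|\hat{f}_{t-1} - f^*\|_2 \|\hat{g}_{t-1} - \pi_\tau\|_2 = \op((t - t_{\tau-1})^{-1/2})$ under (iii); summing over the $r_\tau T$ terms in batch $\tau$ and dividing by $r_\tau T$ yields an $\op(T^{-1/2})$ batch-averaged bias. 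For $M_t$, its conditional second moment is $\op(1)$ by boundedness combined with $\alpha,\beta = \op(1)$, so Proposition~\ref{prp:mrtgl_WLLN} applied batch-wise makes the batch-scaled sum $\op(T^{-1/2})$ as well. Multiplying by $\sqrt{T}$ and the bounded weights $w_T$ then gives $\op(1)$ overall.

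The main obstacle will be the careful bookkeeping in the remainder step: translating the per-$t$ rate $\alpha\beta = \op((t - t_{\tau-1})^{-1/2})$ into a uniform-in-$\tau$ $\op(T^{-1/2})$ bound on the batch-averaged bias (summing $\sum_{k=1}^{r_\tau T} k^{-1/2} \asymp (r_\tau T)^{1/2}$ and then dividing by $r_\tau T$ is the right accounting), and showing the MDS part $M_t$ contributes negligibly, which genuinely exploits the $L^2$-consistency in assumption (iii) rather than just the boundedness in (iv).
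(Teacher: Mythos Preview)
Your proposal does not address the stated proposition. Proposition~\ref{prp:marclt} is the martingale-difference CLT itself, quoted verbatim from Hayashi's textbook as a preliminary tool; the paper provides no proof of it and simply cites the source. What you have written is a proof of Theorem~\ref{thm:badr_asymp} (the asymptotic distribution of the BADR estimator), which \emph{uses} Proposition~\ref{prp:marclt} as a black box. Your decomposition into an oracle MDS part plus a double-robust remainder, the Cauchy--Schwarz control of the bias $B_t$, and the Slutsky combination are all ingredients of the paper's Appendix~\ref{appdx:badr_dist} argument for Theorem~\ref{thm:badr_asymp}, not of Proposition~\ref{prp:marclt}.

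If the task truly is to supply a proof of Proposition~\ref{prp:marclt}, the right approach is entirely different: reduce to the scalar case via the Cram\'er--Wold device, then for a fixed linear combination $c^\top \bm{B}_t$ verify the conditions of a scalar martingale CLT (e.g., the Lindeberg-type condition together with the convergence of the conditional variance sum to $c^\top\Omega c$). Condition~(b) supplies the fourth-moment bound needed for Lindeberg, and condition~(c) delivers the quadratic-variation convergence. None of the batch structure, the nuisance estimators $\hat f_{t-1},\hat g_{t-1}$, or the double-robust identity is relevant here.
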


\section{Generalized Method of Martingale Difference Moments}
\label{appdx:sec:gmmdm}
In this section, using martingale difference sequences, we establish a frameworks of GMM with samples with dependency. Let $s_t$ be a sample with the domain $\mathcal{S}$, $\Theta$ be the space of a parameter $\theta$, and $M$ be the number of moment conditions. For a vector-valued function $\bm{h}_t:\mathcal{S}\times\Theta \to \mathbb{R}^M$ and a parameter $\theta_0\in\Theta$, let $\{\bm{h}_t(s_t; \theta_0)\}^{T}_{t=1}$ be a martingale difference sequence, i.e., 
\begin{align}
\label{asm:mds}
\mathbb{E}\big[\bm{h}_t(S_t; \theta_0) \mid \Omega_{t-1}\big] = 0.
\end{align}
Then, let $\bm{q}(\theta)$ be the moment condition defined as $\bm{q}(\theta) = \mathbb{E}\left[\frac{1}{T}\sum^T_{t=1}\bm{h}_t(S_t; \theta)\right]$. For the parameter $\theta_0\in\Theta$, the moment condition $\bm{q}(\theta_0; S_t)$ is zero from (\ref{asm:mds}), i.e.,
\begin{align*}
\bm{q}(\theta_0) = \mathbb{E}\left[\frac{1}{T}\sum^T_{t=1}\bm{h}_t(S_t; \theta_0)\right] = 0.
\end{align*}
Using samples $\big\{s_t\big\}^T_{t=1}\in\mathcal{S}$, we approximate the moment condition by $\hat{\bm{q}}(\theta) = \frac{1}{T}\bm{h}_t(s_t; \theta)$. Then, we define a GMM-like estimator $\hat{\theta}_T$ as follows:
\begin{align*}
\hat{\theta}_T = \argmin_{\theta\in\Theta} \left(\hat{\bm{q}}(\theta)\right)^\top \hat{W}_T \left(\hat{\bm{q}}(\theta)\right),
\end{align*}
where $\hat{W}_T$ is a positive definite weight matrix constructed from $T$ samples. Compared with the standard GMM, we do not assume that the samples are not i.i.d. However, from the assumption that $\{\bm{h}_t(s_t; \theta_0)\}^{T}_{t=1}$ is a martingale difference sequence, we can derive the following results on the consistency and asymptotic normality of $\hat{\theta}_T$ under appropriate regularity conditions. We refer this method as \emph{Generalized Method of Martingale Difference Moments} (GMMDM).

For brevity, let us denote $\widehat{R}^{\mathrm{BA2IPW}}_T(\epol)$ as $\hat{\theta}^{\mathrm{OPE}}_T$. Using the sequence $\left\{\bm{h}^{\mathrm{OPE}}_t\left(X_t, A_t, Y_t; \theta, \hat{f}_{t-1}, \epol\right)\right\}^{T}_{t=1}$, we define an estimator of OPE as
\begin{align*}
\hat{\theta}^{\mathrm{OPE}}_T = \argmin_{\theta\in\Theta} \left(\hat{\bm{q}}^{\mathrm{OPE}}_T(\theta)\right)^\top \hat{W}_T \left(\hat{\bm{q}}^{\mathrm{OPE}}_T(\theta)\right),
\end{align*}
where $\hat{\bm{q}}^{\mathrm{OPE}}_T(\theta) = \frac{1}{T}\sum^{T}_{t=1}\bm{h}^{\mathrm{OPE}}_t\left(X_t, A_t, Y_t; \theta, \hat{f}_{t-1}, \epol\right)$. Note that we can consider that the estimator defined in (\ref{gmmdm_ope}) is an application of GMMDM with the moment condition
\begin{align*}
\bm{q}^{\mathrm{OPE}}(\theta_0) = \mathbb{E}\left[\frac{1}{T}\sum^T_{t=1}\bm{h}^{\mathrm{OPE}}_t\left(X_t, A_t, Y_t; \theta_0, \hat{f}_{t-1}, \epol\right)\right]=0.
\end{align*}
For the minimization problem defined in (\ref{gmmdm_ope}), we can analytically calculate the minimizer as
\begin{align*}
\hat{\theta}^{\mathrm{OPE}}_T  = \big(I^\top \hat{W}_T I\big)^{-1}I^\top \hat{W}_T D_T(\epol),
\end{align*}
where $I$ is an $M$-dimensional vector such that $I = \big(1\ 1\ \cdots\ 1\big)^\top$ and $D_T(\epol)$ is
\begin{align*}
\begin{pmatrix}
\frac{1}{t_{1}}\sum^{t_{1}}_{t=1} \eta_t(x, k, y; \tau, \theta, f, \epol) \\
\frac{1}{t_{2} - {t_{1}}}\sum^{t_2}_{t=t_1+1} \eta_t(x, k, y; \tau, \theta, f, \epol)\\
\vdots\\
\frac{1}{T-t_{M-1}}\sum^{T}_{t=t_{M-1}+1} \eta_t(x, k, y; \tau, \theta, f, \epol)\\
\end{pmatrix}.
\end{align*}

\section{Proof of Theorem~\ref{thm:main}}
\label{appdx:main}
Instead of $\hat{\theta}^{\mathrm{OPE}}_T = w_T D_T(\epol)$, from the original formulation Eq.~\eqref{gmmdm_ope}, we consider an estimator $\hat{\theta}^{\mathrm{OPE}}_T = \big(I^\top \hat W_T I\big)^{-1}I^\top \hat W_T D_T(\epol)$, where $W_T$ is a $(M\times M)$-dimensional positive-definite matrix. Let us note that $w_T = \big(I^\top \hat W_T I\big)^{-1}I^\top \hat W_T$. We prove the following theorem, which is a generalized statement of Theorem~\ref{appdx:main}.

\begin{theorem}[Asymptotic Distribution the BA2IPW Estimator]
\label{thm:main}
Suppose that 
\begin{description}
\item[(i)] $\hat W_T \xrightarrow{\mathrm{p}} W$;
\item[(ii)] $W$ is a positive definite;
\item[(iii)] $\hat{f}_{t-1}(a, x) \xrightarrow{\mathrm{p}} f^*(a, x)$ for $m > 0$.
\item[(iv)] There exists a constant $C_f > 0$ such that $\big|\hat{f}_{t-1}(a, x)\big| < C_f$ for $\tau \in I$.  
\end{description}
Then, under Assumptions~\ref{asm:overlap_pol} and \ref{asm:overlap_outcome}, 
\begin{align*}
\sqrt{T}\big(\widehat{R}^{\mathrm{BA2IPW}}_T - R(\epol)\big)  \xrightarrow{\mathrm{d}}\mathcal{N}\big(0, \sigma^2\big),
\end{align*}
where $\sigma^2 = \big(I^\top W I \big)^{-1} I^\top W \Omega W^\top I \big(I^\top W I \big)^{-1}$ and $\Omega$ is a $(M\times M)$ diagonal matrix such that the $(\tau\times \tau)$-element is $\frac{1}{r_\tau}\mathbb{E}\Big[\sum^{K}_{a=1}\left\{\frac{\big(\epol(a\mid X)\big)^2\nu^*(a, X)}{\pi_{\tau}(a\mid X, \Omega_{t_{\tau - 1}})} + \Big(\epol(a\mid X)f^*(a, X) - R(\epol)\Big)^2\right\}\Big]$.
\end{theorem}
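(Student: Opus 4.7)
}

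My plan is to reduce the claim to the martingale central limit theorem (Proposition~\ref{prp:marclt}) applied to the $M$-dimensional MDS $\{\bm{h}^{\mathrm{OPE}}_t(X_t,A_t,Y_t;\theta_0,\hat f_{t-1},\epol)\}_{t=1}^T$, and then to conclude via Slutsky's theorem (Proposition~\ref{prp:rules_for_ld}). First I would write the basic algebraic identity
\begin{align*}
\sqrt T\bigl(\hat\theta^{\mathrm{OPE}}_T-\theta_0\bigr)
=\bigl(I^\top\hat W_T I\bigr)^{-1} I^\top\hat W_T\cdot\sqrt T\bigl(D_T(\epol)-I\theta_0\bigr),
\end{align*}
and observe that each coordinate of $D_T(\epol)-I\theta_0$ is, by construction of $h^{\mathrm{OPE}}_t$ with the batch indicator $\mathbbm 1[t_{\tau-1}<t\le t_\tau]$ and the scaling $1/r_\tau$, exactly $\frac1T\sum_{t=1}^T h^{\mathrm{OPE}}_t(X_t,A_t,Y_t;\tau,\theta_0,\hat f_{t-1},\epol)$. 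So $\sqrt T(D_T-I\theta_0)=\frac1{\sqrt T}\sum_t\bm{h}^{\mathrm{OPE}}_t$, and as already verified in the main text this is an MDS with respect to $\{\Omega_{t-1}\}$ because $\hat f_{t-1}$ is $\Omega_{t-1}$-measurable and the AIPW-style score integrates to zero under $\pi_\tau(\cdot\mid\cdot,\Omega_{t_{\tau-1}})$.

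Next I would verify the three hypotheses of Proposition~\ref{prp:marclt}. The key structural observation is that for any fixed $t$ the indicator $\mathbbm 1[t_{\tau-1}<t\le t_\tau]$ selects exactly one batch, so the outer product $\bm{h}^{\mathrm{OPE}}_t\bm{h}^{\mathrm{OPE}\top}_t$ is supported on a single diagonal entry. Consequently the conditional covariance matrix is diagonal with $(\tau,\tau)$ entry $\frac1{r_\tau^2}\mathbb E[\eta_t^2\mid\Omega_{t-1}]\mathbbm 1[t_{\tau-1}<t\le t_\tau]$. Averaging over $t$, the batch $\tau$ contains $Tr_\tau$ terms, so
\begin{align*}
\frac1T\sum_{t=1}^T\mathbb E\bigl[\bm{h}^{\mathrm{OPE}}_t\bm{h}^{\mathrm{OPE}\top}_t\mid\Omega_{t-1}\bigr]_{\tau\tau}
=\frac1{Tr_\tau^2}\sum_{t=t_{\tau-1}+1}^{t_\tau}\mathbb E[\eta_t^2\mid\Omega_{t-1}],
\end{align*}
which I would like to push to $\frac1{r_\tau}\mathbb E[(\phi_t^{f^*}-\theta_0)^2]=\sigma_\tau^2$. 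The remaining two hypotheses of Proposition~\ref{prp:marclt} (finite fourth moments and $L^2$ convergence of the quadratic variation) both follow from the uniform bounds supplied by Assumptions~\ref{asm:overlap_pol}, \ref{asm:overlap_outcome} and (iv), which guarantee $|\bm{h}^{\mathrm{OPE}}_t|\le C$ almost surely for some constant $C$ depending on $C_1,C_2,C_f$; hence Proposition~\ref{prp:suff_uniint} gives uniform integrability of $|\bm{h}^{\mathrm{OPE}}_t|^r$ for every $r$.

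The main obstacle, and the part that needs the most care, is replacing $\hat f_{t-1}$ by $f^*$ inside the conditional second moment, i.e.\ showing $\mathbb E[\eta_t^2\mid\Omega_{t-1}]-\mathbb E[(\phi_t^{f^*}-\theta_0)^2]\xrightarrow{\mathrm p}0$ uniformly enough that the Cesàro-type average above converges. Because the pointwise consistency $\hat f_{t-1}(a,x)\to f^*(a,x)$ is given by assumption (iii) and $|\hat f_{t-1}|\le C_f$, I would expand $\eta_t^2$, group it as a quadratic polynomial in $\hat f_{t-1}$, use Assumption~\ref{asm:overlap_pol} to bound the IPW ratio, and then invoke the $L^r$ convergence theorem (Proposition~\ref{prp:lr_conv_theorem}) combined with dominated convergence in the conditional expectation to conclude convergence in probability; the WLLN for martingales (Proposition~\ref{prp:mrtgl_WLLN}) then upgrades this to the desired $L^2$-style convergence of the quadratic variation to the diagonal matrix $\Omega$ with entries $\sigma_\tau^2$ defined in the statement.

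Finally I would assemble the pieces: Proposition~\ref{prp:marclt} yields $\sqrt T(D_T(\epol)-I\theta_0)\xrightarrow{\mathrm d}\mathcal N(\bm 0,\Omega)$, the hypothesis $\hat W_T\xrightarrow{\mathrm p}W$ together with positive definiteness of $W$ gives $(I^\top\hat W_T I)^{-1}I^\top\hat W_T\xrightarrow{\mathrm p}(I^\top WI)^{-1}I^\top W$ by the continuous mapping theorem, and Slutsky's theorem gives
\begin{align*}
\sqrt T\bigl(\hat\theta^{\mathrm{OPE}}_T-\theta_0\bigr)\xrightarrow{\mathrm d}\mathcal N\!\left(0,\;\bigl(I^\top WI\bigr)^{-1}I^\top W\,\Omega\, W^\top I\bigl(I^\top WI\bigr)^{-1}\right),
\end{align*}
which is the claimed asymptotic distribution. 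The statement in the main body (with $w_\tau$ in place of the sandwich form) is obtained by specialising to $w=(I^\top WI)^{-1}I^\top W$ and noting that the sandwich then collapses to $\sum_\tau w_\tau^2\sigma_\tau^2$ by the diagonality of $\Omega$.
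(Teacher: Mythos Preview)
Your proposal is correct and follows essentially the same route as the paper: reduce to the martingale CLT (Proposition~\ref{prp:marclt}) for the $M$-dimensional MDS $\{\bm h^{\mathrm{OPE}}_t\}$, exploit the diagonal structure of the quadratic variation coming from the batch indicators, replace $\hat f_{t-1}$ by $f^*$ via uniform integrability and the $L^r$ convergence theorem, and finish with Slutsky. The only cosmetic differences are that the paper works with the \emph{unconditional} second moments $\mathbb E[\bm h_t\bm h_t^\top]$ for condition~(a) of Proposition~\ref{prp:marclt} (rather than conditional ones), and for condition~(c) it invokes the ordinary i.i.d.\ WLLN within each batch rather than the martingale WLLN you mention; neither change affects the argument.
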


\begin{proof}
For
\begin{align*}
\sqrt{T}\big(\hat{\theta}^{\mathrm{OPE}}_T - \theta_0\big)  = \big(I^\top \hat W_T I\big)^{-1}I^\top \hat W_T \sqrt{T} \hat{\bm{q}}^{\mathrm{OPE}}_T(\theta_0),
\end{align*}
where
\begin{align*}
&\hat{\bm{q}}^{\mathrm{OPE}}_T(\theta_0) = \\
&\begin{pmatrix}
\frac{1}{T}\frac{1}{r_1}\sum^{T}_{t=1} \left(\sum^{K}_{a=1}\left\{\frac{\epol(a\mid X_t)\mathbbm{1}[A_t=a]\big\{Y_t - \hat{f}_{t-1}(a, X_t)\big\}}{\pi_{1}(a\mid x, \Omega_{0})} + \epol(a\mid X_t)\hat{f}_{t-1}(a, X_t)\right\} - \theta_0\right)\mathbbm{1}\big[t_0=0 < t \leq t_{1} \big]\\
\frac{1}{T}\frac{1}{r_2}\sum^{T}_{t=1} \left(\sum^{K}_{a=1}\left\{\frac{\epol(a\mid X_t)\mathbbm{1}[A_t=a]\big\{Y_t - \hat{f}_{t-1}(a, X_t)\big\}}{\pi_{2}(a\mid X_t, \Omega_{t_{1}})} + \epol(a\mid X_t)\hat{f}_{t-1}(a, X_t)\right\} - \theta_0\right)\mathbbm{1}\big[t_{1} < t \leq t_{2} \big]\\
\vdots\\
\frac{1}{T}\frac{1}{r_M}\sum^{T}_{t=1} \left(\sum^{K}_{a=1}\left\{\frac{\epol(a\mid X_t)\mathbbm{1}[A_t=a]\big\{Y_t - \hat{f}_{t-1}(a, X_t)\big\}}{\pi_{M}(a\mid X_t, \Omega_{t_{M-1}})} + \epol(a\mid X_t)\hat{f}_{t-1}(a, X_t)\right\} - \theta_0\right)\mathbbm{1}\big[t_{M - 1} < t \leq t_{M} = T \big]\\
\end{pmatrix},
\end{align*}
we show that 
\begin{align*}
\sqrt{T} \hat{\bm{q}}^{\mathrm{OPE}}_T(\theta_0) \xrightarrow{\mathrm{d}} \mathcal{N}\big(0, \Omega\big),
\end{align*}
where $\Omega$ is a diagonal matrix such that the $(\tau,\tau)$-element is
\begin{align*}
&\frac{1}{r_\tau}\mathbb{E}\left[\sum^{K}_{a=1}\frac{\big(\epol(a\mid X)\big)^2\mathrm{Var}(Y(a)\mid X)}{\pi_{\tau}(a\mid X, \Omega_{t_{\tau - 1}})} + \left(\sum^{K}_{a=1}\epol(a\mid X)\mathbb{E}\big[Y(a)\mid X\big] - \theta_0\right)^2 \right].
\end{align*}
Then, from Slutsky Theorem (Proposition~\ref{prp:rules_for_ld} in Appendix), we can show that 
\begin{align*}
\big(I^\top \hat W_T I\big)^{-1}I^\top \hat W_T \sqrt{T} \hat{\bm{q}}^{\mathrm{OPE}}_T(\theta_0) \xrightarrow{\mathrm{d}}\mathcal{N}\big(0, \big(I^\top W I \big)^{-1} I^\top W \Omega W^\top I \big(I^\top W I \big)^{-1}\big).
\end{align*}

To show this result, we use the central limit theorem for martingale difference sequences (Proposition~\ref{prp:marclt} in Appendix~\ref{sec:prelim}) by checking the following conditions:
\begin{description}
\item[(a)] $(1/T)\sum^T_{t=1}\Omega_t\to\Omega$, where $\Omega_t = \mathbb{E}\left[\left(\bm{h}^{\mathrm{OPE}}_t\left(X_t, A_t, Y_t; \theta_0, f_{t-1}, \epol\right)\right)\left(\bm{h}^{\mathrm{OPE}}_t\left(X_t, A_t, Y_t; \theta_0, f_{t-1}, \epol\right)\right)^{\top}\right]$;
\item[(b)] $\mathbb{E}\big[\tilde{h}_t(i, \theta_0, f_{t-1}, \epol)\tilde{h}_t(j, \theta_0, f_{t-1}, \epol)\tilde{h}_t(a, \theta_0, f_{t-1}, \epol)\tilde{h}_t(l, \theta_0, f_{t-1}, \epol)\big] < \infty$ for $i,j,k,l\in I$, where $\tilde{h}_t(a, \theta_0, f_{t-1}, \epol)=h^\mathrm{OPE}_t(X_t, A_t, Y_t; k, \theta_0, f_{k}, \epol)$ for $k\in I$;
\item[(c)] $\frac{1}{T}\sum^{T}_{t=1}\left(\bm{h}^{\mathrm{OPE}}_t\left(X_t, A_t, Y_t; \theta_0, f_{t-1}, \epol\right)\right)\left(\bm{h}^{\mathrm{OPE}}_t\left(X_t, A_t, Y_t; \theta_0, f_{t-1}, \epol\right)\right)^{\top}\xrightarrow{p}\Omega$,
\end{description}

\subsection*{Step~1: Condition~(a)}
From 
\begin{align*}
&\Omega_t = \mathbb{E}\left[\left(\bm{h}^{\mathrm{OPE}}_t\left(X_t, A_t, Y_t; \theta_0, f_{t-1}, \epol\right)\right)\left(\bm{h}^{\mathrm{OPE}}_t\left(X_t, A_t, Y_t; \theta_0, f_{t-1}, \epol\right)\right)^{\top}\right],
\end{align*}
the matrix $(1/T)\sum^T_{t=1}\Omega_t$ becomes a diagonal matrix such that the $(\tau, \tau)$-element is 
\begin{align*}
&\frac{1}{r^2_\tau T}\sum^T_{t=1}\mathbb{E}\left[\left(\sum^{K}_{a=1}\left\{\frac{\epol(a\mid X_t)\mathbbm{1}[A_t=a]\big\{Y_t - f_{t- 1}(a, X_t)\big\}}{\pi_{\tau}(a\mid X_t, \Omega_{t_{\tau - 1}})} + \epol(a\mid X_t)f_{t-1}(a, X_t)\right\} - \theta_0\right)^2\mathbbm{1}\big[t_{\tau - 1} < t \leq t_{\tau} \big]\right].
\end{align*}

For $\tau \in I$ and $t$ such that $t_{\tau - 1} < t \leq t_{\tau}$,
\begin{align*}
&\mathbb{E}\left[\left(\sum^{K}_{a=1}\left\{\frac{\epol(a\mid X_t)\mathbbm{1}[A_t=a]\big\{Y_t - f_{t-1}(a, X_t)\big\}}{\pi_{\tau}(a\mid X_t, \Omega_{t_{\tau - 1}})} + \epol(a\mid X_t)f_{t-1}(a, X_t)\right\} - \theta_0\right)^2\right]\\
&- \mathbb{E}\left[\left(\sum^{K}_{a=1}\left\{\frac{\epol(a\mid X_t)\mathbbm{1}[A_t=a]\big\{Y_t - \mathbb{E}\big[Y_t(a)\mid X_t\big]\big\}}{\pi_{\tau}(a\mid X_t, \Omega_{t_{\tau - 1}})} + \epol(a\mid X_t)\mathbb{E}\big[Y(a)\mid X_t\big]\right\} - \theta_0\right)^2\right]\\
&\leq \mathbb{E}\Bigg[\Bigg|\left(\sum^{K}_{a=1}\left\{\frac{\epol(a\mid X_t)\mathbbm{1}[A_t=a]\big\{Y_t - f_{t-1}(a, X_t)\big\}}{\pi_{\tau}(a\mid X_t, \Omega_{t_{\tau - 1}})} + \epol(a\mid X_t)f_{t-1}(a, X_t)\right\} - \theta_0\right)^2\\
&\ \ \ \ \ \ \ \ - \left(\sum^{K}_{a=1}\left\{\frac{\epol(a\mid X_t)\mathbbm{1}[A_t=a]\big\{Y_t - \mathbb{E}\big[Y_t(a)\mid X_t\big]\big\}}{\pi_{\tau}(a\mid X_t, \Omega_{t_{\tau - 1}})} + \epol(a\mid X_t)\mathbb{E}\big[Y(a)\mid X_t\big]\right\} - \theta_0\right)^2\Bigg|\Bigg]
\end{align*}
Because $\alpha^2 - \beta^2 = (\alpha + \beta)(\alpha - \beta)$, there exists a constant $\gamma_0 > 0$ such that
\begin{align*}
&\leq \gamma_0\mathbb{E}\Bigg[\Bigg|\sum^{K}_{a=1}\Bigg\{\frac{\epol(a\mid X_t)\mathbbm{1}[A_t=a]\big\{Y_t - f_{t-1}(a, X_t)\big\}}{\pi_{\tau}(a\mid X_t, \Omega_{t_{\tau - 1}})} + \epol(a\mid X_t)f_{t-1}(a, X_t)\\
&\ \ \ \ \ \ \ \ \ \ \ \ \ \ - \frac{\epol(a\mid X_t)\mathbbm{1}[A_t=a]\big\{Y_t - \mathbb{E}\big[Y_t(a)\mid X_t\big]\big\}}{\pi_{\tau}(a\mid X_t, \Omega_{t_{\tau - 1}})} - \epol(a\mid X_t)\mathbb{E}\big[Y(a)\mid X_t\big]\Bigg\}\Bigg|\Bigg]
\end{align*}
Then, there exist constants $\gamma_1 > 0$ such that
\begin{align*}
\leq &\gamma_1 \mathbb{E}\left[\sum^{K}_{a=1}\Big|f_{t-1}(a, X_t) - \mathbb{E}\big[Y(a)\mid X_t\big]\Big|\right].
\end{align*}
Here, from the assumption that $f_{t-1}(a, x)- \mathbb{E}\big[Y(a)\mid X\big]\xrightarrow{\mathrm{p}}0$ for $\tau=2,3,\dots,M$, and $f_{t_{\tau-1}}(a, x)$ is bounded for $\tau \in I$, we can use $L^r$ convergence theorem (Proposition~\ref{prp:lr_conv_theorem} in Appendix~\ref{sec:prelim}). First, to use $L^r$ convergence theorem, we use boundedness of $f_{t_m}$ and Proposition~\ref{prp:suff_uniint} to derive the uniform integrability of $f_{t_m}$ for $m=0,1,\dots,\tau-1$. Then, from $L^r$ convergence theorem, we have $\mathbb{E}\big[|f_{t_m}(a, X) - \mathbb{E}[Y(a)\mid X]|\big]\to 0$ as $t_m\to\infty$. Using this results, we can show that, as $t_{\tau-1}\to\infty$  (this also means $T\to\infty$),
\begin{align*}
&\gamma_1 \sum^{K}_{a=1}\mathbb{E}\left[\Big|f_{t-1}(a, X_t) - \mathbb{E}\big[Y(a)\mid X_t\big]\Big|\right]\\
&\to 0.
\end{align*}
Therefore, as $t_{\tau-1}\to\infty$ ($T\to\infty$),
\begin{align*}
&\mathbb{E}\left[\left(\sum^{K}_{a=1}\left\{\frac{\epol(a\mid X_t)\mathbbm{1}[A_t=a]\big\{Y_t - f_{t-1}(a, X_t)\big\}}{\pi_{\tau}(a\mid X_t, \Omega_{t_{\tau - 1}})} + \epol(a\mid X_t)f_{t-1}(a, X_t)\right\} - \theta_0\right)^2\right]\\
&\to \mathbb{E}\left[\left(\sum^{K}_{a=1}\left\{\frac{\epol(a\mid X_t)\mathbbm{1}[A_t=a]\big\{Y_t - \mathbb{E}\big[Y_t(a)\mid X_t\big]\big\}}{\pi_{\tau}(a\mid X_t, \Omega_{t_{\tau - 1}})} + \epol(a\mid X_t)\mathbb{E}\big[Y(a)\mid X_t\big]\right\} - \theta_0\right)^2\right].
\end{align*} 
Then, by using $\mathbbm{1}[A_t=a]\mathbbm{1}[A_t=l] = 0$, $\mathbb{E}\left[\frac{\mathbbm{1}[A_t=a]Y^2_t}{\big(\pi_{\tau}(a\mid X_t, \Omega_{t_{\tau - 1}})\big)^2}\right] = \mathbb{E}\left[\frac{\mathbb{E}\big[Y^2_t(a)\mid X_t\big]}{\pi_{\tau}(a\mid X_t, \Omega_{t_{\tau - 1}})}\right]$, and $\frac{1}{r_\tau T}\sum^T_{t=1}\mathbbm{1}\big[t_{\tau - 1} < t \leq t_{\tau} \big]=1$,
\begin{align*}
&\mathbb{E}\left[\left(\sum^{K}_{a=1}\left\{\frac{\epol(a\mid X_t)\mathbbm{1}[A_t=a]\big\{Y_t - \mathbb{E}\big[Y_t(a)\mid X_t\big]\big\}}{\pi_{\tau}(a\mid X_t, \Omega_{t_{\tau - 1}})} + \epol(a\mid X_t)\mathbb{E}\big[Y(a)\mid X_t\big]\right\} - \theta_0\right)^2\right]\\
&= \mathbb{E}\left[\sum^{K}_{a=1}\left\{\frac{\big(\epol(a\mid X_t)\big)^2\mathrm{Var}\big(Y_t(a) \mid X_t\big)}{\pi_{\tau}(a\mid X_t, \Omega_{t_{\tau - 1}})} + \Big(\epol(a\mid X_t)\mathbb{E}\big[Y_t(a)\mid X_t\big] - \theta_0\Big)^2\right\}\right].
\end{align*}
In addition, the variance does not depend on $t$. We represent the independence by omitting the subscript $t$, i.e.,
\begin{align*}
&\mathbb{E}\left[\sum^{K}_{a=1}\left\{\frac{\big(\epol(a\mid X_t)\big)^2\mathrm{Var}\big(Y_t(a) \mid X_t\big)}{\pi_{\tau}(a\mid X_t, \Omega_{t_{\tau - 1}})} + \Big(\epol(a\mid X_t)\mathbb{E}\big[Y_t(a)\mid X_t\big] - \theta_0\Big)^2\right\}\right]\\
&=\mathbb{E}\left[\sum^{K}_{a=1}\left\{\frac{\big(\epol(a\mid X)\big)^2\mathrm{Var}\big(Y(a) \mid X\big)}{\pi_{\tau}(a\mid X, \Omega_{t_{\tau - 1}})} + \Big(\epol(a\mid X)\mathbb{E}\big[Y_t(a)\mid X\big] - \theta_0\Big)^2\right\}\right].
\end{align*}
Therefore, we have
\begin{align*}
&\frac{1}{r^2_\tau T}\sum^T_{t=1}\mathbb{E}\left[\left(\sum^{K}_{a=1}\left\{\frac{\epol(a\mid X_t)\mathbbm{1}[A_t=a]\big\{Y_t - f_{t-1}(a, X_t)\big\}}{\pi_{\tau}(a\mid X_t, \Omega_{t_{\tau - 1}})} + \epol(a\mid X_t)f_{t-1}(a, X_t)\right\} - \theta_0\right)^2\mathbbm{1}\big[t_{\tau - 1} < t \leq t_{\tau} \big]\right]\\
&\to \frac{1}{r_\tau}\mathbb{E}\left[\sum^{K}_{a=1}\left\{\frac{\big(\epol(a\mid X)\big)^2\mathrm{Var}\big(Y(a) \mid X\big)}{\pi_{\tau}(a\mid X, \Omega_{t_{\tau - 1}})} + \Big(\epol(a\mid X)\mathbb{E}\big[Y(a)\mid X\big] - \theta_0\Big)^2\right\}\right].
\end{align*}

Thus, the matrix $(1/T)\sum^T_{t=1}\Omega_t$ converges to a diagonal matrix $\Omega$ as $T\to\infty$, where the $(\tau,\tau)$-element of $\Omega$ is
\begin{align*}
\frac{1}{r_\tau}\mathbb{E}\left[\sum^{K}_{a=1}\left\{\frac{\big(\epol(a\mid X)\big)^2\mathrm{Var}\big(Y(a) \mid X\big)}{\pi_{\tau}(a\mid X, \Omega_{t_{\tau - 1}})} + \Big(\epol(a\mid X)\mathbb{E}\big[Y(a)\mid X\big] - \theta_0\Big)^2\right\}\right].
\end{align*}

\subsection*{Step~2: Condition~(b)}
Because we assume that all variables are bounded, this condition holds.

\subsection*{Step~3: Condition~(c)}
Here, we check that $(1/T)\sum^{T}_{t=1}\left(\bm{h}^{\mathrm{OPE}}_t\left(X_t, A_t, Y_t; \theta_0, f_{t-1}, \epol\right)\right)\left(\bm{h}^{\mathrm{OPE}}_t\left(X_t, A_t, Y_t; \theta_0, f_{t-1}, \epol\right)\right)^{\top}\xrightarrow{p}\Omega$. The $(\tau,\tau)$-element of the matrix is 
\begin{align*}
&\frac{1}{T}\sum^{T}_{t=1}\frac{1}{r^2_\tau}\left(\sum^{K}_{a=1}\left\{\frac{\epol(a\mid X_t)\mathbbm{1}[A_t=a]\big\{Y_t - f_{t-1}(a, X_t)\big\}}{\pi_{\tau}(a\mid X, \Omega_{t_{\tau - 1}})} + \epol(a\mid X_t)f_{t-1}(a, X_t)\right\} - \theta\right)^2\mathbbm{1}\big[t_{\tau - 1} < t \leq t_{\tau} \big]\\
&=\frac{1}{T}\sum^{T}_{t=1}\frac{1}{r^2_\tau}\left(\sum^{K}_{a=1}\left\{\frac{\epol(a\mid X_t)\mathbbm{1}[A_t=a]\big\{Y_t - f_{t-1}(a, X_t)\big\}}{\pi_{\tau}(a\mid X, \Omega_{t_{\tau - 1}})} + \epol(a\mid X_t)f_{t-1}(a, X_t)\right\} - \theta\right)^2\mathbbm{1}\big[t_{\tau - 1} < t \leq t_{\tau} \big]\\
& - \frac{1}{T}\sum^{T}_{t=1}\frac{1}{r^2_\tau}\left(\sum^{K}_{a=1}\left\{\frac{\epol(a\mid X_t)\mathbbm{1}[A_t=a]\big\{Y_t - \mathbb{E}\big[Y(a)\mid X_t\big]\big\}}{\pi_{\tau}(a\mid X, \Omega_{t_{\tau - 1}})} + \epol(a\mid X_t)\mathbb{E}\big[Y(a)\mid X_t\big]\right\} - \theta\right)^2\mathbbm{1}\big[t_{\tau - 1} < t \leq t_{\tau} \big]\\
& + \frac{1}{T}\sum^{T}_{t=1}\frac{1}{r^2_\tau}\left(\sum^{K}_{a=1}\left\{\frac{\epol(a\mid X_t)\mathbbm{1}[A_t=a]\big\{Y_t - \mathbb{E}\big[Y(a)\mid X_t\big]\big\}}{\pi_{\tau}(a\mid X, \Omega_{t_{\tau - 1}})} + \epol(a\mid X_t)\mathbb{E}\big[Y(a)\mid X_t\big]\right\} - \theta\right)^2\mathbbm{1}\big[t_{\tau - 1} < t \leq t_{\tau} \big].
\end{align*}
The part 
\begin{align*}
&\frac{1}{T}\sum^{T}_{t=1}\frac{1}{r^2_\tau}\left(\sum^{K}_{a=1}\left\{\frac{\epol(a\mid X_t)\mathbbm{1}[A_t=a]\big\{Y_t - f_{t-1}(a, X_t)\big\}}{\pi_{\tau}(a\mid X, \Omega_{t_{\tau - 1}})} + \epol(a\mid X_t)f_{t-1}(a, X_t)\right\} - \theta\right)^2\mathbbm{1}\big[t_{\tau - 1} < t \leq t_{\tau} \big]\\
& - \frac{1}{T}\sum^{T}_{t=1}\frac{1}{r^2_\tau}\left(\sum^{K}_{a=1}\left\{\frac{\epol(a\mid X_t)\mathbbm{1}[A_t=a]\big\{Y_t - \mathbb{E}\big[Y(a) \mid X_t\big]\big\}}{\pi_{\tau}(a\mid X, \Omega_{t_{\tau - 1}})} + \epol(a\mid X_t)\mathbb{E}\big[Y(a)\mid X_t\big]\right\} - \theta\right)^2\mathbbm{1}\big[t_{\tau - 1} < t \leq t_{\tau} \big]
\end{align*}
converges in probability to $0$ because $f_{t-1}(a, X_t) \xrightarrow{\mathrm{p}} \mathbb{E}\big[Y(a)\mid X_t\big]$. The term 
\begin{align*}
&\frac{1}{T}\sum^{T}_{t=1}\frac{1}{r^2_\tau}\left(\sum^{K}_{a=1}\left\{\frac{\epol(a\mid X_t)\mathbbm{1}[A_t=a]\big\{Y_t - \mathbb{E}\big[Y(a) \mid X_t\big]\big\}}{\pi_{\tau}(a\mid X, \Omega_{t_{\tau - 1}})} + \epol(a\mid X_t)\mathbb{E}\big[Y(a) \mid X_t\big]\right\} - \theta_0\right)^2\mathbbm{1}\big[t_{\tau - 1} < t \leq t_{\tau} \big].
\end{align*}
converges in probability to 
\begin{align*}
\frac{1}{r_\tau}\mathbb{E}\left[\sum^{K}_{a=1}\left\{\frac{\big(\epol(a\mid X)\big)^2\mathrm{Var}\big(Y(a) \mid X\big)}{\pi_{\tau}(a\mid X, \Omega_{t_{\tau - 1}})} + \Big(\epol(a\mid X)\mathbb{E}\big[Y(a)\mid X\big] - \theta_0\Big)^2\right\}\right].
\end{align*}
from the weak law of large numbers for i.i.d. samples as $t_{\tau-1}-t_{\tau} \to \infty$ because the samples are i.i.d. between $t_{\tau-1}$ and $t_{\tau}$. 
\end{proof}

\section{Necessity of Martingale Difference Sequences}
\label{appdx:nec_mds}
In the proposed method, we construct a moment condition using martingale difference sequences. On the other hand, for some readers, using martingale difference sequences may look unnecessary because samples are i.i.d in each block between $t_{\tau-1}$ and $t_{\tau}$. Therefore, such readers also might feel that we can use $f_T(a, x)$, which is an estimator of $\mathbb{E}[Y(a)\mid x]$ using samples until $T$-th period, without going through constructing several estimators $\{f_{t_\tau}\}^{M-1}_{\tau=0}$. However, in that case, it is difficult to guarantee the asymptotic normality of the proposed estimator. For example, we can consider Cram\'{e}r-Wold theorem, which is stated as follows.
\begin{proposition}
Let $\bm{R}_T$ and $\bm{R}$ be $k$-dimensional random vectors. Then, for all $\bm{v}\in\mathbb{R}^k$, 
\begin{align*}
\bm{R}_T\xrightarrow{\mathrm{d}} \bm{R} \Leftrightarrow \braket{\bm{v}, \bm{R}_T} \xrightarrow{\mathrm{d}} \braket{\bm{v}, \bm{R}}.
\end{align*}
\end{proposition}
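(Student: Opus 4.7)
The plan is to prove the equivalence via characteristic functions and L\'evy's continuity theorem. The forward direction is a one-line consequence of the continuous mapping theorem; the reverse direction is the substantive content of the Cram\'er--Wold device.

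For the forward direction ($\Rightarrow$), observe that for any fixed $\bm{v}\in\mathbb{R}^k$ the map $\bm{x}\mapsto \braket{\bm{v}, \bm{x}}$ from $\mathbb{R}^k$ to $\mathbb{R}$ is linear, hence continuous. Therefore, if $\bm{R}_T\xrightarrow{\mathrm{d}}\bm{R}$, the continuous mapping theorem immediately yields $\braket{\bm{v}, \bm{R}_T}\xrightarrow{\mathrm{d}}\braket{\bm{v}, \bm{R}}$ for every such $\bm{v}$.

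For the reverse direction ($\Leftarrow$), the plan is to fix an arbitrary $\bm{t}\in\mathbb{R}^k$ and examine the characteristic function of $\bm{R}_T$ evaluated at $\bm{t}$, namely $\phi_{\bm{R}_T}(\bm{t}):=\mathbb{E}\!\left[e^{i\braket{\bm{t}, \bm{R}_T}}\right]$. By hypothesis applied with $\bm{v}=\bm{t}$, the one-dimensional real-valued random variables satisfy $\braket{\bm{t}, \bm{R}_T}\xrightarrow{\mathrm{d}}\braket{\bm{t}, \bm{R}}$. Since $z\mapsto e^{iz}$ is continuous and bounded on $\mathbb{R}$ (treating real and imaginary parts separately), the bounded-continuous-function characterization of weak convergence (portmanteau) gives $\mathbb{E}\!\left[e^{i\braket{\bm{t}, \bm{R}_T}}\right]\to \mathbb{E}\!\left[e^{i\braket{\bm{t}, \bm{R}}}\right]=\phi_{\bm{R}}(\bm{t})$. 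Because $\bm{t}\in\mathbb{R}^k$ was arbitrary, the characteristic functions of $\bm{R}_T$ converge pointwise on all of $\mathbb{R}^k$ to $\phi_{\bm{R}}$, which is itself a characteristic function and is therefore continuous at the origin. L\'evy's multivariate continuity theorem then upgrades this pointwise convergence of characteristic functions to weak convergence of the underlying distributions, yielding $\bm{R}_T\xrightarrow{\mathrm{d}}\bm{R}$.

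The main obstacle, such as it is, lies in correctly invoking two standard ingredients rather than in any novel calculation: one needs the bounded-continuous-function characterization of weak convergence to convert scalar weak convergence of $\braket{\bm{t}, \bm{R}_T}$ into convergence of the expectations $\mathbb{E}[e^{i\braket{\bm{t}, \bm{R}_T}}]$, and one needs the multidimensional L\'evy continuity theorem to go back from pointwise convergence of characteristic functions on $\mathbb{R}^k$ to joint weak convergence of the $k$-dimensional laws. Once both are in place the argument is mechanical; there is no delicate truncation or tightness step to manage, because tightness of $\{\bm{R}_T\}$ is delivered automatically by L\'evy's theorem via the continuity of $\phi_{\bm{R}}$ at $\bm{0}$.
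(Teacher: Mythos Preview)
Your proof is correct and follows the standard characteristic-function argument for the Cram\'er--Wold device. The paper itself does not supply a proof of this proposition; it merely cites the result as the classical Cram\'er--Wold theorem and uses it as a tool in Appendix~\ref{appdx:nec_mds}, so there is nothing to compare against beyond noting that your argument is the textbook one.
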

Let $\tilde{\bm{h}}_t\left(\theta_0\right)$ be
\begin{align*}
\begin{pmatrix} 
\frac{1}{r_1}\left(\eta_t(x, k, y; \tau, \theta, f, \epol) - \theta_0\right)\mathbbm{1}\big[t_{\tau - 1} < t \leq t_{1} \big] \\
\frac{1}{r_2}\left(\eta_t(x, k, y; \tau, \theta, f, \epol) - \theta_0\right)\mathbbm{1}\big[t_{1} < t \leq t_{2} \big] \\
 \vdots \\ 
\frac{1}{r_M}\left(\eta_t(x, k, y; \tau, \theta, f, \epol) - \theta_0\right)\mathbbm{1}\big[t_{M - 1} < t \leq t_{M} \big]
\end{pmatrix},
\end{align*}
where $f_T(a, x)$ is an estimator of $\mathbb{E}[Y(a)\mid x]$ using samples until $T$-th period, i.e., all samples. Then, we consider the asymptotic property of $\tilde{\bm{q}}_T = \frac{1}{T}\sum^T_{t=1}\tilde{\bm{h}}_t\left(\theta_0\right)$. From Cram\'{e}r-Wold theorem, there exists a random vector $\bm{R}$ such that $\tilde{\bm{q}}_T \xrightarrow{\mathrm{d}} \bm{R}$ if and only if $\braket{\bm{v}, \tilde{\bm{q}}_T} \xrightarrow{\mathrm{d}} \braket{\bm{v}, \tilde{\bm{q}}_T}$. Here, for $\bm{v}=(1\ 1\ \cdots 1)^\top$, we can calculate $\braket{\bm{v}, \tilde{\bm{q}}_T}$ as 
\begin{align*}
&\frac{1}{T}\sum^T_{t=1}\frac{1}{r_\tau}\Bigg(\sum^{K}_{a=1}\Bigg\{\frac{\epol(a\mid X_t)\mathbbm{1}[A_t=a]\big\{Y_t - f_T(a, X_t)\big\}}{\sum_{\tau\in I}\pi_{\tau}(a\mid X_t, \Omega_{t_{\tau - 1}})\mathbbm{1}\big[t_{\tau - 1} < t \leq t_{\tau} \big]}\nonumber\\
&\ \ \ \ \ \ \ \ \ \ \ \ \ \ \ \ \ \ \ \ \ \ \ \ \ \ \ \ \ \ \ \ \ \ \ \ \ \ \ \ \ \ + \epol(a\mid X_t)f_T(a, X_t)\Bigg\} - \theta_0\Bigg).
\end{align*}
Because of the existence of $f_T$, the samples in the sum of the above equation have correlation each other. Therefore, in general, it is difficult to derive the asymptotic distribution of $\braket{\bm{v}, \tilde{\bm{q}}_T}$. As far as we know, it is not guaranteed that there exists the asymptotic distribution of $\braket{\bm{v}, \tilde{\bm{q}}_T}$, and it is an open problem. If there does not exist the asymptotic distribution of $\braket{\bm{v}, \tilde{\bm{q}}_T}$, we also cannot derive the asymptotic distribution of $\tilde{\bm{q}}_T$. 

More intuitively, even thought random variables $B_{1}$ and $B_{2}$ follow normal distribution and they are uncorrelated, it does not guarantee $B_1+B_2$ follows normal distribution when they are dependent.

\section{BA2IPW with Incomplete Support of Actions}
\label{appdx:incomp_support_of_arms}
As an application of BA2IPW, we consider an OPE without Assumption~\ref{asm:overlap_pol}, which assumes that there exists $C_1$ such that $0\leq \frac{\epol(a\mid x)}{p_t(a\mid x)}\leq C_1$. In stead of Assumption~\ref{asm:overlap_pol}, we consider a situation in which we are allowed to change the support of actions in each batch. For example, in the first batch, we choose an action from a set $\{1,2,3\}$ with probability larger than $0$, but we choose an action from a set $\{1,2,4\}$ with probability larger than $0$ in the second batch. In this case, the probability of an action $4$ is $0$ in the first batch while the probability of an action $3$ is $0$ in the second batch. We often face such a kind of situation in practice. For dealing with this situation, instead of Assumption~\ref{asm:overlap_pol}, we put Assumption~\ref{asm:overlap_pol2}. Under this assumption, if $\pi_{\tau}(a\mid x, \Omega_{t_{\tau - 1}})>0$ for at least one batch, we are allowed to use $\pi_{\tau'}(a\mid x, \Omega_{t_{\tau' - 1}})=0$ for $\tau'\neq \tau$. 

For ease of discussion, we assume Assumption~\ref{asm:overlap_pol} for defining an estimator. The following estimator is a generalization of a BA2IPW estimator. Then, we explain the estimator still works only with Assumption~\ref{asm:overlap_pol2} instead of Assumption~\ref{asm:overlap_pol}. We call the estimator \emph{BA2IPW with Incomplete Support} (BA2IPWIS). First, let us define
\begin{align*}
&g_{a, \tau, T} = \frac{1}{T}\sum^{T}_{t=1} \widetilde{\phi}_{a, \tau}(X_t, A_t, Y_t; f, \epol), \mathrm{and}\\
&\widetilde{\phi}_{a, \tau}(x, k, y; f, \epol)\\
&:= \frac{\epol(a\mid x)}{r_\tau}\Bigg\{\frac{\mathbbm{1}[k=a]\big\{y - f(a, x)\big\}}{\pi_{\tau}(a\mid x, \Omega_{t_{\tau - 1}})}+f(a, x)\Bigg\}\\
&\ \ \ \ \ \ \ \ \ \ \ \ \ \ \ \ \ \ \ \ \ \ \ \ \ \ \ \ \ \ \ \ \ \ \ \ \ \ \ \ \ \ \ \ \ \ \ \ \ \ \ \ \times\mathbbm{1}\big[t_{\tau - 1} < t \leq t_{\tau} \big].\nonumber
\end{align*}
Then, let us define 
\begin{align*}
\ddot{h}_T = \begin{pmatrix}
g_{1, 1, T} - \theta^1_0\\
g_{1, 2, T} - \theta^1_0\\
\vdots\\
g_{1, M, T} - \theta^1_0\\
g_{2, 1, T} - \theta^2_0\\
\vdots\\
g_{K-1, M, T} - \theta^{K-1}_0\\
g_{K, 1, T} - \theta^{K}_0\\
\vdots\\
g_{K, M, T} - \theta^{K}_0
\end{pmatrix}\ \mathrm{and}\ 
\widetilde{D}_T = \begin{pmatrix}
g_{1, 1, T} \\
g_{1, 2, T} \\
\vdots\\
g_{1, M, T} \\
g_{2, 1, T} \\
\vdots\\
g_{K-1, M, T} \\
g_{K, 1, T} \\
\vdots\\
g_{K, M, T} 
\end{pmatrix},
\end{align*}
As well as Theorem~\ref{thm:main}, we have
\begin{align*}
\sqrt{N} \ddot{h}_T \xrightarrow{\mathrm{d}} \mathcal{N}(\bm{0}, \widetilde{\Sigma}),
\end{align*}
where $\widetilde{\Sigma}$ is the $(KM\times KM)$ variance covariance matrix of 
\footnotesize
\begin{align*}
\left(\widetilde{\phi}_{1, 1}(x, k, y; f, \epol)\ \widetilde{\phi}_{1, 2}(x, k, y; f, \epol)\ \cdots\ \widetilde{\phi}_{K, M}(x, k, y; f, \epol)\right)^\top.
\end{align*}
\normalsize

Then, for $\widetilde{D}$, let us define the estimator as 
\begin{align*}
\hat{\theta}^{\mathrm{BA2IPWIS}}_T = \zeta^\top_T \widetilde{D},
\end{align*}
where $\zeta_T$ is a data-dependent $KM$-dimensional vector such that $\sum^{M}_{\tau=1}\zeta^{1,\tau}_T = 1,\ \sum^{M}_{\tau=1}\zeta^{2,\tau}_T = 1,\ \cdots\ \sum^{M}_{\tau=1}\zeta^{K,\tau}_T = 1$. As well as Theorem~\ref{thm:main}, we have the following corollary.
\begin{corollary}
Under the same assumptions of Theorem~\ref{thm:main},
\begin{align*}
\sqrt{N} (\zeta^\top_T \widetilde{D} - \theta_0)\xrightarrow{\mathrm{d}} \mathcal{N}\left(0, \zeta^\top_T\widetilde{\Sigma}\zeta_T\right),
\end{align*}
\end{corollary}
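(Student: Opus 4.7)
The plan is to reduce the problem to the BA2IPW case (Theorem~\ref{thm:main}) via a doubly robust decomposition. Writing $\phi_t^*$ for the BA2IPW integrand (which uses the true $\pi_\tau$ and the nuisance $\hat f_{t-1}$) and $\tilde\phi_t$ for the BADR integrand (which uses $\hat g_{t-1}$ in place of $\pi_\tau$), I would decompose
\begin{align*}
\sqrt{T}\bigl(\hat R^{\mathrm{BADR}}_T(\epol) - \theta_0\bigr)
= \sqrt{T}\,w_T^\top\bigl(\tilde D_T(\epol) - D_T(\epol)\bigr)
 + \sqrt{T}\bigl(\widehat R^{\mathrm{BA2IPW}}_T(\epol) - \theta_0\bigr).
\end{align*}
Under conditions (i)--(iv) (note that (iii) on $\hat f_{t-1}$ gives the consistency required by Theorem~\ref{thm:main} and (iv) gives the boundedness), the second term converges in distribution to $\mathcal{N}(0,\sigma^2)$ by Theorem~\ref{thm:main}. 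The remaining work is to show the first term is $o_p(1)$.

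For the remainder, for each batch $\tau$ the summand in $\tilde\phi_t - \phi_t^*$ equals
$\sum_a \epol(a\mid X_t)\mathbbm{1}[A_t=a]\bigl(Y_t - \hat f_{t-1}(a,X_t)\bigr)\bigl(\tfrac{1}{\hat g_{t-1}(a\mid X_t)} - \tfrac{1}{\pi_\tau(a\mid X_t,\Omega_{t_{\tau-1}})}\bigr)$,
which I split via $Y_t - \hat f_{t-1} = (Y_t - f^*) + (f^* - \hat f_{t-1})$ into a ``variance part'' $V_t$ and a ``bias part'' $B_t$. The part $V_t$ has $\E[V_t\mid X_t,\Omega_{t-1}]=0$ because $\E[\mathbbm 1[A_t=a](Y_t-f^*(a,X_t))\mid X_t,\Omega_{t-1}]=0$, so $\{V_t\}$ is an MDS with respect to $\{\Omega_{t-1}\}$; by boundedness in Assumption~\ref{asm:overlap_outcome} and condition (iv), $\E[V_t^2\mid\Omega_{t-1}] \le C\,\alpha^2$, and a second-moment (Chebyshev) bound combined with $\alpha = o_p(1)$ shows that $\tfrac{\sqrt T}{n_\tau}\sum_{t\in\text{batch }\tau} V_t = o_p(1)$, where $n_\tau = t_\tau - t_{\tau-1} \asymp r_\tau T$.

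The decisive step is handling the bias part $B_t$. Taking conditional expectation given $\Omega_{t-1}$ and applying Cauchy--Schwarz with condition (iv) yields
$\bigl|\E[B_t\mid \Omega_{t-1}]\bigr| \le C\,\|\hat f_{t-1}(a,\cdot)-f^*(a,\cdot)\|_2\,\|\hat g_{t-1}(a\mid\cdot)-\pi_\tau(a\mid\cdot,\Omega_{t_{\tau-1}})\|_2 = C\,\alpha\beta$.
Summing over a batch and invoking $\alpha\beta = o_p((t-t_{\tau-1})^{-1/2})$ gives $\sum_{t\in\text{batch }\tau}\E[B_t\mid\Omega_{t-1}] = o_p(\sqrt{n_\tau})$, so after scaling by $\sqrt T/n_\tau$ this contribution is $o_p(\sqrt{1/r_\tau})=o_p(1)$. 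The mean-zero MDS part $B_t - \E[B_t\mid\Omega_{t-1}]$ is controlled by the same second-moment argument as for $V_t$, with the $\alpha\beta$ rate replacing $\alpha$. Summing over $\tau$ with the bounded weights $w_T$ from (i), (ii) gives $\sqrt{T}\,w_T^\top(\tilde D_T - D_T) = o_p(1)$, and Slutsky completes the proof.

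The main obstacle I anticipate is the bias control: the nuisances $\hat f_{t-1}$ and $\hat g_{t-1}$ are built from the dependent history $\Omega_{t-1}$, so conditioning on $\Omega_{t-1}$ must be used carefully to keep their values measurable when Cauchy--Schwarz is applied, and the cross-product rate $\alpha\beta = o_p((t-t_{\tau-1})^{-1/2})$ must be matched with the $\sum_t (t-t_{\tau-1})^{-1/2} = O(\sqrt{n_\tau})$ arithmetic so that the bias integrates to $o_p(\sqrt T/n_\tau)$ rather than a larger order. Everything else is a repetition of the BA2IPW argument already proved in Appendix~\ref{appdx:main}.
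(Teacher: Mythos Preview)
You are proving the wrong statement. The corollary concerns the BA2IPWIS estimator $\hat\theta^{\mathrm{BA2IPWIS}}_T=\zeta_T^\top\widetilde D$ from Appendix~\ref{appdx:incomp_support_of_arms}, where the true behavior policy $\pi_\tau$ is used throughout and the novelty is that the moment vector is indexed by \emph{pairs} $(a,\tau)$ with weights satisfying $\sum_\tau\zeta^{a,\tau}_T=1$ for each $a$. There is no estimated propensity $\hat g_{t-1}$ here, and the doubly robust rate condition $\alpha\beta=o_p((t-t_{\tau-1})^{-1/2})$ is not part of the hypotheses. Your entire argument --- the decomposition $\tilde D_T-D_T$, the ``variance part'' $V_t$ and ``bias part'' $B_t$, the Cauchy--Schwarz control via $\alpha\beta$ --- is the proof of Theorem~\ref{thm:badr_asymp} (the BADR result), not of this corollary.

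The paper's proof is a two-line linear-algebra observation. Just before the corollary it is stated (as a direct analogue of Theorem~\ref{thm:main}) that $\sqrt{N}\,\ddot h_T\xrightarrow{\mathrm d}\mathcal N(\bm 0,\widetilde\Sigma)$. The only thing left is to check that $\zeta_T^\top\widetilde D-\theta_0=\zeta_T^\top\ddot h_T$: the $(a,\tau)$ block of $\ddot h_T$ is $g_{a,\tau,T}-\theta_0^a$, and the constraint $\sum_\tau\zeta^{a,\tau}_T=1$ gives $\sum_{a,\tau}\zeta_{a,\tau}\theta_0^a=\sum_a\theta_0^a=\theta_0$. Then apply the continuous mapping / Slutsky to the linear functional $\zeta_T^\top(\cdot)$. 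None of the BADR machinery you invoke is needed or relevant.
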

\begin{proof}
From the constraint of $\zeta_T$, we have
\begin{align*}
\zeta^\top \ddot{h}_T = \sum^K_{a=1}\sum^M_{\tau=1}\zeta_{a, \tau} g_{a, \tau, T} - \theta_0.
\end{align*}
Then,
\begin{align*}
\sqrt{N}\zeta^\top G = \sqrt{N}\left( \sum^K_{a=1}\sum^M_{\tau=1}\zeta_{a, \tau} z_{a, \tau} - \theta_0\right) \xrightarrow{d} \mathcal{N}\left(0, \zeta^\top \Sigma \zeta\right).
\end{align*}
\end{proof}

Next, we consider an efficient weight that minimizes $\zeta^\top_T\widetilde{\Sigma}\zeta_T$. The efficient weight $\zeta^*$ can be defined as the solution of the following constraint optimization problem:
\begin{align*}
\min_{\zeta\in\mathbb{R}^{KM}}&\ \zeta^\top \Sigma \zeta\\
\mathrm{s.t.}&\ \sum^{M}_{\tau=1}\zeta^{1,\tau} = 1,\ \sum^{M}_{\tau=1}\zeta^{2,\tau} = 1,\ \cdots\ \sum^{M}_{\tau=1}\zeta^{K,\tau} = 1.
\end{align*}

When we put Assumption~\ref{asm:overlap_pol2} instead of Assumption~\ref{asm:overlap_pol}, we can construct an estimator by removing an element $g_{a, \tau, T}$ such that $\pi_{\tau}(a\mid x, \Omega_{t_{\tau - 1}})=0$ from $\widetilde{D}_T$.

\section{Proof of Theorem~\ref{thm:badr_asymp}}
\label{appdx:badr_dist}

\begin{proof}
Because the BADR estimator can be decomposed as 
\begin{align*}
&\widehat{R}^{\mathrm{BADR}}_T(\epol)=\sum^M_{\tau=1}w_{T,\tau}\frac{1}{t_{\tau} - t_{\tau-1}}\sum^{t_\tau}_{t=t_{\tau-1}+1}\phi_t(X_t, A_t, Y_t; 1, \hat{f}_{t-1}, \hat{g}_{t-1}, \epol)\\
&=\sum^M_{\tau=1}w_{T,\tau}\frac{1}{t_{\tau} - t_{\tau-1}}\sum^{t_\tau}_{t=t_{\tau-1}+1}\phi_t(X_t, A_t, Y_t; 1, \hat{f}_{t-1}, \hat{g}_{t-1}, \epol)\\
&\ \ \ - \sum^M_{\tau=1}w_{T,\tau}\frac{1}{t_{\tau} - t_{\tau-1}}\sum^{t_\tau}_{t=t_{\tau-1}+1}\phi_t(X_t, A_t, Y_t; 1, \hat{f}_{t-1}, \pi_\tau, \epol)\\
&\ \ \ + \sum^M_{\tau=1}w_{T,\tau}\frac{1}{t_{\tau} - t_{\tau-1}}\sum^{t_\tau}_{t=t_{\tau-1}+1}\phi_t(X_t, A_t, Y_t; 1, \hat{f}_{t-1}, \pi_\tau, \epol).
\end{align*}
The last term $\sum^M_{\tau=1}w_{T,\tau}\frac{1}{t_{\tau} - t_{\tau-1}}\sum^{t_\tau}_{t=t_{\tau-1}+1}\phi_t(X_t, A_t, Y_t; 1, \hat{f}_{t-1}, \pi_\tau, \epol)$ is a special case of the BA2IPW estimator and has the asymptotic normality if $w_T = (w_{T,1}\ \cdots\ w_{T,M})^\top \xrightarrow{\mathrm{p}} w = (w_{1}\ \cdots\ w_{M})^\top$, $w_{T,\tau} > 0$ and $\sum^M_{\tau=1}w_{T,\tau} = 1$, and Assumptions~\ref{asm:overlap_pol} and \ref{asm:overlap_outcome} hold. Then, we want to show that

\begin{align*}
&\sum^M_{\tau=1}w_{T,\tau}\frac{1}{t_{\tau} - t_{\tau-1}}\sum^{t_\tau}_{t=t_{\tau-1}+1}\phi_t(X_t, A_t, Y_t; 1, \hat{f}_{t-1}, \hat{g}_{t-1}, \epol)\\
&\ \ \ - \sum^M_{\tau=1}w_{T,\tau}\frac{1}{t_{\tau} - t_{\tau-1}}\sum^{t_\tau}_{t=t_{\tau-1}+1}\phi_t(X_t, A_t, Y_t; 1, \hat{f}_{t-1}, \pi_\tau, \epol)\\
&= \mathrm{o}_p(1/\sqrt{T}).
\end{align*}

Because the $M$ is fixed, we only have to consider 
\begin{align*}
&\frac{1}{t_{\tau} - t_{\tau-1}}\sum^{t_\tau}_{t=t_{\tau-1}+1}\phi_t(X_t, A_t, Y_t; 1, \hat{f}_{t-1}, \hat{g}_{t-1}, \epol) - \frac{1}{t_{\tau} - t_{\tau-1}}\sum^{t_\tau}_{t=t_{\tau-1}+1}\phi_t(X_t, A_t, Y_t; 1, \hat{f}_{t-1}, \pi_\tau, \epol)\\
&=\frac{1}{r_\tau T}\sum^{t_\tau}_{t=t_{\tau-1}+1}\phi_t(X_t, A_t, Y_t; 1, \hat{f}_{t-1}, \hat{g}_{t-1}, \epol) - \frac{1}{r_1T}\sum^{t_\tau}_{t=t_{\tau-1}+1}\phi_t(X_t, A_t, Y_t; 1, \hat{f}_{t-1}, \pi_\tau, \epol)\\
&= \mathrm{o}_p(1/\sqrt{T}).
\end{align*}

Here, for 
\begin{align*}
&\psi_1(X_t, A_t, Y_t; f, g)=\sum^K_{a=1}\frac{\epol(a\mid X_t)\mathbbm{1}[A_t=a]\left(Y_t - f(a, X_t)\right) }{g(a\mid X_t)},\\
&\psi_2(X_t; f)=\sum^K_{a=1}\epol(a\mid X_t)f(a, X_t),
\end{align*}
we have
\begin{align*}
&\frac{1}{r_\tau T}\sum^{t_\tau}_{t=t_{\tau-1}+1}\psi_t(X_t, A_t, Y_t; 1, \hat{f}_{t-1}, \hat{g}_{t-1}, \epol) - \frac{1}{r_\tau T}\sum^{t_\tau}_{t=t_{\tau-1}+1}\psi_t(X_t, A_t, Y_t; 1, \hat{f}_{t-1}, \pi_\tau, \epol)\\
&=\frac{1}{r_\tau T}\sum^{t_{\tau}}_{t=t_{\tau-1}+1}\Bigg\{\psi_1(X_t, A_t, Y_t; \hat{g}_{t-1}, \hat{f}_{t-1}) - \psi_1(X_t, A_t, Y_t; \pi_{t-1}, f^*)\\
&\ \ \ -\mathbb{E}\left[\psi_1(X_t, A_t, Y_t; \hat{g}_{t-1}, \hat{f}_{t-1}) - \psi_1(X_t, A_t, Y_t; \pi_{t-1}, f^*)\mid \Omega_{t-1}\right]\\
&\ \ \ + \psi_2(X_t; \hat{f}_{t-1}) - \psi_2(X_t; f^*) -\mathbb{E}\left[\psi_2(X_t; \hat{f}_{t-1}) - \psi_2(X_t; f^*)\mid \Omega_{t-1}\right]\Bigg\}\\
&\ \ \ + \frac{1}{r_\tau T}\sum^{t_{\tau}}_{t=t_{\tau-1}+1}\mathbb{E}\left[\psi_1(X_t, A_t, Y_t; \hat{g}_{t-1}, \hat{f}_{t-1})\mid \Omega_{t-1}\right] + \frac{1}{r_\tau T}\sum^{t_{\tau}}_{t=t_{\tau-1}+1}\mathbb{E}\left[\psi_2(X_t; \hat{f}_{t-1})\mid \Omega_{t-1}\right]\\
&\ \ \ - \frac{1}{r_\tau T}\sum^{t_{\tau}}_{t=t_{\tau-1}+1}\mathbb{E}\left[\psi_1(X_t, A_t, Y_t; \pi_{t-1}, f^*)\mid \Omega_{t-1}\right] - \frac{1}{r_\tau T}\sum^{t_{\tau}}_{t=t_{\tau-1}+1}\mathbb{E}\left[\psi_2(X_t; f^*)\mid \Omega_{t-1}\right].
\end{align*}
In the following parts, we separately show that
\begin{align}
\label{eq:part1}
&\sqrt{T}\frac{1}{r_\tau T}\sum^{t_{\tau}}_{t=t_{\tau-1}+1}\Bigg\{\psi_1(X_t, A_t, Y_t; \hat{g}_{t-1}, \hat{f}_{t-1}) - \psi_1(X_t, A_t, Y_t; \pi_{t-1}, f^*)\\
&\ \ \ -\mathbb{E}\left[\psi_1(X_t, A_t, Y_t; \hat{g}_{t-1}, \hat{f}_{t-1}) - \psi_1(X_t, A_t, Y_t; \pi_{t-1}, f^*)\mid \Omega_{t-1}\right]\nonumber\\
&\ \ \ + \psi_2(X_t; \hat{f}_{t-1}) - \psi_2(X_t; f^*) -\mathbb{E}\left[\phi_2(X_t; \hat{f}_{t-1}) - \phi_2(X_t; f^*)\mid \Omega_{t-1}\right]\Bigg\}\nonumber\\
&= \mathrm{o}_p(1);\nonumber
\end{align}
and 
\begin{align}
\label{eq:part2}
&\frac{1}{r_\tau T}\sum^{t_{\tau}}_{t=t_{\tau-1}+1}\mathbb{E}\left[\psi_1(X_t, A_t, Y_t; \hat{g}_{t-1}, \hat{f}_{t-1})\mid \Omega_{t-1}\right] + \frac{1}{r_\tau T}\sum^{t_{\tau}}_{t=t_{\tau-1}+1}\mathbb{E}\left[\psi_2(X_t; \hat{f}_{t-1})\mid \Omega_{t-1}\right]\\
&- \frac{1}{r_\tau T}\sum^{t_{\tau}}_{t=t_{\tau-1}+1}\mathbb{E}\left[\psi_1(X_t, A_t, Y_t; \pi_{t-1}, f^*)\mid \Omega_{t-1}\right] - \frac{1}{r_\tau T}\sum^{t_{\tau}}_{t=t_{\tau-1}+1}\mathbb{E}\left[\psi_2(X_t; f^*)\mid \Omega_{t-1}\right] = \mathrm{o}_p(1/\sqrt{T}).\nonumber
\end{align}

\subsection{Proof of \eqref{eq:part1}}

For any $\varepsilon > 0$, to show that 
\begin{align*}
&\mathbb{P}\Bigg(\Bigg|\sqrt{T}\frac{1}{r_\tau T}\sum^{t_{\tau}}_{t=t_{\tau-1}+1}\Bigg\{\psi_1(X_t, A_t, Y_t; \hat{g}_{t-1}, \hat{f}_{t-1}) - \psi_1(X_t, A_t, Y_t; \pi_{t-1}, f^*)\\
&\ \ \ -\mathbb{E}\left[\psi_1(X_t, A_t, Y_t; \hat{g}_{t-1}, \hat{f}_{t-1}) - \psi_1(X_t, A_t, Y_t; \pi_{t-1}, f^*)\mid \Omega_{t-1}\right]\\
&\ \ \ + \psi_2(X_t; \hat{f}_{t-1}) - \psi_2(X_tt; f^*) -\mathbb{E}\left[\psi_2(X_t; \hat{f}_{t-1}) - \psi_2(X_t; f^*)\mid \Omega_{t-1}\right]\Bigg\}\Bigg| > \varepsilon \Bigg)\\
& \to 0,
\end{align*}
we show that the mean is $0$ and the variance of the component converges to $0$. Then, from the Chebyshev's inequality, this result yields the statement.

The mean is calculated as 
\begin{align*}
&\sqrt{T}\frac{1}{r_\tau T}\sum^{t_{\tau}}_{t=t_{\tau-1}+1}\mathbb{E}\Bigg[\Bigg\{\psi_1(X_t, A_t, Y_t; \hat{g}_{t-1}, \hat{f}_{t-1}) - \psi_1(X_t, A_t, Y_t; \pi_{t-1}, f^*)\\
&\ \ \ \ \ \ \ -\mathbb{E}\left[\psi_1(X_t, A_t, Y_t; \hat{g}_{t-1}, \hat{f}_{t-1}) - \psi_1(X_t, A_t, Y_t; \pi_{t-1}, f^*)\mid \Omega_{t-1}\right]\\
&\ \ \ \ \ \ \ + \psi_2(X_t; \hat{f}_{t-1}) - \psi_2(X_t; f^*) -\mathbb{E}\left[\psi_2(X_t; \hat{f}_{t-1}) - \psi_2(X_t; f^*)\mid \Omega_{t-1}\right]\Bigg\}\Bigg]\\
&=\sqrt{T}\frac{1}{r_\tau T}\sum^{t_{\tau}}_{t=t_{\tau-1}+1}\mathbb{E}\Bigg[\mathbb{E}\Bigg[\Bigg\{\psi_1(X_t, A_t, Y_t; \hat{g}_{t-1}, \hat{f}_{t-1}) - \psi_1(X_t, A_t, Y_t; \pi_{t-1}, f^*)\\
&\ \ \ \ \ \ \ -\mathbb{E}\left[\psi_1(X_t, A_t, Y_t; \hat{g}_{t-1}, \hat{f}_{t-1}) - \psi_1(X_t, A_t, Y_t; \pi_{t-1}, f^*)\mid \Omega_{t-1}\right]\\
&\ \ \ \ \ \ \ + \psi_2(X_t; \hat{f}_{t-1}) - \psi_2(X_t; f^*) -\mathbb{E}\left[\psi_2(X_t; \hat{f}_{t-1}) - \psi_2(X_t; f^*)\mid \Omega_{t-1}\right]\Bigg\}\mid \Omega_{t-1}\Bigg]\Bigg]\\
& = 0
\end{align*}

Because the mean is $0$, the variance is calculated as 
\begin{align*}
&\mathrm{Var}\Bigg(\sqrt{T}\frac{1}{r_\tau T}\sum^{t_{\tau}}_{t=t_{\tau-1}+1}\Bigg\{\psi_1(X_t, A_t, Y_t; \hat{g}_{t-1}, \hat{f}_{t-1}) - \psi_1(X_t, A_t, Y_t; \pi_{t-1}, f^*)\\
&\ \ \ \ \ \ \ -\mathbb{E}\left[\psi_1(X_t, A_t, Y_t; \hat{g}_{t-1}, \hat{f}_{t-1}) - \psi_1(X_t, A_t, Y_t; \pi_{t-1}, f^*)\mid \Omega_{t-1}\right]\\
&\ \ \ \ \ \ \ + \psi_2(X_t; \hat{f}_{t-1}) - \psi_2(X_t; f^*) -\mathbb{E}\left[\psi_2(X_t; \hat{f}_{t-1}) - \psi_2(X_t; f^*)\mid \Omega_{t-1}\right]\Bigg\}\Bigg)\\
&=\frac{1}{r^2_\tau T}\mathbb{E}\Bigg[\Bigg(\sum^{t_{\tau}}_{t=t_{\tau-1}+1}\Bigg\{\psi_1(X_t, A_t, Y_t; \hat{g}_{t-1}, \hat{f}_{t-1}) - \psi_1(X_t, A_t, Y_t; \pi_{t-1}, f^*)\\
&\ \ \ \ \ \ \ -\mathbb{E}\left[\psi_1(X_t, A_t, Y_t; \hat{g}_{t-1}, \hat{f}_{t-1}) - \psi_1(X_t, A_t, Y_t; \pi_{t-1}, f^*)\mid \Omega_{t-1}\right]\\
&\ \ \ \ \ \ \ + \psi_2(X_t; \hat{f}_{t-1}) - \psi_2(X_t; f^*) -\mathbb{E}\left[\psi_2(X_t; \hat{f}_{t-1}) - \psi_2(X_t; f^*)\mid \Omega_{t-1}\right]\Bigg\}\Bigg)^2\Bigg].
\end{align*}
Then, we can decompose the variance as
\begin{align*}
&=\frac{1}{r^2_\tau T}\sum^{t_{\tau}}_{t=t_{\tau-1}+1}\mathbb{E}\Bigg[\Bigg(\psi_1(X_t, A_t, Y_t; \hat{g}_{t-1}, \hat{f}_{t-1}) - \psi_1(X_t, A_t, Y_t; \pi_{t-1}, f^*)\\
&\ \ \ \ \ \ \ -\mathbb{E}\left[\psi_1(X_t, A_t, Y_t; \hat{g}_{t-1}, \hat{f}_{t-1}) - \psi_1(X_t, A_t, Y_t; \pi_{t-1}, f^*)\mid \Omega_{t-1}\right]\\
&\ \ \ \ \ \ \ + \psi_2(X_t; \hat{f}_{t-1}) - \psi_2(X_t; f^*) -\mathbb{E}\left[\psi_2(X_t; \hat{f}_{t-1}) - \psi_2(X_t; f^*)\mid \Omega_{t-1}\right]\Bigg)^2\Bigg]\\
&\ \ \ +\frac{2}{r^2_\tau T}\sum^{t_{\tau}-1}_{t=1}\sum^{t_{\tau}}_{s=t+1}\mathbb{E}\Bigg[\Bigg(\psi_1(X_t, A_t, Y_t; \hat{g}_{t-1}, \hat{f}_{t-1}) - \psi_1(X_t, A_t, Y_t; \pi_{t-1}, f^*)\\
&\ \ \ \ \ \ \ -\mathbb{E}\left[\psi_1(X_t, A_t, Y_t; \hat{g}_{t-1}, \hat{f}_{t-1}) - \psi_1(X_t, A_t, Y_t; \pi_{t-1}, f^*)\mid \Omega_{t-1}\right]\\
&\ \ \ \ \ \ \ + \psi_2(X_t; \hat{f}_{t-1}) - \psi_2(X_t; f^*) -\mathbb{E}\left[\psi_2(X_t; \hat{f}_{t-1}) - \psi_2(X_t; f^*)\mid \Omega_{t-1}\right]\Bigg)\\
&\ \ \ \ \ \ \ \times \Bigg(\psi_1(X_s, A_s, Y_s; \hat{g}_{s-1}, \hat{f}_{s-1}) - \psi_1(X_s, A_s, Y_s; \pi_{s-1}, f^*)\\
&\ \ \ \ \ \ \ -\mathbb{E}\left[\psi_1(X_s, A_s, Y_s; \hat{g}_{s-1}, \hat{f}_{s-1}) - \psi_1(X_s, A_s, Y_s; \pi_{s-1}, f^*)\mid \Omega_{s-1}\right]\\
&\ \ \ \ \ \ \ + \psi_2(X_s; \hat{f}_{s-1}) - \psi_2(X_s; f^*) -\mathbb{E}\left[\psi_2(X_s; \hat{f}_{s-1}) - \psi_2(X_s; f^*)\mid \Omega_{s-1}\right]\Bigg)\Bigg].
\end{align*}

For $s > t$, we can vanish the covariance terms as 
\begin{align*}
&\mathbb{E}\Bigg[\Bigg(\psi_1(X_t, A_t, Y_t; \hat{g}_{t-1}, \hat{f}_{t-1}) - \psi_1(X_t, A_t, Y_t; \pi_{t-1}, f^*)\\
&\ \ \ \ \ \ \ -\mathbb{E}\left[\psi_1(X_t, A_t, Y_t; \hat{g}_{t-1}, \hat{f}_{t-1}) - \psi_1(X_t, A_t, Y_t; \pi_{t-1}, f^*)\mid \Omega_{t-1}\right]\\
&\ \ \ \ \ \ \ + \psi_2(X_t; \hat{f}_{t-1}) - \psi_2(X_t; f^*) -\mathbb{E}\left[\psi_2(X_t; \hat{f}_{t-1}) - \psi_2(X_t; f^*)\mid \Omega_{t-1}\right]\Bigg)\\
&\ \ \ \ \ \ \ \times \Bigg(\psi_1(X_s, A_s, Y_s; \hat{g}_{s-1}, \hat{f}_{s-1}) - \psi_1(X_s, A_s, Y_s; \pi_{s-1}, f^*)\\
&\ \ \ \ \ \ \ -\mathbb{E}\left[\psi_1(X_s, A_s, Y_s; \hat{g}_{s-1}, \hat{f}_{s-1}) - \psi_1(X_s, A_s, Y_s; \pi_{s-1}, f^*)\mid \Omega_{s-1}\right]\\
&\ \ \ \ \ \ \ + \psi_2(X_s; \hat{f}_{s-1}) - \psi_2(X_s; f^*) -\mathbb{E}\left[\psi_2(X_s; \hat{f}_{s-1}) - \psi_2(X_s; f^*)\mid \Omega_{s-1}\right]\Bigg)\Bigg]\\
&=\mathbb{E}\Bigg[U\mathbb{E}\Bigg[\Bigg(\psi_1(X_s, A_s, Y_s; \hat{g}_{s-1}, \hat{f}_{s-1}) - \psi_1(X_s, A_s, Y_s; \pi_{s-1}, f^*)\\
&\ \ \ \ \ \ \ -\mathbb{E}\left[\psi_1(X_s, A_s, Y_s; \hat{g}_{s-1}, \hat{f}_{s-1}) - \psi_1(X_s, A_s, Y_s; \pi_{s-1}, f^*)\mid \Omega_{s-1}\right]\\
&\ \ \ \ \ \ \ + \psi_2(X_s; \hat{f}_{s-1}) - \psi_2(X_s; f^*) -\mathbb{E}\left[\psi_2(X_s; \hat{f}_{s-1}) - \psi_2(X_s; f^*)\mid \Omega_{s-1}\right]\Bigg)\mid \Omega_{s-1}\Bigg]\Bigg]\\
&=0,
\end{align*}
where $U=\Bigg(\psi_1(X_t, A_t, Y_t; \hat{g}_{t-1}, \hat{f}_{t-1}) - \psi_1(X_t, A_t, Y_t; \pi_{t-1}, f^*)-\mathbb{E}\left[\psi_1(X_t, A_t, Y_t; \hat{g}_{t-1}, \hat{f}_{t-1}) - \psi_1(X_t, A_t, Y_t; \pi_{t-1}, f^*)\mid \Omega_{t-1}\right] + \psi_2(X_t; \hat{f}_{t-1}) - \psi_2(X_t; f^*) -\mathbb{E}\left[\psi_2(X_t; \hat{f}_{t-1}) - \psi_2(X_t; f^*)\mid \Omega_{t-1}\right]\Bigg)$. Therefore, the variance is calculated as 
\begin{align*}
&\mathrm{Var}\Bigg(\sqrt{T}\frac{1}{r_\tau T}\sum^{t_{\tau}}_{t=t_{\tau-1}+1}\Bigg\{\psi_1(X_t, A_t, Y_t; \hat{g}_{t-1}, \hat{f}_{t-1}) - \psi_1(X_t, A_t, Y_t; \pi_{t-1}, f^*)\\
&\ \ \ \ \ \ \ -\mathbb{E}\left[\psi_1(X_t, A_t, Y_t; \hat{g}_{t-1}, \hat{f}_{t-1}) - \psi_1(X_t, A_t, Y_t; \pi_{t-1}, f^*)\mid \Omega_{t-1}\right]\\
&\ \ \ \ \ \ \ + \psi_2(X_t; \hat{f}_{t-1}) - \psi_2(X_t; f^*)\\
&\ \ \ \ \ \ \ -\mathbb{E}\left[\psi_2(X_t; \hat{f}_{t-1}) - \psi_2(X_t; f^*)\mid \Omega_{t-1}\right]\Bigg\}\Bigg)\\
&=\frac{1}{r^2_\tau T}\sum^{t_{\tau}}_{t=t_{\tau-1}+1}\mathbb{E}\Bigg[\Bigg(\psi_1(X_t, A_t, Y_t; \hat{g}_{t-1}, \hat{f}_{t-1}) - \psi_1(X_t, A_t, Y_t; \pi_{t-1}, f^*)\\
&\ \ \ \ \ \ \ -\mathbb{E}\left[\psi_1(X_t, A_t, Y_t; \hat{g}_{t-1}, \hat{f}_{t-1}) - \psi_1(X_t, A_t, Y_t; \pi_{t-1}, f^*)\mid \Omega_{t-1}\right]\\
&\ \ \ \ \ \ \ + \psi_2(X_t; \hat{f}_{t-1}) - \psi_2(X_t; f^*)\\
&\ \ \ \ \ \ \ -\mathbb{E}\left[\psi_2(X_t; \hat{f}_{t-1}) - \psi_2(X_t; f^*)\mid \Omega_{t-1}\right]\Bigg)^2\Bigg].
\end{align*}

Then, by considering the conditional expectation,
\begin{align*}
&=\frac{1}{r^2_\tau T}\sum^{t_{\tau}}_{t=t_{\tau-1}+1}\mathbb{E}\Bigg[\mathbb{E}\Bigg[\Bigg(\psi_1(X_t, A_t, Y_t; \hat{g}_{t-1}, \hat{f}_{t-1}) - \psi_1(X_t, A_t, Y_t; \pi_{t-1}, f^*)\\
&\ \ \ \ \ \ \ -\mathbb{E}\left[\psi_1(X_t, A_t, Y_t; \hat{g}_{t-1}, \hat{f}_{t-1}) - \psi_1(X_t, A_t, Y_t; \pi_{t-1}, f^*)\mid \Omega_{t-1}\right]\\
&\ \ \ \ \ \ \ + \psi_2(X_t; \hat{f}_{t-1}) - \psi_2(X_t; f^*)\\
&\ \ \ \ \ \ \ -\mathbb{E}\left[\psi_2(X_t; \hat{f}_{t-1}) - \psi_2(X_t; f^*)\mid \Omega_{t-1}\right]\Bigg)^2\mid \Omega_{t-1}\Bigg]\Bigg]\\
&=\frac{1}{r^2_\tau T}\sum^{t_{\tau}}_{t=t_{\tau-1}+1}\mathbb{E}\Bigg[\mathbb{E}\Bigg[\Bigg(\psi_1(X_t, A_t, Y_t; \hat{g}_{t-1}, \hat{f}_{t-1}) - \psi_1(X_t, A_t, Y_t; \pi_{t-1}, f^*)\\
&\ \ \ \ \ \ \ + \psi_2(X_t; \hat{f}_{t-1}) - \psi_2(X_t; f^*)\Bigg)^2\mid \Omega_{t-1}\Bigg]\\
&\ \ \ \ \ \ \ -\Bigg(\mathbb{E}\left[\psi_1(X_t, A_t, Y_t; \hat{g}_{t-1}, \hat{f}_{t-1}) - \psi_1(X_t, A_t, Y_t; \pi_{t-1}, f^*)\mid \Omega_{t-1}\right]\\
&\ \ \ \ \ \ \ +\mathbb{E}\left[\psi_2(X_t; \hat{f}_{t-1}) - \psi_2(X_t; f^*)\mid \Omega_{t-1}\right]\Bigg)^2\Bigg]\\
&\leq \frac{1}{r^2_\tau T}\sum^{t_{\tau}}_{t=t_{\tau-1}+1}\mathbb{E}\Bigg[\Bigg|\mathbb{E}\Bigg[\Bigg(\psi_1(X_t, A_t, Y_t; \hat{g}_{t-1}, \hat{f}_{t-1}) - \psi_1(X_t, A_t, Y_t; \pi_{t-1}, f^*)\\
&\ \ \ \ \ \ \ + \psi_2(X_t; \hat{f}_{t-1}) - \psi_2(X_t; f^*)\Bigg)^2\mid \Omega_{t-1}\Bigg]\Bigg|\Bigg].
\end{align*}

Then, we want to show
\begin{align*}
&\mathbb{E}\Bigg[\Bigg|\mathbb{E}\Bigg[\Bigg(\psi_1(X_t, A_t, Y_t; \hat{g}_{t-1}, \hat{f}_{t-1}) - \psi_1(X_t, A_t, Y_t; \pi_{t-1}, f^*) + \psi_2(X_t; \hat{f}_{t-1}) - \psi_2(X_t; f^*)\Bigg)^2\mid \Omega_{t-1}\Bigg]\Bigg|\Bigg]\\
&\to 0.
\end{align*}

Here, we can use
\begin{align}
\label{eq:first_e}
&\mathbb{E}\left[\left\{\sum^K_{a=1}\frac{\epol(a\mid X_t)\mathbbm{1}[A_t=a]\left(Y_t - \hat{f}_{t-1}(a, X_t)\right) }{\hat{g}_{t-1}(a\mid X_t)} - \sum^K_{a=1}\frac{\epol(a\mid X_t)\mathbbm{1}[A_t=a]\left(Y_t - f^*(a, X_t)\right) }{\pi_{t-1}(a\mid X_t, \Omega_{t-1})}\right\}^2\mid \Omega_{t-1}\right]\nonumber\\
& = \op(1),
\end{align}
and 
\begin{align}
\label{eq:second_e}
&\mathbb{E}\left[\left\{\sum^K_{a=1}\epol(a\mid X_t)\hat{f}_{t-1}(a, X_t) - \sum^K_{a=1}\epol(a\mid X_t)f^*(a, X_t)\right\}^2\mid \Omega_{t-1}\right]\nonumber\\
& = \op(1).
\end{align}

The first equation \eqref{eq:first_e} is proved by 

\begin{align*}
&\mathbb{E}\left[\left\{\sum^K_{a=1}\frac{\epol(a\mid X_t)\mathbbm{1}[A_t=a]\left(Y_t - \hat{f}_{t-1}(a, X_t)\right) }{\hat{g}_{t-1}(a\mid X_t)} - \sum^K_{a=1}\frac{\epol(a\mid X_t)\mathbbm{1}[A_t=a]\left(Y_t - f^*(a, X_t)\right) }{\pi_{t-1}(a\mid X_t, \Omega_{t-1})}\right\}^2\mid \Omega_{t-1}\right]\\
&=\mathbb{E}\Bigg[\Bigg\{\sum^K_{a=1}\frac{\epol(a\mid X_t)\mathbbm{1}[A_t=a]\left(Y_t - \hat{f}_{t-1}(a, X_t)\right) }{\hat{g}_{t-1}(a\mid X_t)} - \sum^K_{a=1}\frac{\epol(a\mid X_t)\mathbbm{1}[A_t=a]\left(Y_t - f^*(a, X_t)\right) }{\hat{g}_{t-1}(a\mid X_t)}\\
&\ \ \ + \sum^K_{a=1}\frac{\epol(a\mid X_t)\mathbbm{1}[A_t=a]\left(Y_t - f^*(a, X_t)\right) }{\hat{g}_{t-1}(a\mid X_t)}- \sum^K_{a=1}\frac{\epol(a\mid X_t)\mathbbm{1}[A_t=a]\left(Y_t - f^*(a, X_t)\right) }{\pi_{t-1}(a\mid X_t, \Omega_{t-1})}\Bigg\}^2\mid \Omega_{t-1}\Bigg]\\
&\leq 2\mathbb{E}\Bigg[\Bigg\{\sum^K_{a=1}\frac{\epol(a\mid X_t)\mathbbm{1}[A_t=a]\left(Y_t - \hat{f}_{t-1}(a, X_t)\right) }{\hat{g}_{t-1}(a\mid X_t)}- \sum^K_{a=1}\frac{\epol(a\mid X_t)\mathbbm{1}[A_t=a]\left(Y_t - f^*(a, X_t)\right) }{\hat{g}_{t-1}(a\mid X_t)}\mid \Omega_{t-1}\Bigg]\\
&\ \ \ + 2\mathbb{E}\Bigg[\Bigg\{\sum^K_{a=1}\frac{\epol(a\mid X_t)\mathbbm{1}[A_t=a]\left(Y_t - f^*(a, X_t)\right) }{\hat{g}_{t-1}(a\mid X_t)} - \sum^K_{a=1}\frac{\epol(a\mid X_t)\mathbbm{1}[A_t=a]\left(Y_t - f^*(a, X_t)\right) }{\pi_{t-1}(a\mid X_t, \Omega_{t-1})}\Bigg\}^2\mid \Omega_{t-1}\Bigg]\\
&\leq 2C\|f^* - \hat{f}_{t-1}\|^2_2 + 2\times 4C\|\hat{g}_{t-1} - \pi_{t-1}\|^2_{2} = \op(1),
\end{align*}
where $C>0$ is a constant. Here, we have used a parallelogram law from the second line to the third line. We have use $|\hat{f}_{t-1}| < C_3$, and $0<\frac{\epol}{\hat{g}_t} < C_4$ and convergence rate conditions, from the third line to the fourth line.  The second equation \eqref{eq:second_e} is proved by Jensen's inequality. 

Besides, we can also use 
\begin{align}
\label{eq:third_e}
&\mathbb{E}\Bigg[\left\{\sum^K_{a=1}\frac{\epol(a\mid X_t)\mathbbm{1}[A_t=a]\left(Y_t - \hat{f}_{t-1}(a, X_t)\right) }{\hat{g}_{t-1}(a\mid X_t)} - \sum^K_{a=1}\frac{\epol(a\mid X_t)\mathbbm{1}[A_t=a]\left(Y_t - f^*(a, X_t)\right) }{\pi_{t-1}(a\mid X_t, \Omega_{t-1})}\right\}\nonumber\\
&\ \ \ \left\{\sum^K_{a=1}\epol(a\mid X_t)\hat{f}_{t-1}(a, X_t) - \sum^K_{a=1}\epol(a\mid X_t)f^*(a, X_t)\right\}\mid \Omega_{t-1}\Bigg]\nonumber\\
& = \op(1)
\end{align}
This is proved by
\begin{align*}
&\mathbb{E}\Bigg[\left\{\sum^K_{a=1}\frac{\epol(a\mid X_t)\mathbbm{1}[A_t=a]\left(Y_t - \hat{f}_{t-1}(a, X_t)\right) }{\hat{g}_{t-1}(a\mid X_t)} - \sum^K_{a=1}\frac{\epol(a\mid X_t)\mathbbm{1}[A_t=a]\left(Y_t - f^*(a, X_t)\right) }{\pi_{t-1}(a\mid X_t, \Omega_{t-1})}\right\}\\
&\ \ \ \left\{\sum^K_{a=1}\epol(a\mid X_t)\hat{f}_{t-1}(a, X_t) - \sum^K_{a=1}\epol(a\mid X_t)f^*(a, X_t)\right\}\mid \Omega_{t-1}\Bigg]\\
&\leq C\Bigg|\mathbb{E}\Bigg[\Big\{\sum^K_{a=1}\left(\hat{g}_{t-1}(a\mid X_t) - \pi_{t-1}(a\mid X_t, \Omega_{t-1})\right)\Big\}\\
&\ \ \ \times \Big\{\sum^K_{a=1}\left(\epol(a\mid X_t)\hat{f}_{t-1}(a, X_t) - \epol(a\mid X_t)f^*(a, X_t)\right)\Big\}\mid \Omega_{t-1}\Bigg]\Bigg|\\
& = \op(1),
\end{align*}
where $C>0$ is a constant. Here, we used \Holder's inequality $\|fg \|_1 \leq  \|f \|_2  \|g \|_2$ and 
\begin{align*}
& \leq C \Bigg\| \sum^K_{a=1}\Big(\pi_{t-1}(a\mid X_t, \Omega_{t-1}) - \hat{g}_{t-1}(a\mid X_t)\Big)\Bigg\|_{2}\Bigg\|\sum^K_{a=1}\Big(f^*(a, X_t) - \hat{f}_{t-1}(a, X_t) \Big)\Bigg\|_{2}\\
& = \op(1)
\end{align*}

Therefore, from the $L^r$ convergence theorem (Proposition~\ref{prp:lr_conv_theorem}) and the boundedness of the random variables, we can show that as $t\to \infty$,
\begin{align*}
&\mathbb{E}\Bigg[\Bigg|\mathbb{E}\Bigg[\Bigg(\psi_1(X_t, A_t, Y_t; \hat{g}_{t-1}, \hat{f}_{t-1}) - \psi_1(X_t, A_t, Y_t; \pi_{t-1}, f^*)\\
&\ \ \ \ \ \ \ + \psi_2(X_t, A_t, Y_t; \hat{f}_{t-1}) - \psi_2(X_t, A_t, Y_t; f^*)\Bigg)^2\mid \Omega_{t-1}\Bigg]\Bigg|\Bigg]\\
&\to 0.
\end{align*}

Therefore, for any $\epsilon > 0$, there exists a constant $\tilde C > 0$ such that 
\begin{align*}
&\frac{1}{r_\tau T} \sum^{t_\tau}_{t=t_{\tau-1}+1}\mathbb{E}\Bigg[\mathbb{E}\Bigg[\Bigg(\psi_1(X_t, A_t, Y_t; \hat{g}_{t-1}, \hat{f}_{t-1}) - \psi_1(X_t, A_t, Y_t; \pi_{t-1}, f^*)+ \psi_2(X_t; \hat{f}_{t-1}) - \psi_2(X_t; f^*)\Bigg)^2\mid \Omega_{t-1}\Bigg]\Bigg]\\
&\leq \tilde C/T + \epsilon.
\end{align*}
Thus, the variance also converges to $0$. Then, from Chebyshev’s inequality,
\begin{align*}
&\mathbb{P}\Bigg(\Bigg|\sqrt{T}\frac{1}{r_\tau T}\sum^{t_{\tau}}_{t=t_{\tau-1}+1}\Bigg\{\psi_1(X_t, A_t, Y_t; \hat{g}_{t-1}, \hat{f}_{t-1}) - \psi_1(X_t, A_t, Y_t; \pi_{t-1}, f^*)\\
&\ \ \ -\mathbb{E}\left[\psi_1(X_t, A_t, Y_t; \hat{g}_{t-1}, \hat{f}_{t-1}) - \psi_1(X_t, A_t, Y_t; \pi_{t-1}, f^*)\mid \Omega_{t-1}\right]\\
&\ \ \ + \psi_2(X_t; \hat{f}_{t-1}) - \psi_2(X_t; f^*) -\mathbb{E}\left[\psi_2(X_t; \hat{f}_{t-1}) - \psi_2(X_t; f^*)\mid \Omega_{t-1}\right]\Bigg\}\Bigg| > \varepsilon\Bigg)\\
&\leq \mathrm{Var}\Bigg(\sqrt{T}\frac{1}{r_\tau T}\sum^{t_{\tau}}_{t=t_{\tau-1}+1}\Bigg\{\psi_1(X_t, A_t, Y_t; \hat{g}_{t-1}, \hat{f}_{t-1}) - \psi_1(X_t, A_t, Y_t; \pi_{t-1}, f^*)\\
&\ \ \ -\mathbb{E}\left[\psi_1(X_t, A_t, Y_t; \hat{g}_{t-1}, \hat{f}_{t-1}) - \psi_1(X_t, A_t, Y_t; \pi_{t-1}, f^*)\mid \Omega_{t-1}\right]\\
&\ \ \ + \psi_2(X_t; \hat{f}_{t-1}) - \psi_2(X_t; f^*) -\mathbb{E}\left[\psi_2(X_t; \hat{f}_{t-1}) - \psi_2(X_t; f^*)\mid \Omega_{t-1}\right]\Bigg\}\Bigg)/\varepsilon^2\\
&\to 0.
\end{align*}

\subsection{Proof of \eqref{eq:part2}}

\begin{align}
&\frac{1}{r_\tau T}\sum^{t_{\tau}}_{t=t_{\tau-1}+1}\mathbb{E}\left[\psi_1(X_t, A_t, Y_t; \hat{g}_{t-1}, \hat{f}_{t-1})\mid \Omega_{t-1}\right] + \frac{1}{r_\tau T}\sum^{t_{\tau}}_{t=t_{\tau-1}+1}\mathbb{E}\left[\psi_2(X_t; \hat{f}_{t-1})\mid \Omega_{t-1}\right]\nonumber\\
&\ \ \ - \frac{1}{r_\tau T}\sum^{t_{\tau}}_{t=t_{\tau-1}+1}\mathbb{E}\left[\psi_1(X_t, A_t, Y_t; \pi_{t-1}, f^*)\mid \Omega_{t-1}\right] - \frac{1}{r_\tau T}\sum^{t_{\tau}}_{t=t_{\tau-1}+1}\mathbb{E}\left[\psi_2(X_t; f^*)\mid \Omega_{t-1}\right]\nonumber\\
& = \frac{1}{r_\tau T}\sum^{t_{\tau}}_{t=t_{\tau-1}+1} \mathbb{E}\left[\sum^K_{a=1}\frac{\epol(a\mid X_t)\mathbbm{1}[A_t=a]\left(Y_t - \hat{f}_{t-1}(a, X_t)\right) }{\hat{g}_{t-1}(a\mid X_t)}\mid \Omega_{t-1}\right]\nonumber\\
&\ \ \ + \frac{1}{r_\tau T}\sum^{t_{\tau}}_{t=t_{\tau-1}+1}\mathbb{E}\left[\sum^K_{a=1}\epol(a, X_t)\hat{f}_{t-1}(a, X_t)\mid \Omega_{t-1}\right]\nonumber\\
\label{eq:vanish1}
&\ \ \ - \frac{1}{r_\tau T}\sum^{t_{\tau}}_{t=t_{\tau-1}+1}\mathbb{E}\left[\sum^K_{a=1}\frac{\epol(a\mid X_t)\mathbbm{1}[A_t=a]\left(Y_t - f^*(a, X_t)\right) }{\pi_{t-1}(a\mid X_t, \Omega_{t-1})}\mid \Omega_{t-1}\right]\\
&\ \ \ - \frac{1}{r_\tau T}\sum^{t_{\tau}}_{t=t_{\tau-1}+1}\mathbb{E}\left[\sum^K_{a=1}\epol(a, X_t)f^*(a, X_t)\mid \Omega_{t-1}\right]\nonumber.
\end{align}
Because \eqref{eq:vanish1} is $0$,
\begin{align*}
& = \frac{1}{r_\tau T}\sum^{t_{\tau}}_{t=t_{\tau-1}+1} \mathbb{E}\left[\sum^K_{a=1}\frac{\epol(a\mid X_t)\mathbbm{1}[A_t=a]\left(Y_t - \hat{f}_{t-1}(a, X_t)\right) }{\hat{g}_{t-1}(a\mid X_t)}\mid \Omega_{t-1}\right]\\
&\ \ \ + \frac{1}{T}\sum^{t_{\tau}}_{t=t_{\tau-1}+1}\mathbb{E}\left[\sum^K_{a=1}\epol(a, X_t)\hat{f}_{t-1}(a, X_t)\mid \Omega_{t-1}\right] - \frac{1}{T}\sum^{t_{\tau}}_{t=t_{\tau-1}+1}\mathbb{E}\left[\sum^K_{a=1}\epol(a, X_t)f^*(a, X_t)\mid \Omega_{t-1}\right]\\
& = \frac{1}{r_\tau T}\sum^{t_{\tau}}_{t=t_{\tau-1}+1} \mathbb{E}\left[\sum^K_{a=1}\frac{\epol(a\mid X_t)\mathbbm{1}[A_t=a]\left(Y_t - \hat{f}_{t-1}(a, X_t)\right) }{\hat{g}_{t-1}(a\mid X_t)}\mid \Omega_{t-1}\right]\\
&\ \ \ - \frac{1}{r_\tau T}\sum^{t_{\tau}}_{t=t_{\tau-1}+1}\mathbb{E}\left[\sum^K_{a=1}\epol(a, X_t)\Big( f^*(a, X_t)- \hat{f}_{t-1}(a, X_t))\Big) \mid \Omega_{t-1}\right]\\
& = \frac{1}{r_\tau T}\sum^{t_{\tau}}_{t=t_{\tau-1}+1} \sum^K_{a=1}\mathbb{E}\Bigg[\mathbb{E}\Bigg[\frac{\epol(a\mid X_t)\pi_{t-1}(a\mid X_t, \Omega_{t-1})\left(f^*(a, X_t) - \hat{f}_{t-1}(a, X_t)\right) }{\hat{g}_{t-1}(a\mid X_t)}\\
&\ \ \ \ \ \ \ \ \ \ \ \ \ \ \ \ \ \ \ \ \ \ \ \  - \epol(a, X_t)\Big( f^*(a, X_t)- \hat{f}_{t-1}(a, X_t))\Big) \mid X_t, \Omega_{t-1}\Bigg]\mid \Omega_{t-1}\Bigg]\\
& \leq \frac{1}{r_\tau T}\sum^{t_{\tau}}_{t=t_{\tau-1}+1} \sum^K_{a=1}\Bigg|\mathbb{E}\Bigg[ \frac{\epol(a\mid X_t)\Big(\pi_{t-1}(a\mid X_t) - \hat{g}_{t-1}(a\mid X_t)\Big)\left(f^*(a, X_t) - \hat{f}_{t-1}(a, X_t)\right) }{\hat{g}_{t-1}(a\mid X_t)}  \mid \Omega_{t-1}\Bigg]\Bigg|. 
\end{align*}
By using \Holder's inequality $\|fg \|_1 \leq  \|f \|_2  \|g \|_2$, for a constant $C>0$, we have
\begin{align*}
& \leq \frac{C}{r_\tau T}\sum^{t_{\tau}}_{t=t_{\tau-1}+1} \Bigg\| \pi_{t-1}(a\mid X_t, \Omega_{t-1}) - \hat{g}_{t-1}(a\mid X_t)\Bigg\|_{2}\Bigg\|f^*(a, X_t) - \hat{f}_{t-1}(a, X_t) \Bigg\|_{2}\\
& = \frac{C}{r_\tau T}\sum^{t_{\tau}}_{t=t_{\tau-1}+1}\alpha\beta\\
& = \frac{C}{r_\tau T}\sum^{t_{\tau}}_{t=t_{\tau-1}+1}\op((t-t_{\tau-1})^{-1/2})\\
& = \op(1/\sqrt{T}).
\end{align*}

\end{proof}

\section{Experiments}

\subsection{Additional Experimental Results}
\label{appdx:det_exp}

In this section, we show the additional experimental results using different sample sizes, nonparametric estimator, and the numbers of batches. 

In Table~\ref{tbl:appdx:exp_table1},  we show the results of OPE with $1,000$ samples under $10$ batches using nonparametric NW regression. The upper graph shows the results with the RW policy. The lower graph shows the results with the UCB policy.

In Table~\ref{tbl:appdx:exp_table2},  we show the results of OPE with $2,000$ samples under $10$ batches using nonparametric NW regression. The upper graph shows the results with the RW policy. The lower graph shows the results with the UCB policy.

In Table~\ref{tbl:appdx:exp_table3},  we show the results of OPE with $1,500$ samples under $10$ batches using k-nearest neighbor (k-NN) regression. The upper graph shows the results with the RW policy. The lower graph shows the results with the UCB policy.

In Table~\ref{tbl:appdx:exp_table4},  we show the results of OPE with $1,500$ samples under $5$ batches using nonparametric NW regression. The upper graph shows the results with the RW policy. The lower graph shows the results with the UCB policy.

In Table~\ref{tbl:appdx:exp_table5},  we show the results of OPE with $1,500$ samples under $20$ batches using nonparametric NW regression. The upper graph shows the results with the RW policy. The lower graph shows the results with the UCB policy.

In Table~\ref{tbl:appdx:exp_table6},  we show the results of OPL with $1,500$ samples under the UCB policy. 

The best two methods are highlighted in bold.

\subsection{Details of Experiments with CyberAgent Dataset}
\label{appdx:det_exp2}
We use a logged dataset of advertisement selection. In the Cyberagent, the Thompson sampling and random sampling are simultaneously used as the behavior policies for selecting a video advertisement. An advertisement is selected as follows. First, after receiving a bid request for a user $i$ with the covariate $X_i$ in an online advertisement auction, we choose an advertisement campaign, which contains several video advertisements. Then, we choose a behavior policy from Thompson sampling and random sampling. At each period, the Thomson sampling is chosen with $5\%\sim 20\%$ probability; otherwise, the random sampling is chosen. Following the chosen behavior policy, we select a video advertisement ($A_i$) from the candidates to maximize the click rate, $Y_i$. The Thomson sampling is updated for every $30$ minute. Each batch consists of about $500\sim 1,000$ samples.

For the experiment, we create a new dataset from the original dataset. By combining some batches, we make $15$ datasets with $10,000$ samples and about $10\sim 15$ batches in each dataset. We apply the OPE estimators to the dataset generated from the Thompson sampling and estimate the policy value of the random sampling. We regard the observed result of random sampling as the true policy value. The estimation error between the estimated policy value and the observed (true) policy value is reported.

\begin{table*}[t]
\caption{Results of OPE with $1,000$ samples under $10$ batches using nonparametric NW regression. The upper graph shows the results with the RW policy. The lower graph shows the results with the UCB policy.} 
\begin{center}
\medskip
\label{tbl:appdx:exp_table1}
\scalebox{0.73}[0.73]{
\begin{tabular}{l|rr|rr|rr|rr|rr|rr}
\toprule
Datasets &  \multicolumn{2}{c|}{satimage}& \multicolumn{2}{c|}{pendigits}& \multicolumn{2}{c|}{mnist}&  \multicolumn{2}{c|}{letter}& \multicolumn{2}{c|}{sensorless}& \multicolumn{2}{c}{connect-4} \\
Metrics &      MSE &      SD &      MSE &      SD &      MSE &      SD &      MSE &      SD &      MSE &      SD &     MSE &     SD  \\
\hline
PBA2IPW &  \textbf{0.062} &  0.006 &  \textbf{0.209} &  0.196 &  \textbf{0.128} &  0.035 &  \textbf{0.265} &  0.197 &  \textbf{0.114} &  0.018 &  \textbf{0.035} &  0.035 \\
EBA2IPW &  0.065 &  0.008 &  0.232 &  0.045 &  0.244 &  0.041 &  0.399 &  0.056 &  0.208 &  0.038 &  \textbf{0.035} &  0.030 \\
EBA2IPW' &  0.065 &  0.008 &  0.212 &  0.039 &  \textbf{0.131} &  0.022 &  0.386 &  0.059 &  0.201 &  0.034 &  \textbf{0.034} &  0.032 \\
BAdaIPW &  0.161 &  0.138 &  0.277 &  0.278 &  0.180 &  0.105 &  0.284 &  0.221 &  0.135 &  0.028 &  0.046 &  0.046 \\
AdaDM &  0.163 &  0.017 &  0.500 &  0.046 &  0.496 &  0.045 &  0.438 &  0.027 &  0.380 &  0.032 &  0.145 &  0.030 \\
AIPW &  \textbf{0.046} &  0.005 &  \textbf{0.155} &  0.069 &  0.293 &  0.034 &  \textbf{0.214} &  0.118 &  \textbf{0.104} &  0.014 &  0.057 &  0.036 \\
DM &  0.101 &  0.008 &  0.450 &  0.038 &  0.339 &  0.031 &  0.421 &  0.026 &  0.357 &  0.029 &  0.090 &  0.024 \\
\bottomrule
\end{tabular}
} 
\end{center}

\begin{center}
\scalebox{0.73}[0.73]{
\begin{tabular}{l|rr|rr|rr|rr|rr|rr}
\toprule
Datasets &  \multicolumn{2}{c|}{satimage}& \multicolumn{2}{c|}{pendigits}& \multicolumn{2}{c|}{mnist}&  \multicolumn{2}{c|}{letter}& \multicolumn{2}{c|}{sensorless}& \multicolumn{2}{c}{connect-4} \\
Metrics &      MSE &      SD &      MSE &      SD &      MSE &      SD &      MSE &      SD &      MSE &      SD &     MSE &     SD \\
\hline
PBA2IPW &  0.070 &  0.015 &  0.125 &  0.042 &  \textbf{0.160} &  0.039 &  \textbf{0.331} &  0.287 &  0.257 &  0.467 &  0.053 &  0.053 \\
EBA2IPW &  \textbf{0.020} &  0.001 &  \textbf{0.058} &  0.006 &  0.320 &  0.061 &  0.426 &  0.028 &  0.268 &  0.055 &  0.035 &  0.031 \\
EBA2IPW' &  0.050 &  0.005 &  0.146 &  0.033 &  \textbf{0.152} &  0.023 &  0.414 &  0.025 &  \textbf{0.237} &  0.038 &  0.050 &  0.048 \\
BAdaIPW &  0.108 &  0.036 &  0.176 &  0.071 &  0.237 &  0.079 &  0.372 &  0.454 &  0.304 &  0.667 &  0.081 &  0.080 \\
AdaDM &  0.089 &  0.004 &  0.275 &  0.013 &  0.442 &  0.027 &  0.431 &  0.022 &  0.319 &  0.025 &  0.099 &  0.033 \\
AIPW &  0.034 &  0.002 &  \textbf{0.056} &  0.005 &  0.206 &  0.011 &  \textbf{0.275} &  0.196 &  0.246 &  0.447 &  \textbf{0.026} &  0.023 \\
DM &  \textbf{0.012} &  0.000 &  0.070 &  0.002 &  0.231 &  0.012 &  0.375 &  0.024 &  \textbf{0.211} &  0.018 &  \textbf{0.018} &  0.018 \\
\bottomrule
\end{tabular}
} 
\end{center}

\end{table*}

\begin{table*}[t]
\caption{Results of OPE with $2,000$ samples under $10$ batches using nonparametric NW regression. The upper graph shows the results with the RW policy. The lower graph shows the results with the UCB policy.} 
\begin{center}
\medskip
\label{tbl:appdx:exp_table2}
\scalebox{0.73}[0.73]{
\begin{tabular}{l|rr|rr|rr|rr|rr|rr}
\toprule
Datasets &  \multicolumn{2}{c|}{satimage}& \multicolumn{2}{c|}{pendigits}& \multicolumn{2}{c|}{mnist}&  \multicolumn{2}{c|}{letter}& \multicolumn{2}{c|}{sensorless}& \multicolumn{2}{c}{connect-4} \\
Metrics &      MSE &      SD &      MSE &      SD &      MSE &      SD &      MSE &      SD &      MSE &      SD &     MSE &     SD  \\
\hline
PBA2IPW &  \textbf{0.038} &  0.002 &  \textbf{0.147} &  0.066 &  \textbf{0.114} &  0.038 &  \textbf{0.269} &  0.218 &  \textbf{0.106} &  0.026 &  \textbf{0.019} &  0.019 \\
EBA2IPW &  0.050 &  0.006 &  0.173 &  0.023 &  0.154 &  0.019 &  0.358 &  0.058 &  0.172 &  0.021 &  \textbf{0.022} &  0.021 \\
EBA2IPW' &  0.046 &  0.006 &  0.162 &  0.021 &  \textbf{0.105} &  0.029 &  0.346 &  0.059 &  0.167 &  0.021 &  0.025 &  0.025 \\
BAdaIPW &  0.071 &  0.007 &  0.194 &  0.118 &  0.164 &  0.101 &  0.277 &  0.220 &  0.131 &  0.043 &  0.029 &  0.029 \\
AdaDM &  0.144 &  0.012 &  0.488 &  0.034 &  0.385 &  0.030 &  0.496 &  0.022 &  0.444 &  0.026 &  0.138 &  0.022 \\
AIPW &  \textbf{0.032} &  0.002 &  \textbf{0.117} &  0.026 &  0.203 &  0.015 &  \textbf{0.207} &  0.078 &  \textbf{0.101} &  0.022 &  0.054 &  0.014 \\
DM &  0.103 &  0.007 &  0.455 &  0.026 &  0.235 &  0.019 &  0.482 &  0.022 &  0.426 &  0.019 &  0.085 &  0.013 \\
\bottomrule
\end{tabular}
} 
\end{center}

\begin{center}
\scalebox{0.73}[0.73]{
\begin{tabular}{l|rr|rr|rr|rr|rr|rr}
\toprule
Datasets &  \multicolumn{2}{c|}{satimage}& \multicolumn{2}{c|}{pendigits}& \multicolumn{2}{c|}{mnist}&  \multicolumn{2}{c|}{letter}& \multicolumn{2}{c|}{sensorless}& \multicolumn{2}{c}{connect-4} \\
Metrics &      MSE &      SD &      MSE &      SD &      MSE &      SD &      MSE &      SD &      MSE &      SD &     MSE &     SD \\
\hline
PBA2IPW &  0.045 &  0.003 &  0.073 &  0.011 &  0.145 &  0.099 &  \textbf{0.282} &  0.236 &  0.120 &  0.047 &  0.038 &  0.038 \\
EBA2IPW &  \textbf{0.013} &  0.000 &  \textbf{0.030} &  0.003 &  0.143 &  0.016 &  0.435 &  0.051 &  0.192 &  0.042 &  0.024 &  0.019 \\
EBA2IPW' &  0.023 &  0.001 &  0.055 &  0.009 &  \textbf{0.091} &  0.010 &  0.404 &  0.049 &  0.159 &  0.023 &  0.027 &  0.024 \\
BAdaIPW &  0.067 &  0.008 &  0.106 &  0.030 &  0.199 &  0.105 &  0.299 &  0.249 &  \textbf{0.111} &  0.020 &  0.069 &  0.069 \\
AdaDM &  0.071 &  0.002 &  0.206 &  0.006 &  0.322 &  0.013 &  0.458 &  0.020 &  0.334 &  0.019 &  0.070 &  0.018 \\
AIPW &  0.029 &  0.001 &  \textbf{0.038} &  0.003 &  \textbf{0.088} &  0.004 &  \textbf{0.203} &  0.084 &  \textbf{0.112} &  0.049 &  \textbf{0.020} &  0.012 \\
DM &  \textbf{0.007} &  0.000 &  0.039 &  0.001 &  0.109 &  0.005 &  0.356 &  0.021 &  0.216 &  0.013 &  \textbf{0.017} &  0.008 \\
\bottomrule
\end{tabular}
} 
\end{center}

\end{table*}

\begin{table*}[t]
\caption{Results of OPE with $1,500$ samples under $10$ batches using nonparametric k-NN regression. The upper graph shows the results with the RW policy. The lower graph shows the results with the UCB policy.} 
\begin{center}
\medskip
\label{tbl:appdx:exp_table3}
\scalebox{0.73}[0.73]{
\begin{tabular}{l|rr|rr|rr|rr|rr|rr}
\toprule
Datasets &  \multicolumn{2}{c|}{satimage}& \multicolumn{2}{c|}{pendigits}& \multicolumn{2}{c|}{mnist}&  \multicolumn{2}{c|}{letter}& \multicolumn{2}{c|}{sensorless}& \multicolumn{2}{c}{connect-4} \\
Metrics &      MSE &      SD &      MSE &      SD &      MSE &      SD &      MSE &      SD &      MSE &      SD &     MSE &     SD  \\
\hline
PBA2IPW &  0.073 &  0.024 &  \textbf{0.150} &  0.075 &  \textbf{0.165} &  0.074 &  \textbf{0.205} &  0.079 &  \textbf{0.120} &  0.057 &  \textbf{0.028} &  0.028 \\
EBA2IPW &  \textbf{0.056} &  0.007 &  0.151 &  0.021 &  0.219 &  0.041 &  0.354 &  0.049 &  0.155 &  0.020 &  0.030 &  0.027 \\
EBA2IPW' &  0.061 &  0.009 &  0.181 &  0.128 &  0.202 &  0.035 &  0.341 &  0.050 &  0.146 &  0.020 &  \textbf{0.028} &  0.025 \\
BAdaIPW &  0.099 &  0.041 &  0.197 &  0.127 &  0.196 &  0.092 &  0.215 &  0.084 &  0.152 &  0.079 &  0.037 &  0.036 \\
AdaDM &  0.117 &  0.013 &  0.327 &  0.033 &  0.540 &  0.039 &  0.441 &  0.024 &  0.303 &  0.025 &  0.135 &  0.025 \\
AIPW &  \textbf{0.048} &  0.008 &  \textbf{0.108} &  0.035 &  \textbf{0.133} &  0.021 &  \textbf{0.182} &  0.046 &  \textbf{0.106} &  0.030 &  \textbf{0.021} &  0.021 \\
DM &  0.057 &  0.007 &  0.198 &  0.020 &  0.456 &  0.035 &  0.415 &  0.023 &  0.238 &  0.020 &  0.123 &  0.015 \\
\bottomrule
\end{tabular}
} 
\end{center}

\begin{center}
\scalebox{0.73}[0.73]{
\begin{tabular}{l|rr|rr|rr|rr|rr|rr}
\toprule
Datasets &  \multicolumn{2}{c|}{satimage}& \multicolumn{2}{c|}{pendigits}& \multicolumn{2}{c|}{mnist}&  \multicolumn{2}{c|}{letter}& \multicolumn{2}{c|}{sensorless}& \multicolumn{2}{c}{connect-4} \\
Metrics &      MSE &      SD &      MSE &      SD &      MSE &      SD &      MSE &      SD &      MSE &      SD &     MSE &     SD \\
\hline
PBA2IPW &  0.058 &  0.007 &  0.116 &  0.038 &  \textbf{0.149} &  0.057 &  \textbf{0.208} &  0.064 &  0.119 &  0.049 &  0.041 &  0.041 \\
EBA2IPW &  \textbf{0.014} &  0.000 &  \textbf{0.021} &  0.001 &  0.322 &  0.104 &  0.400 &  0.040 &  0.168 &  0.047 &  \textbf{0.030} &  0.022 \\
EBA2IPW' &  0.043 &  0.007 &  0.113 &  0.043 &  0.309 &  0.072 &  0.380 &  0.038 &  0.133 &  0.022 &  0.036 &  0.024 \\
BAdaIPW &  0.082 &  0.011 &  0.140 &  0.048 &  0.197 &  0.055 &  0.255 &  0.161 &  0.164 &  0.109 &  0.058 &  0.058 \\
AdaDM &  0.056 &  0.002 &  0.108 &  0.004 &  0.454 &  0.019 &  0.409 &  0.023 &  0.205 &  0.015 &  0.070 &  0.024 \\
AIPW &  0.038 &  0.004 &  0.049 &  0.011 &  \textbf{0.126} &  0.077 &  \textbf{0.168} &  0.049 &  \textbf{0.065} &  0.005 &  0.038 &  0.037 \\
DM &  \textbf{0.013} &  0.000 &  \textbf{0.048} &  0.001 &  0.264 &  0.009 &  0.301 &  0.023 &  \textbf{0.060} &  0.004 &  \textbf{0.011} &  0.011 \\
\bottomrule
\end{tabular}
} 
\end{center}

\end{table*}

\begin{table*}[t]
\caption{Results of OPE with $1,500$ samples under $5$ batches using nonparametric NW regression. The upper graph shows the results with the RW policy. The lower graph shows the results with the UCB policy.} 
\begin{center}
\medskip
\label{tbl:appdx:exp_table4}
\scalebox{0.73}[0.73]{
\begin{tabular}{l|rr|rr|rr|rr|rr|rr}
\toprule
Datasets &  \multicolumn{2}{c|}{satimage}& \multicolumn{2}{c|}{pendigits}& \multicolumn{2}{c|}{mnist}&  \multicolumn{2}{c|}{letter}& \multicolumn{2}{c|}{sensorless}& \multicolumn{2}{c}{connect-4} \\
Metrics &      MSE &      SD &      MSE &      SD &      MSE &      SD &      MSE &      SD &      MSE &      SD &     MSE &     SD  \\
\hline
PBA2IPW &  \textbf{0.046} &  0.005 &  0.127 &  0.059 &  \textbf{0.138} &  0.064 &  0.243 &  0.184 &  0.141 &  0.083 &  \textbf{0.022} &  0.022 \\
EBA2IPW &  0.049 &  0.006 &  0.123 &  0.018 &  0.140 &  0.023 &  0.241 &  0.055 &  \textbf{0.122} &  0.018 &  \textbf{0.023} &  0.022 \\
EBA2IPW' &  0.048 &  0.005 &  \textbf{0.117} &  0.017 &  \textbf{0.113} &  0.019 &  \textbf{0.233} &  0.055 &  0.136 &  0.047 &  \textbf{0.023} &  0.022 \\
BAdaIPW &  0.081 &  0.011 &  0.171 &  0.088 &  0.154 &  0.066 &  0.244 &  0.182 &  0.161 &  0.090 &  0.034 &  0.034 \\
AdaDM &  0.138 &  0.015 &  0.496 &  0.040 &  0.439 &  0.044 &  0.479 &  0.021 &  0.413 &  0.030 &  0.143 &  0.026 \\
AIPW &  \textbf{0.034} &  0.003 &  \textbf{0.100} &  0.016 &  0.249 &  0.026 &  \textbf{0.172} &  0.038 &  \textbf{0.107} &  0.050 &  0.061 &  0.023 \\
DM &  0.098 &  0.007 &  0.460 &  0.032 &  0.289 &  0.030 &  0.465 &  0.022 &  0.397 &  0.025 &  0.090 &  0.019 \\
\bottomrule
\end{tabular}
} 
\end{center}

\begin{center}
\scalebox{0.73}[0.73]{
\begin{tabular}{l|rr|rr|rr|rr|rr|rr}
\toprule
Datasets &  \multicolumn{2}{c|}{satimage}& \multicolumn{2}{c|}{pendigits}& \multicolumn{2}{c|}{mnist}&  \multicolumn{2}{c|}{letter}& \multicolumn{2}{c|}{sensorless}& \multicolumn{2}{c}{connect-4} \\
Metrics &      MSE &      SD &      MSE &      SD &      MSE &      SD &      MSE &      SD &      MSE &      SD &     MSE &     SD \\
\hline
PBA2IPW &  0.061 &  0.007 &  0.143 &  0.082 &  \textbf{0.133} &  0.045 &  \textbf{0.276} &  0.222 &  0.154 &  0.065 &  0.073 &  0.073 \\
EBA2IPW &  \textbf{0.023} &  0.001 &  \textbf{0.068} &  0.021 &  0.168 &  0.034 &  0.401 &  0.079 &  0.185 &  0.048 &  0.028 &  0.028 \\
EBA2IPW' &  0.047 &  0.011 &  0.080 &  0.022 &  \textbf{0.117} &  0.015 &  0.384 &  0.072 &  0.171 &  0.036 &  0.032 &  0.031 \\
BAdaIPW &  0.081 &  0.018 &  0.198 &  0.129 &  0.199 &  0.082 &  0.290 &  0.258 &  \textbf{0.149} &  0.042 &  0.101 &  0.100 \\
AdaDM &  0.108 &  0.006 &  0.286 &  0.012 &  0.409 &  0.021 &  0.464 &  0.019 &  0.357 &  0.026 &  0.100 &  0.030 \\
AIPW &  0.037 &  0.004 &  \textbf{0.060} &  0.009 &  0.155 &  0.008 &  \textbf{0.229} &  0.140 &  \textbf{0.127} &  0.052 &  \textbf{0.017} &  0.017 \\
DM &  \textbf{0.012} &  0.000 &  0.071 &  0.001 &  0.178 &  0.009 &  0.385 &  0.022 &  0.244 &  0.018 &  \textbf{0.012} &  0.012 \\
\bottomrule
\end{tabular}
} 
\end{center}

\end{table*}

\begin{table*}[t]
\caption{Results of OPE with $1,500$ samples under $20$ batches using nonparametric NW regression. The upper graph shows the results with the RW policy. The lower graph shows the results with the UCB policy.} 
\begin{center}
\medskip
\label{tbl:appdx:exp_table5}
\scalebox{0.73}[0.73]{
\begin{tabular}{l|rr|rr|rr|rr|rr|rr}
\toprule
Datasets &  \multicolumn{2}{c|}{satimage}& \multicolumn{2}{c|}{pendigits}& \multicolumn{2}{c|}{mnist}&  \multicolumn{2}{c|}{letter}& \multicolumn{2}{c|}{sensorless}& \multicolumn{2}{c}{connect-4} \\
Metrics &      MSE &      SD &      MSE &      SD &      MSE &      SD &      MSE &      SD &      MSE &      SD &     MSE &     SD  \\
\hline
PBA2IPW &  \textbf{0.045} &  0.004 &  \textbf{0.139} &  0.065 &  \textbf{0.163} &  0.077 &  \textbf{0.376} &  0.328 &  \textbf{0.162} &  0.076 &  \textbf{0.058} &  0.058 \\
EBA2IPW &  \textbf{0.051} &  0.005 &  0.274 &  0.035 &  0.248 &  0.029 &  0.467 &  0.044 &  0.265 &  0.044 &  \textbf{0.032} &  0.024 \\
EBA2IPW' &  \textbf{0.051} &  0.006 &  0.267 &  0.035 &  \textbf{0.162} &  0.077 &  0.460 &  0.041 &  0.258 &  0.037 &  \textbf{0.032} &  0.031 \\
BAdaIPW &  0.089 &  0.013 &  0.193 &  0.089 &  0.213 &  0.103 &  0.399 &  0.396 &  0.188 &  0.100 &  0.076 &  0.075 \\
AdaDM &  0.144 &  0.012 &  0.492 &  0.033 &  0.434 &  0.037 &  0.476 &  0.022 &  0.412 &  0.027 &  0.139 &  0.023 \\
AIPW &  0.057 &  0.021 &  \textbf{0.109} &  0.024 &  0.234 &  0.018 &  \textbf{0.282} &  0.127 &  \textbf{0.150} &  0.062 &  0.073 &  0.057 \\
DM &  0.101 &  0.003 &  0.447 &  0.024 &  0.265 &  0.022 &  0.457 &  0.021 &  0.389 &  0.021 &  0.083 &  0.015 \\
\bottomrule
\end{tabular}
} 
\end{center}

\begin{center}
\scalebox{0.73}[0.73]{
\begin{tabular}{l|rr|rr|rr|rr|rr|rr}
\toprule
Datasets &  \multicolumn{2}{c|}{satimage}& \multicolumn{2}{c|}{pendigits}& \multicolumn{2}{c|}{mnist}&  \multicolumn{2}{c|}{letter}& \multicolumn{2}{c|}{sensorless}& \multicolumn{2}{c}{connect-4} \\
Metrics &      MSE &      SD &      MSE &      SD &      MSE &      SD &      MSE &      SD &      MSE &      SD &     MSE &     SD \\
\hline
PBA2IPW &  0.052 &  0.006 &  0.089 &  0.019 &  \textbf{0.131} &  0.053 &  0.372 &  1.023 &  \textbf{0.117} &  0.057 &  0.037 &  0.037 \\
EBA2IPW &  \textbf{0.013} &  0.000 &  \textbf{0.021} &  0.001 &  0.223 &  0.015 &  0.446 &  0.037 &  0.194 &  0.025 &  0.030 &  0.017 \\
EBA2IPW' &  0.035 &  0.002 &  0.099 &  0.013 &  0.140 &  0.065 &  0.428 &  0.028 &  0.206 &  0.028 &  0.035 &  0.028 \\
BAdaIPW &  0.071 &  0.011 &  0.125 &  0.033 &  0.194 &  0.091 &  0.387 &  1.065 &  0.138 &  0.071 &  0.055 &  0.055 \\
AdaDM &  0.054 &  0.002 &  0.197 &  0.006 &  0.350 &  0.016 &  0.444 &  0.023 &  0.310 &  0.025 &  0.066 &  0.021 \\
AIPW &  0.034 &  0.003 &  0.044 &  0.004 &  \textbf{0.131} &  0.006 &  \textbf{0.306} &  0.593 &  \textbf{0.090} &  0.022 &  \textbf{0.024} &  0.012 \\
DM &  \textbf{0.011} &  0.000 &  \textbf{0.041} &  0.001 &  0.151 &  0.007 &  \textbf{0.359} &  0.023 &  0.196 &  0.016 &  \textbf{0.020} &  0.010 \\
\bottomrule
\end{tabular}
} 
\end{center}

\end{table*}

\begin{table*}[t]
\begin{center}
\caption{Results of OPL under the UCB policy. We highlight in bold the best two estimators in each dataset.} 
\medskip
\label{tbl:appdx:exp_table6}
\scalebox{0.73}[0.73]{
\begin{tabular}{l|rr|rr|rr|rr|rr|rr}
\toprule
Datasets &  \multicolumn{2}{c|}{satimage}& \multicolumn{2}{c|}{pendigits}& \multicolumn{2}{c|}{mnist}& \multicolumn{2}{c|}{letter}& \multicolumn{2}{c|}{sensorless}& \multicolumn{2}{c}{connect-4} \\
Metrics &      RWD &      SD &      RWD &      SD &      RWD &      SD &      RWD &      SD &      RWD &      SD &     RWD &     SD \\
\hline
PBA2IPW &  0.806 &  0.024 &  \textbf{0.656} &  0.145 &  \textbf{0.743} &  0.028 &  \textbf{0.156} &  0.048 &  \textbf{0.317} &  0.062 &  0.645 &  0.030 \\
EBA2IPW &  0.810 &  0.026 &  0.463 &  0.100 &  0.653 &  0.054 &  0.099 &  0.046 &  \textbf{0.312} &  0.075 &  \textbf{0.668} &  0.030 \\
BAdaIPW&  \textbf{0.811} &  0.021 &  \textbf{0.511} &  0.107 &  \textbf{0.656} &  0.070 &  \textbf{0.126} &  0.038 &  0.275 &  0.071 &  0.665 &  0.024 \\
AdaDM &  0.786 &  0.029 &  0.407 &  0.053 &  0.191 &  0.078 &  0.055 &  0.023 &  0.207 &  0.039 &  0.653 &  0.030 \\
AIPW &  \textbf{0.819} &  0.015 &  0.419 &  0.099 &  0.644 &  0.101 &  0.120 &  0.040 &  0.290 &  0.057 &  \textbf{0.666} &  0.028 \\
DM &  0.798 &  0.026 &  0.387 &  0.035 &  0.212 &  0.080 &  0.089 &  0.036 &  0.220 &  0.045 &  0.653 &  0.030 \\
\bottomrule
\end{tabular}
} 
\end{center}
\vspace{-0.5cm}
\end{table*}

\end{document}